\documentclass[12pt]{article}
\usepackage{amsmath}
\usepackage[utf8]{inputenc}
\usepackage{graphicx}
\usepackage{natbib}
\usepackage{url} 
\usepackage{xr}
\externaldocument{MLSBM_appendix}

\usepackage{amssymb,amsthm}
 \usepackage{setspace}
\usepackage[plain,noend]{algorithm2e}
\usepackage{times}
\usepackage{bm}
\usepackage{caption,multirow,threeparttable,url}
\usepackage{subcaption}
\usepackage{color}
\usepackage{algorithmic}

\newtheorem{definition}{Definition}
\newtheorem{theorem}{Theorem}
\newtheorem{remark}{Remark}
\newtheorem{proposition}{Proposition}

\newtheorem{lemma}{Lemma}

\makeatletter
%
\renewcommand{\algocf@captiontext}[2]{\quad #1\algocf@typo. \AlCapFnt{}#2} 
\def\@algocf@capt@plain{top}
\renewcommand{\algocf@makecaption}[2]{%
  \addtolength{\hsize}{\algomargin}%
  \sbox\@tempboxa{\algocf@captiontext{#1}{#2}}%
  \ifdim\wd\@tempboxa >\hsize
    \hskip .5\algomargin%
    \parbox[t]{\hsize}{\algocf@captiontext{#1}{#2}}
  \else%
    \global\@minipagefalse%
    \hbox to\hsize{\box\@tempboxa}
  \fi%
  \addtolength{\hsize}{-\algomargin}%
}
\makeatother

\newcommand{\blind}{0}

\addtolength{\oddsidemargin}{-.5in}%
\addtolength{\evensidemargin}{-.5in}%
\addtolength{\textwidth}{1in}%
\addtolength{\textheight}{1.3in}%
\addtolength{\topmargin}{-.8in}%
\newcommand{\beginsupplement}{%
        \setcounter{table}{0}
        \renewcommand{\thetable}{S\arabic{table}}%
        \setcounter{figure}{0}
        \renewcommand{\thefigure}{S\arabic{figure}}%
     }

\begin{document}

\def\spacingset#1{\renewcommand{\baselinestretch}%
{#1}\small\normalsize} \spacingset{1.2}


\if0\blind
{
  \title{\bf Spectral clustering via adaptive layer aggregation for multi-layer networks\footnote{Huang and Weng contribute equally to this paper. Feng is the corresponding author (Email: yang.feng@nyu.edu).}}
  \author{Sihan Huang\hspace{.2cm}\\
    Department of Statistics, Columbia University\\
  Haolei Weng \\
    Department of Statistics and Probability, Michigan State University\\
    Yang Feng\\
    Department of Biostatistics, New York University}
  \maketitle
} \fi

\if1\blind
{
  \bigskip
  \bigskip
  \bigskip
  \begin{center}
    {\LARGE\bf Title}
\end{center}
  \medskip
} \fi

\bigskip
\begin{abstract}
One of the fundamental problems in network analysis is detecting community structure
in multi-layer networks, of which each layer represents one type of edge information among the nodes. We propose integrative spectral clustering approaches based on effective convex layer aggregations. Our aggregation methods are strongly motivated by a delicate asymptotic analysis of the spectral embedding of weighted adjacency matrices and the downstream $k$-means clustering, in a challenging regime where community detection consistency is impossible. In fact, the methods are shown to estimate the optimal convex aggregation, which minimizes the mis-clustering error under some specialized multi-layer network models. Our analysis further suggests that clustering using Gaussian mixture models is generally superior to the commonly used $k$-means in spectral clustering. Extensive numerical studies demonstrate that our adaptive aggregation techniques, together with Gaussian mixture model clustering, make the new spectral clustering remarkably competitive compared to several popularly used methods. 
\end{abstract}

\noindent%
{\it Keywords:}  Asymptotic Mis-clustering Error; Community Detection; Convex Aggregation; Eigenvalue Ratio; Gaussian Mixture Distributions; $k$-means; Multi-layer Networks; Spectral Clustering
\vfill

\newpage
\spacingset{1.5} 

\section{Introduction}
\label{sec:intro}
Clustering network data is termed community detection, where communities are understood as groups of nodes that share more similarities with each other than with other nodes. Examples include social groups in a social network and papers on the same research topic in a citation network. Community detection is one of the fundamental problems in network analysis to understand the network structure and functionality \citep{newman2018networks}. With enormous effort from a broad spectrum of disciplines, a large set of methodologies have been proposed and can be roughly classified into algorithmic and model-based ones \citep{zhao2012consistency}. Examples  of algorithmic methods include divisive algorithms using edge betweenness \citep{girvan2002community}, network random walk \citep{zhou2003distance}, spectral method \citep{hagen1992new, shi2000normalized}, modularity optimization \citep{newman2006modularity}, and information-theoretic approaches \citep{rosvall2008maps}. We refer to \cite{fortunato2010community} for a thorough and in-depth review of algorithmic community detection techniques. The class of model-based methods relies on fitting probabilistic models and applying statistical inference tools. Several widely studied models include stochastic block model (SBM) \citep{holland1983stochastic}, degree-corrected stochastic block model (DSBM) \citep{karrer2011stochastic}, mixed membership stochastic block model \citep{airoldi2008mixed}, and latent variable models \citep{handcock2007model, hoff2008modeling}, among others.



In the past decade, there have been increasingly active researches towards understanding the theoretical performance of community detection methods 
under different types of models. The seminal work by \cite{bickel2009nonparametric} introduced an asymptotic framework for the study of community detection consistency, and developed a general theory for checking the consistency properties of a range of methods including modularity and profile likelihood maximization, under SBM. Their consistency framework and results have been generalized to DSBM \citep{zhao2012consistency},  to allow the number of communities to diverge with network size for maximum likelihood estimator under SBM \citep{choi2012stochastic}, and to scalable pseudo-likelihood method \citep{amini2013pseudo}. The consistency and asymptotic normality of maximum likelihood and variational estimators for other model parameters were also established under both SBM and DSBM \citep{celisse2012consistency, bickel2013asymptotic}. Another important line of research focuses on the analysis of spectral clustering. Consistency or mis-clustering error rate of different variants of spectral clustering approaches have been investigated under SBM \citep{rohe2011spectral, lei2015consistency, yu2015useful, le2017concentration}, DSBM \citep{qin2013regularized, jin2015fast},  mixed membership models \citep{jin2017estimating, mao2020estimating, zhang2020detecting}, and SBM with covariates \citep{zhang2016community, weng2022community}. A class of semidefinite optimization approaches with established strong performance guarantees has been developed \citep{cai2015robust, guedon2016community, amini2018semidefinite}. Moreover, there is one major stream of research focused on characterizing the fundamental limits of community detection, under the minimax framework \citep{zhang2016minimax, gao2017achieving, xu2020optimal}, and with respect to sharp information-theoretic and computational thresholds \citep{decelle2011asymptotic, krzakala2013spectral, abbe2015exact, mossel2015reconstruction, abbe2017community}. 


All the aforementioned methods perform community detection based on a single-layer network that only retains one specific type of edge information. However, multi-layer networks with multiple types of edge information are ubiquitous in the real world. For example, employees in a company can have various types of relations such as Facebook friendship and coworkership; genes in a cell can have both physical and coexpression interactions; stocks in the US financial market can have different levels of stock price correlations in different time periods. See \cite{kim2015community} for a comprehensive report of multi-layer network data. Each type of relationship among the nodes forms one layer of the network. Each layer can carry potentially useful information about the underlying communities. Integrating the edge information from all the layers to obtain a more accurate community detection is of great importance. This problem of community detection for multi-layer networks has received significant interest over the last decade. The majority of existing algorithmic methods base on either spectral clustering \citep{long2006spectral, zhou2007spectral, kumar2011co, dong2012clustering} or low-rank matrix factorization \citep{singh2008relational, tang2009clustering, nickel2011three, liu2013multi}, and combine information from different layers via some form of regularization. An alternative category of methods relies on fitting probabilistic generative models. Various extensions of single-layer network models to multi-layer settings have been proposed, including multi-layer stochastic block models \citep{han2015consistent, stanley2016clustering, valles2016multilayer, paul2016consistent}, multi-layer mixed-membership stochastic block model \citep{de2017community}, Poisson Tucker decomposition models \citep{schein2015bayesian, schein2016bayesian}, and Bayesian latent factor models \citep{jenatton2012latent}, among others. However, the theoretical understanding of multi-layer community detection methods has been rather limited. Under multi-layer stochastic block models, \cite{han2015consistent} proved the consistency of maximum likelihood estimation (MLE) as the number of layers goes to infinity and the number of nodes is fixed. \cite{paul2016consistent} established the consistency of MLE under much more general conditions that allow both the numbers of nodes and layers to grow. The authors further obtained the minimax rate over a large parameter space. \cite{paul2020spectral} derived the asymptotic results for several spectral and matrix factorization based methods in the high-dimensional setting where the numbers of layers, nodes, and communities can all diverge. \cite{bhattacharyya2018spectral} proposed a spectral clustering method that can consistently detect communities even if each layer of the network is extremely sparse. They then generalized their method and results to multi-layer degree-corrected block models. 

In this paper, we consider spectral clustering after convex layer aggregation, a two-step framework for multi-layer network community detection. Our contribution is two-fold. Firstly, with a sharp asymptotic characterization of the mis-clustering error, we reveal the impact of a given convex aggregation on the community detection performance. This motivates us to develop two adaptive aggregation methods that can effectively utilize community structure information from different layers. Secondly, our study of the spectral embedding suggests using clustering with Gaussian mixture models as a substitute for the commonly adopted $k$-means in spectral clustering. Together, these two proposed recipes strengthen the two-step procedure to be an efficient community detection approach that outperforms several popular methods, especially for networks with heterogeneous layers. \textcolor{black}{Throughout the paper, our treatment will be mainly focused on networks with assortative community structures in all layers. We discuss the application of our proposed methods to networks of a mixed community structure (with both assortative and dis-assortative structures) in Section \ref{dis:sess}. We refer the reader to \cite{bhattacharyya2020general, paul2020spectral, lei2020consistent, lei2022bias} for recent developments towards effectively combining both assortative and dis-assortative community structures in multi-layer networks. We should also point out that our asymptotic analysis considers the partial recovery regime \citep{abbe2017community} where the node degrees diverge to infinity sufficiently fast while the gap between the within and between community probabilities remains small. Such asymptotics yields precise error characterization that motivates the proposed approaches. We discuss this asymptotic regime in detail in Section \ref{asymptotic:error}.}

\section{Detecting communities in multi-layer networks using new spectral methods}\label{sec:def}

\subsection{Definitions and problem statement}\label{two:step}

We focus on undirected networks throughout the paper. The observed edge information of a single-layer network with $n$ nodes can be represented by the symmetric adjacency matrix  $A=(A_{ij})\in \{0,1\}^{n\times n}$, where $A_{ij}=A_{ji}=1$ if and only if there exists a connection between nodes $i$ and $j$. Suppose the network can be divided into $K$ non-overlapping communities, and let $\overrightarrow{c}=(c_1,\cdots,c_n)^T$ be latent community membership vector corresponding to nodes $1,\cdots, n$, taking values in $[K]:=\{1,\cdots,K\}$. Arguably, the most studied network model for community detection is the stochastic block model (SBM) \citep{holland1983stochastic}.

\begin{definition}[Stochastic Block Model]
        The latent community labels $\{c_i\}_{i=1}^n$ are independently sampled from  a multinomial distribution, i.e., for $i\in[n]$ and $k\in[K]$, ${\rm pr}(c_{i}=k)=\pi_k$ with constraint $\sum_{k=1}^K\pi_k=1$, $0<\pi_k<1$. Define a symmetric connectivity matrix $\Omega=(\Omega_{ab})\in (0,1)^{K\times K}$. Conditioning on $\overrightarrow{c}$, the adjacency matrix $A$ has independent entries with $A_{ij}\sim ${\rm Bernoulli}$(\Omega_{c_ic_j})$ for all $i\leq j$. We denote the model by $A\sim$ \textsc{SBM}$(\Omega,\overrightarrow{\pi})$.  
\end{definition}  

Under SBM, the distribution of the edge between nodes $i$ and $j$ only depends on their community assignments $c_i$ and $c_j$. Nodes from the same community are stochastically equivalent. The parameters $\overrightarrow{\pi}=(\pi_1,\cdots,\pi_K)^T$ control sizes of the $K$ communities. We call the network balanced when $\pi_1=\cdots=\pi_K=1/K$. The symmetric matrix $\Omega$ represents the connectivity probabilities among the communities. A special case of SBM that has been widely studied in theoretical computer science is called the planted partition model \citep{bui1987graph, dyer1989solution}, where the values of the probability matrix $\Omega$ are one constant  on the diagonal and another constant off the diagonal.

\begin{definition}[Planted Partition Model]
	A \emph{planted partition model (PPM)} is a special homogeneous SBM, of which the connectivity matrix is $\Omega=(p-q)I_K+qJ_K    \in(0,1)^{K\times K},$, where $I_K$ is the identity matrix and $J_K$ is the matrix of ones. This means the within-community connectivity probabilities of PPM are all $p$ while the between-community probabilities are all $q$. \textcolor{black}{Given that our focus is on assortative networks}, we assume $p>q$ throughout this paper. The model is written as $A\sim$ \textsc{PPM}$(p,q, \overrightarrow{\pi})$.
\end{definition}

In this paper, we consider a multi-layer network of $L$ layers to be a collection of $L$ single-layer networks that share the same nodes but with different edges. For each $\ell\in [L]$, the adjacency matrix $A^{(\ell)}=(A^{(\ell)}_{ij})\in \{0, 1\}^{n\times n}$ represents edge information from the $\ell^{th}$ layer. The multi-layer stochastic block model (MSBM) \citep{han2015consistent, paul2016consistent, bhattacharyya2018spectral} is a natural extension of the standard SBM to the multi-layer case.

\begin{definition}[Multi-layer Stochastic Block Model]\label{Def:Mul}
The layers of a multi-layer stochastic block model share the common community assignments $\overrightarrow{c}=(c_1,\cdots,c_n)^T \in [K]^n$ which are independently sampled from  a multinomial distribution with parameters $\overrightarrow{\pi}=(\pi_1,\cdots,\pi_K)^T$. Conditioning on $\overrightarrow{c}$, all the adjacency matrices have independent entries with $A^{(\ell)}_{ij}\sim ${\rm Bernoulli}$(\Omega^{(\ell)}_{c_ic_j})$ for all $\ell \in [L]$ and $i\leq j$. We write the model as $A^{[L]}\sim\textsc{MSBM}(\Omega^{[L]},\overrightarrow{\pi})$ for short.
\end{definition}

Under MSBM, each layer follows an SBM with consensus community assignments $\overrightarrow{c}$, but with possibly different connectivity patterns as characterized by the set of parameters $\Omega^{[L]}=\{\Omega^{(\ell)}\}_{\ell=1}^L$. A multi-layer planted partition model is a special type of MSBM when each layer follows a planted partition model.

\begin{definition}[Multi-layer Planted Partition Model]\label{Multi:ppm}
         A multi-layer planted partition model (MPPM) is a special MSBM with $A^{(\ell)}\sim$ \textsc{PPM}$(p^{(\ell)},q^{(\ell)}, \overrightarrow{\pi})$ for each $\ell\in [L]$, i.e., $\Omega^{(\ell)}=(p^{(\ell)}-q^{(\ell)})I_K+q^{(\ell)}J_K    \in(0,1)^{K\times K}, ~\ell \in [L].$ The model is written as $A^{[L]}\sim$ \textsc{MPPM}$(p^{[L]},q^{[L]}, \overrightarrow{\pi})$.
\end{definition}

We consider the following two-step framework for multi-layer community detection:
\begin{itemize}
\item[(1)] \emph{Convex layer aggregation}. Form the weighted adjacency matrix
$A^{\overrightarrow{w}}=\sum_{\ell=1}^Lw_{\ell}A^{(\ell)}$,
for some $\overrightarrow{w} \in \mathcal{W}=\big\{\overrightarrow{w}:\sum_{\ell=1}^Lw_{\ell}=1, w_{\ell}\geq 0, \ell\in [L] \big \}$.

\item[(2)] \emph{Spectral clustering}. Suppose the spectral decomposition of $A^{\overrightarrow{w}}$ is given by 
$A^{\overrightarrow{w}}=\sum_{i=1}^n\lambda_i u_i u^T_i$,
where $\{\lambda_i\}_{i=1}^n$ are the eigenvalues and $\{u_i\}_{i=1}^n$ are the corresponding orthonormal eigenvectors. The eigenvalues are ordered in magnitude so that $|\lambda_1|\geq |\lambda_2|\geq \cdots \geq |\lambda_n|$. Form the eigenvector matrix $U=(u_1,u_2,\cdots, u_K)\in \mathcal{R}^{n\times K}$. Treat each row of $U$ as a data point in $\mathcal{R}^K$ and run the $k$-means clustering on the $n$ data points. The cluster label outputs from the $k$-means are the community membership estimates for the $n$ nodes. 
\end{itemize}

In the first step, the community structure information from different layers is integrated by a simple convex aggregation of the adjacency matrices. The second step runs spectral clustering on the weighted adjacency matrix. A similar two-step spectral method has been considered in \citet{chen2017multilayer}, where the authors presented a phase transition analysis of clustering reliability. With a distinctly different focus, we will provide novel solutions to refine the framework towards a better community detection approach. In the case when $\overrightarrow{w}=(1/L, \cdots, 1/L)$, the two-step procedure is considered as a generally effective baseline method in the literature \citep{tang2009clustering, kumar2011co, dong2012clustering, bhattacharyya2018spectral, paul2020spectral}. However, it is common that different levels of signal-to-noise ratios exist across the layers; hence aggregation with equal weights may not be the optimal choice. Thusly motivated, we propose two approaches that can adaptively choose a favorable weight vector $\overrightarrow{w}\in{\mathcal{W}}$ leading to superior community detection results. The two methods are simple and intuitive while having strong theoretical motivations. We provide analytical calculations to reveal that the two approaches, in fact, are estimating the optimal weight vector that minimizes the mis-clustering error (see formal definition of the error in Section \ref{asymptotic:error}) under balanced multi-layer planted partition models. Moreover, we present convincing arguments to show that, the $k$-means clustering in the spectral clustering step should be replaced by clustering using Gaussian mixture models to effectively capture the shape of spectral embedded data thus yielding improved community detection performances. In a nutshell, the proposed weight selection and change of the clustering method make the two-step framework a highly competitive multi-layer community detection procedure, as will be demonstrated by extensive numerical experiments in Sections \ref{sec:sim} and \ref{sec:real}.

\subsection{Asymptotic mis-clustering error under balanced MPPM} \label{asymptotic:error}

We first provide some asymptotic characterization of the two-step framework introduced in Section \ref{two:step}. The asymptotic analysis paves the way to develop two adaptive layer aggregation methods in Sections \ref{multi:opt_close} and \ref{multi:opt_ratio}. Towards this end, let $\delta: [K]\rightarrow [K]$ denote a permutation of $[K]$, and $\overrightarrow{c}=(c_1,\cdots, c_n)^T\in [K]^n$ be the latent community assignments. The mis-clustering error for a given community estimator $\hat{\overrightarrow{c}}=(\hat{c}_1,\cdots, \hat{c}_n)^T$ is defined as $r(\hat{\overrightarrow{c}})=\inf_{\delta}\sum_{i=1}^n1(\delta(\hat{c}_i)\neq c_i)/n$, which is the proportion of nodes that are mis-clustered, modulo permutations of the community labels. We consider a sequence of balanced multi-layer planted partition models indexed by the network size $n$: $A^{[L]}_n\sim$ \textsc{MPPM}$(p_n^{[L]},q_n^{[L]},\overrightarrow{\pi})$ with both $K$ and $L$ fixed. For a given weight $\overrightarrow{w}\in \mathcal{W}$, we introduce the following quantity that plays a critical role in the error characterization:
\begin{align}
\label{finite:snr}
\tau^{\overrightarrow{w}}_n = \frac{n\big[\sum_{\ell=1}^Lw_{\ell}(p^{(\ell)}_n-q^{(\ell)}_n)\big]^2}{\sum_{\ell=1}^Lw_{\ell}^2\big[p^{(\ell)}_n(1-p^{(\ell)}_n)+(K-1)q^{(\ell)}_n(1-q^{(\ell)}_n)\big]}.
\end{align}

\begin{theorem}\label{thm:error}
Recall the two-step procedure in Section \ref{two:step}. For a given weight $\overrightarrow{w}\in \mathcal{W}$, let $\hat{\overrightarrow{c}}_{\overrightarrow{w}}$ be the corresponding community estimator. Assume the following conditions
\begin{enumerate}
\item[(i)] $\sum_{\ell=1}^Lw_{\ell}^2\big[p^{(\ell)}_n(1-p^{(\ell)}_n)+(K-1)q^{(\ell)}_n(1-q^{(\ell)}_n)\big]=\Omega(n^{-1}\log^4n)$,
\item[(ii)] $[\sum_{\ell=1}^Lw_{\ell}^2p^{(\ell)}_n(1-p^{(\ell)}_n)]\cdot [\sum_{\ell=1}^Lw_{\ell}^2q^{(\ell)}_n(1-q^{(\ell)}_n)]^{-1}\rightarrow 1$,
\item[(iii)] $\tau^{\overrightarrow{w}}_{\infty}\equiv \lim_{n\rightarrow \infty}\tau^{\overrightarrow{w}}_n<\infty$,
\end{enumerate}
where for two sequences $a_n$ and $b_n$, $a_n = \Omega(b_n)$ means that $\lim\sup_{n\to \infty} |a_n/b_n|>0$. 
It holds that for $\tau^{\overrightarrow{w}}_{\infty}\in (K,\infty)$, as $n \rightarrow \infty$,
\begin{align}
\label{error:limit}
E[r(\hat{\overrightarrow{c}}_{\overrightarrow{w}})] \rightarrow 1-{\rm pr}(a_i \geq 0, i=1,2,\cdots, K-1),
\end{align}
where $\overrightarrow{a}=(a_1,\cdots, a_{K-1})^T \sim \mathcal{N}(\overrightarrow{\mu}, \Sigma)$ with $\overrightarrow{\mu}=\sqrt{\tau^{\overrightarrow{w}}_{\infty}-K}\cdot (1,1,\cdots, 1)^T, \Sigma=I_{K-1}+J_{K-1}$. Moreover, the asymptotic error $1-{\rm pr}(a_i \geq 0, i=1,2,\cdots, K-1)$ is a strictly monotonically decreasing function of $\tau^{\overrightarrow{w}}_{\infty}$ over $(K,\infty)$.
\end{theorem}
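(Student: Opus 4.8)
The plan is to reduce the analysis to a population-versus-perturbation decomposition of the weighted adjacency matrix and then extract a precise distributional limit for the spectral embedding of each node. Write $P_n = E[A^{\overrightarrow{w}}\mid \overrightarrow{c}]$ for the conditional mean matrix. Under the balanced MPPM, $P_n$ (off its diagonal) is the adjacency-expectation of a single planted partition model with effective parameters $\bar{p}_n = \sum_{\ell=1}^L w_\ell p^{(\ell)}_n$ and $\bar{q}_n = \sum_{\ell=1}^L w_\ell q^{(\ell)}_n$. First I would record its exact eigenstructure: $P_n$ has one leading eigenvalue of order $n[\bar{p}_n + (K-1)\bar{q}_n]/K$ along the all-ones direction, a $(K-1)$-fold eigenvalue $n(\bar{p}_n - \bar{q}_n)/K$ whose eigenspace is the orthogonal complement of $\overrightarrow{1}$ inside the column space of the membership matrix $Z\in\{0,1\}^{n\times K}$, and zeros otherwise. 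Consequently the rows of the population top-$K$ eigenvector matrix take exactly $K$ distinct values $\nu_1,\dots,\nu_K$ that, by the balance of the model, form the vertices of a regular simplex (equal norms, equal pairwise inner products). This pins down the oracle cluster centroids and their geometry.

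Second, I would use condition (i) to control the noise $E_n = A^{\overrightarrow{w}} - P_n$, whose entries are independent centered weighted-Bernoulli sums with variances $\sum_{\ell} w_\ell^2 p^{(\ell)}_n(1-p^{(\ell)}_n)$ within communities and $\sum_{\ell} w_\ell^2 q^{(\ell)}_n(1-q^{(\ell)}_n)$ between communities. The $n^{-1}\log^4 n$ lower bound forces the effective degrees to diverge, so matrix concentration together with a Davis--Kahan bound shows that the top-$K$ eigenspace of $A^{\overrightarrow{w}}$ is consistent and that the $k$-means output agrees with the oracle nearest-centroid assignment for all but a vanishing fraction of nodes. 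This step also disposes of the $\inf_\delta$ over label permutations, since the correct permutation is identified with probability tending to one.

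The crux, and the step I expect to be the main obstacle, is a distributional (rather than merely rate) analysis of the embedding in the critical regime where $\tau^{\overrightarrow{w}}_n$ stays bounded and no consistency is possible. Here I would run a first-order eigenvector perturbation expansion, decoupled by a leave-one-out / entrywise argument, to show that for a node $i$ in community $k$ the $i$-th row $U_i$ of the sample eigenvector matrix satisfies $U_i = \nu_k + \xi_{i,n} + (\text{negligible})$, where the leading fluctuation $\xi_{i,n}$ is, to first order, a linear functional of the independent noise entries in row $i$. A Lindeberg central limit theorem then yields asymptotic normality of $\xi_{i,n}$; condition (ii), which forces the within- and between-community noise variances to coincide asymptotically, makes the limiting covariance isotropic on the simplex subspace, and tracking the normalization converts the raw signal-to-noise $\tau^{\overrightarrow{w}}_n$ into the shifted mean $\sqrt{\tau^{\overrightarrow{w}}_\infty - K}$, the subtraction of $K$ reflecting the in-subspace part of the perturbation that degrades the effective centroid separation (a BBP-type self-correction). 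The delicate point is uniform control of the expansion remainder across all $n$ nodes and verification of the Lindeberg condition at this borderline scaling.

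With the embedding CLT in hand, the nearest-centroid decision for node $i$ reduces to the $K-1$ linear inequalities comparing its projection onto the centroid differences $\nu_k-\nu_m$, $m\neq k$. The regular-simplex geometry and the isotropic limiting covariance turn the standardized margins into $\overrightarrow{a}\sim\mathcal{N}(\overrightarrow{\mu}, I_{K-1}+J_{K-1})$ with $\overrightarrow{\mu}=\sqrt{\tau^{\overrightarrow{w}}_\infty - K}\,\overrightarrow{1}$, so the per-node mis-clustering probability tends to $1-{\rm pr}(a_i\ge 0,\ i=1,\dots,K-1)$; balance makes this limit identical across communities, and since $r(\hat{\overrightarrow{c}}_{\overrightarrow{w}})\in[0,1]$ bounded convergence gives $E[r]\to 1-{\rm pr}(a_i\ge 0,\forall i)$. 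Finally, for monotonicity I would write $\overrightarrow{a}=\mu\,\overrightarrow{1}+\overrightarrow{z}$ with $\overrightarrow{z}\sim\mathcal{N}(0, I_{K-1}+J_{K-1})$ and $\mu=\sqrt{\tau^{\overrightarrow{w}}_\infty - K}$, so that ${\rm pr}(\overrightarrow{a}\ge 0)={\rm pr}(\overrightarrow{z}\ge -\mu\,\overrightarrow{1})$; the region $\{\overrightarrow{z}\ge -\mu\,\overrightarrow{1}\}$ strictly expands as $\mu$ grows and the Gaussian density is everywhere positive, so the probability is strictly increasing in $\mu$, hence strictly increasing in $\tau^{\overrightarrow{w}}_\infty$ on $(K,\infty)$, and the asymptotic error is strictly decreasing there.
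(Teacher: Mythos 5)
Your overall architecture (population eigenstructure, a CLT for the spectral embedding, reduction to a Gaussian orthant probability, and the change-of-variable argument for monotonicity) tracks the paper's proof, and your third paragraph correctly identifies the hard analytic step. But there are two genuine gaps. First, your second step is false in the regime of the theorem: with $\tau^{\overrightarrow{w}}_{\infty}<\infty$ the informative eigenvalues of $E[A^{\overrightarrow{w}}\mid \overrightarrow{c}]$ are of the \emph{same order} as $\|A^{\overrightarrow{w}}-E[A^{\overrightarrow{w}}\mid\overrightarrow{c}]\|_2$, so Davis--Kahan does not yield eigenspace consistency; the overlap between the sample and population eigenspaces converges to $\sqrt{(\tau^{\overrightarrow{w}}_{\infty}-K)/\tau^{\overrightarrow{w}}_{\infty}}<1$ (this is exactly what the appendix establishes for $\Sigma_e$), and the $k$-means output disagrees with the oracle assignment on a \emph{non-vanishing} fraction of nodes --- that is precisely why the limiting error is a positive constant. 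Consequently your disposal of the $\inf_{\delta}$ over permutations via this claimed consistency collapses; the paper instead shows directly that the identity permutation asymptotically minimizes $r^{\delta}$ with probability tending to one and that this suffices at the level of expectations.

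Second, and more consequentially, you evaluate the ``nearest-centroid decision'' using the population centroids $\nu_k$, but $k$-means does not use them: as the paper stresses (Remark 7 and Lemma 1 of the appendix), the $k$-means centers are \emph{inconsistent} for the population centers of the limiting Gaussian mixture and converge in probability only to a commonly shrunk configuration $t\tilde{\nu}_k$ with $t\neq 1$. Establishing even this requires a nontrivial adaptation of Pollard's $k$-means consistency to a triangular array of exchangeable, merely pairwise asymptotically independent embedded points (a uniform law of large numbers for the $k$-means objective plus a computation showing that $\{t\tilde{\nu}_k\}$ is a stationary point of the population objective). The stated error formula then emerges only because, by the regular-simplex symmetry of the means and the isotropy of the limiting covariance, the decision boundaries induced by $\{t\tilde{\nu}_k\}$ coincide with the Bayes-optimal ones. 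Your proposal contains neither the convergence of the empirical centers nor this symmetry argument, so the passage from the embedding CLT to the orthant probability is unjustified as written. The concluding monotonicity argument is correct and is essentially the paper's.
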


\begin{remark}
\label{rem:one}
In light of the asymptotic error characterization in Theorem \ref{thm:error}, $\tau^{\overrightarrow{w}}_{\infty}$ can be interpreted as the signal-to-noise ratio (SNR) for the two-step procedure with a given weight $\overrightarrow{w}$. The assumption $\tau^{\overrightarrow{w}}_{\infty}>K$ is critical. As will be clear from Proposition \ref{thm:main2}, if the SNR is below the threshold $K$, the informative eigenvalues and eigenvectors of the weighted adjacency matrix $A^{\overrightarrow{w}}$ cannot be separated from the noisy ones. As a result, the spectral clustering in the second step will completely fail. Some supporting simulations are shown in Figure \ref{fig:err}. As the left plot demonstrates, the two-step method performs like random guess when $\tau_n^{\overrightarrow{w}}$ is smaller than the cut-off point $K=2$.
\end{remark}

\begin{remark}
\label{asym:one}
A general scenario where the conditions in Theorem \ref{thm:error} will hold for all $\overrightarrow{w}\in \mathcal{W}$ is $p^{(\ell)}_n=\Omega(n^{-1}\log^4 n), n(p^{(\ell)}_n-q^{(\ell)}_n)^2(p^{(\ell)}_n)^{-1}=\Theta(1)$. This implies that $p^{(\ell)}_n-q^{(\ell)}_n \ll q_n^{(\ell)}\propto p_n^{(\ell)}$, i.e., the gap between within-community and between-community connectivity probabilities is of smaller order compared to the connectivity probabilities themselves. This is in marked contrast to the equal order assumption that is typically made in the statistical community detection literature \citep{zhao2017survey, abbe2017community}. \textcolor{black}{The minimax rate result in \cite{zhang2016minimax} shows that for finite number of communities, the sufficient and necessary condition for consistent single-layer community detection is $n(p^{(\ell)}_n-q^{(\ell)}_n)^2(p^{(\ell)}_n)^{-1}\rightarrow \infty$}. As a result, under our conditions the mis-clustering error will not vanish asymptotically even though the sequence of networks are sufficiently dense $p_n^{(\ell)}=\Omega(n^{-1}\log^4 n)$. We believe this asymptotic set-up where community detection consistency is unattainable, is a more appropriate analytical platform to understand real large-scale networks which are dense but not necessarily have strong community structure signals. Furthermore, it enables us to obtain the \emph{asymptotically} exact error formula that reveals the precise impact of the weight $\overrightarrow{w}$ on community detection. 
\end{remark}

\begin{remark}
\label{asym:two}
Because of the distinct asymptotic regime as explained in the last paragraph, existing works on the asymptotic properties of eigenvectors of random matrices \citep{tang2018limit, cape2019two, fan2019asymptotic, abbe2020entrywise} cannot be directly applied or adapted to the current setting. Our asymptotic analysis is motivated by the study of eigenvectors of large Wigner matrices in \citet{bai2012limiting}. A similar asymptotic setting to ours was adopted in \citet{deshpande2017asymptotic, deshpande2018contextual}. However, notably different from our study, these two works focused on characterizing the information-theoretical limit of community detection for single-layer networks via message passing or belief propagation algorithms. We should also point out that community detection inconsistency can also arise for sparse networks where the degree of nodes remains bounded (see \citet{decelle2011asymptotic, krzakala2013spectral, abbe2017community} and references therein). Nevertheless, to our best knowledge, the \emph{asymptotically} exact error characterization in this regime is largely unknown. 
\end{remark}

\begin{figure}[htb!]
\centering
\begin{tabular}{cc}
\hspace{-0.cm} \includegraphics[scale=0.5]{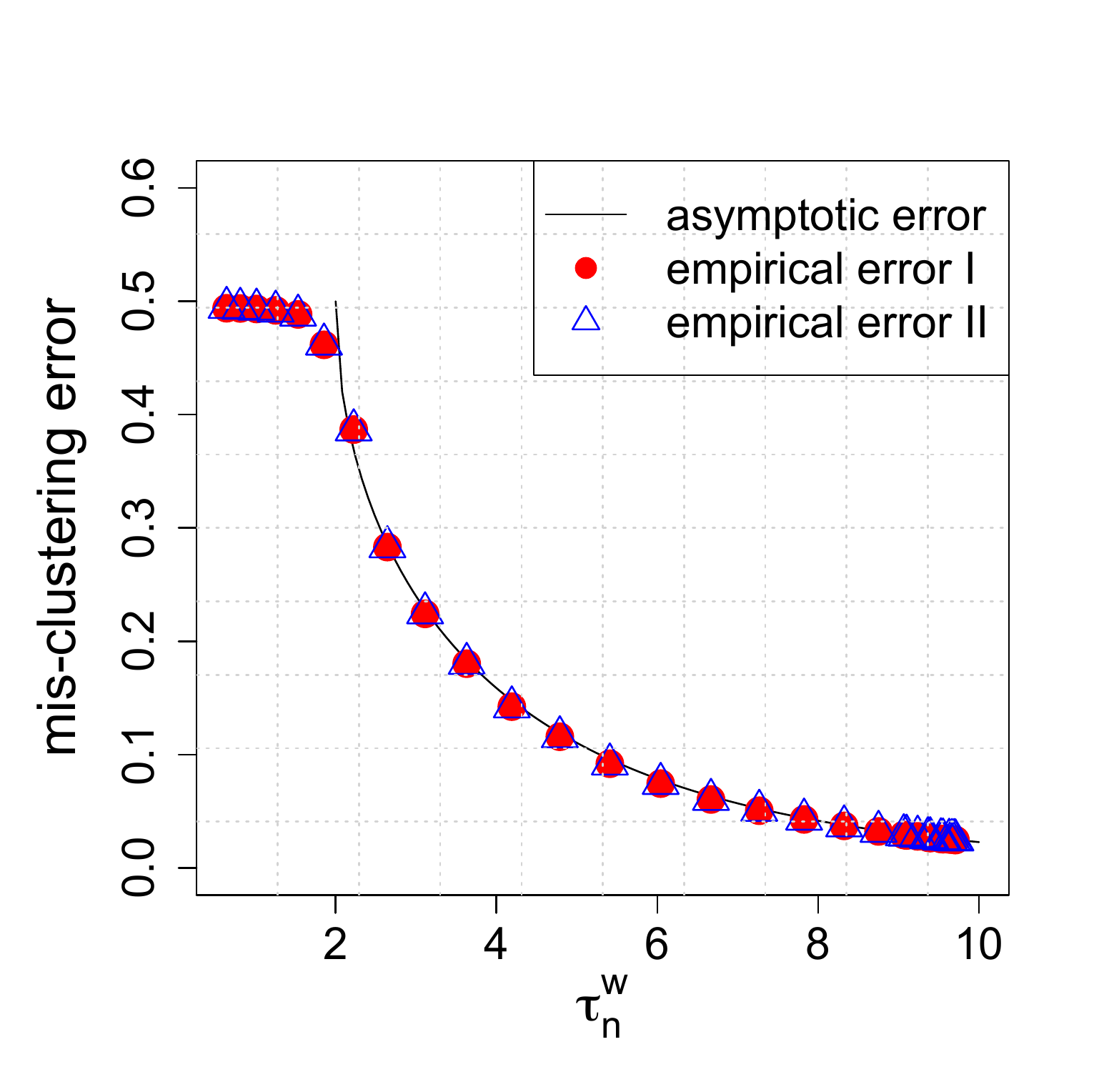} &
 \includegraphics[scale=0.53]{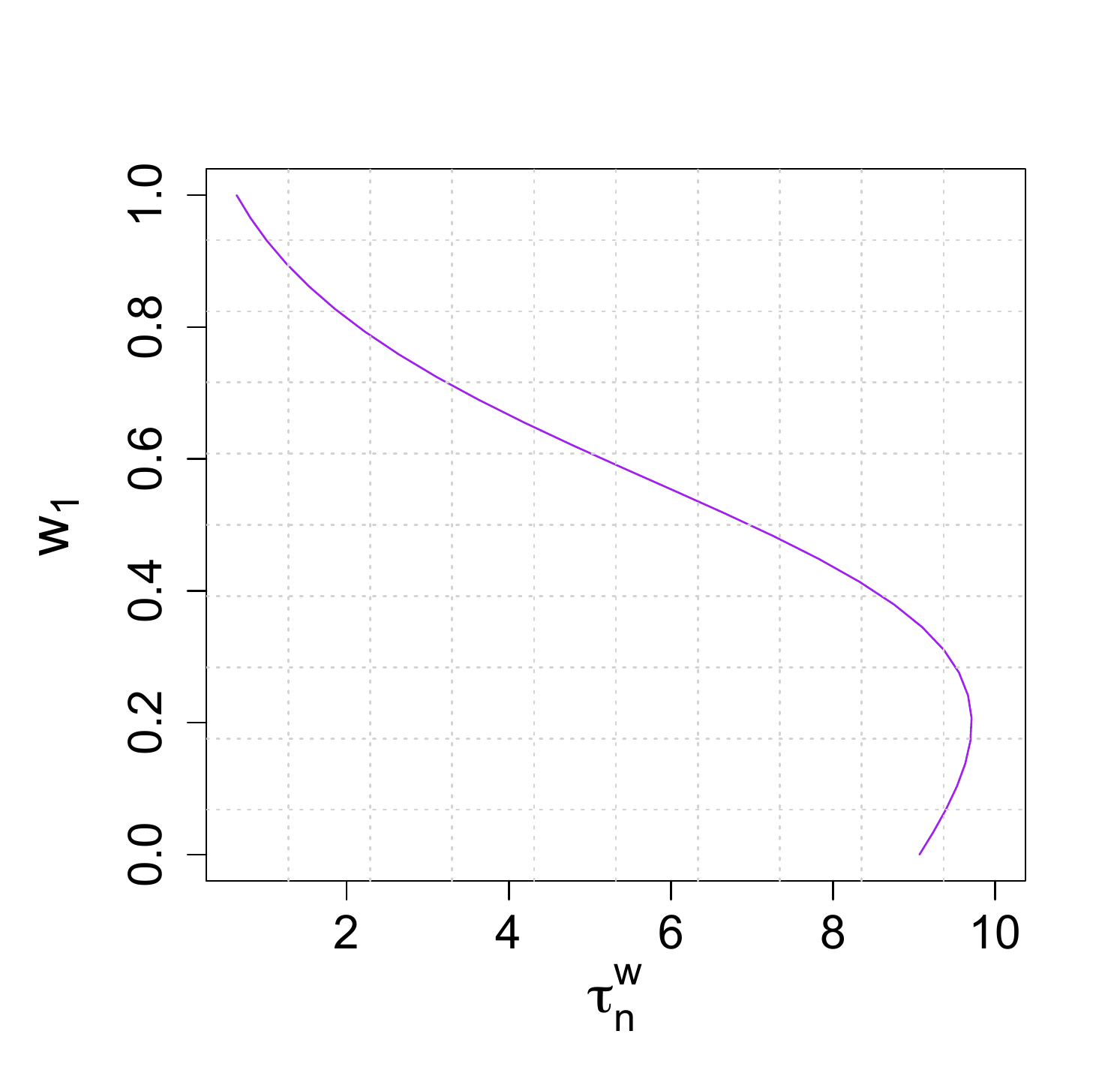} 
\end{tabular}
\vspace{-0.5cm}
\caption{Consider a balanced multi-layer planted partition model with $K=2, L=2, n=6000, p^{(1)}=0.02, q^{(1)}=0.018, p^{(2)}=0.02, q^{(2)}=0.013$. Left panel: ``asymptotic error" is calculated according to the limit formula \eqref{error:limit} in Theorem \ref{thm:error}; ``empirical error I" denotes the finite-sample error of the two-step procedure described in Section \ref{two:step}; ``empirical error II" represents the finite-sample error of the modified two-step procedure with $k$-means replaced by clustering using Gaussian mixture models; \textcolor{black}{both empirical errors are calculated for the procedures without making use of the oracle information of parameters}. Finite-sample errors are the averages over 5 repetitions. Right panel: the y-axis refers to the first component of the weight vector.}
\label{fig:err}
\end{figure}

\begin{remark}
We conduct a small simulation to evaluate the results of Theorem \ref{thm:error}. As is clear from the left panel of Figure \ref{fig:err}, the asymptotic error is a rather accurate prediction of the finite-sample error over a wide range of SNR when the network size is large. Moreover, \textcolor{black}{using the parameter values specified in the caption of Figure \ref{fig:err}}, it is straightforward to compute $\tau_n^{(1,0)^T}=0.64, \tau_n^{(0,1)^T}=9.07$. Hence, the second layer is much more informative than the first one. The right panel reveals that the optimal weight is located around $\overrightarrow{w}=(0.2, 0.8)^T$. It is interesting to observe that although the first layer alone acts like random noise to spectral clustering (since $\tau_n^{(1,0)^T}<K=2$; see also the left panel), appropriately combined with the second layer it provides useful community structure information that contributes to a decent performance boost compared to the result solely based on the second layer. We, therefore, see the importance of weight tuning in the two-step procedure. 
\end{remark}

\subsection{Iterative spectral clustering}\label{multi:opt_close}

We are in the position to introduce the first adaptive layer aggregation method. Theorem \ref{thm:error} reveals that the asymptotic mis-clustering error depends on $\tau^{\overrightarrow{w}}_{\infty}$ in a strictly monotonically decreasing fashion. Hence the optimal $\overrightarrow{w}$ that will minimize the asymptotic error can be found by solving $\max_{\overrightarrow{w} \in \mathcal{W}} \tau^{\overrightarrow{w}}_{\infty}$. Naturally, in the realistic finite-sample scenario where a multi-layer network with $n$ nodes is given, we would like to use the weight vector that maximize $ \tau^{\overrightarrow{w}}_n$.

\begin{proposition}\label{prop:one}
Recall $\tau^{\overrightarrow{w}}_n$ defined in \eqref{finite:snr}. Denote $\overrightarrow{w}^*=(w^*_1,\cdots, w_L^*)^T=\arg\max_{\overrightarrow{w} \in \mathcal{W}} \tau^{\overrightarrow{w}}_n$. Then $\overrightarrow{w}^*$ exists and is unique, admitting the explicit expression:
	\begin{align}\label{eq:1}
		w_l^*&\propto\frac{p^{(l)}_n-q^{(l)}_n}{p^{(l)}_n(1-p^{(l)}_n)+(K-1)q^{(l)}_n(1-q^{(l)}_n)}, \quad \ {\rm for}\ l\in[L],
	\end{align}
where a normalization constant is taken to ensure $\overrightarrow{w}^* \in \mathcal{W}$.
\end{proposition}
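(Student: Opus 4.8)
The plan is to recognize $\tau_n^{\overrightarrow{w}}$ as a generalized Rayleigh quotient and reduce the constrained maximization over $\mathcal{W}$ to a single application of the Cauchy--Schwarz inequality. To lighten notation I would write, for each layer $\ell$, $\beta_\ell = p_n^{(\ell)} - q_n^{(\ell)}$ and $\sigma_\ell^2 = p_n^{(\ell)}(1-p_n^{(\ell)}) + (K-1)q_n^{(\ell)}(1-q_n^{(\ell)})$, so that
\begin{equation*}
\tau_n^{\overrightarrow{w}} = n\,\frac{\bigl(\sum_{\ell=1}^L w_\ell \beta_\ell\bigr)^2}{\sum_{\ell=1}^L w_\ell^2 \sigma_\ell^2}.
\end{equation*}
The assortativity assumption $p_n^{(\ell)} > q_n^{(\ell)}$ guarantees $\beta_\ell > 0$, while $\sigma_\ell^2 > 0$ holds automatically. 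Two structural facts drive everything. First, the ratio is invariant under positive rescaling of $\overrightarrow{w}$, so the constraint $\sum_\ell w_\ell = 1$ merely fixes a scale and the maximization is really over directions. Second, on the compact simplex $\mathcal{W}$ the denominator is bounded below by $(\min_\ell \sigma_\ell^2)\sum_\ell w_\ell^2 > 0$, so $\tau_n^{\overrightarrow{w}}$ is continuous on $\mathcal{W}$ and a maximizer exists.

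For the explicit form I would substitute $x_\ell = w_\ell \sigma_\ell$ and $y_\ell = \beta_\ell/\sigma_\ell$, under which $\sum_\ell w_\ell \beta_\ell = \langle x, y\rangle$ and $\sum_\ell w_\ell^2 \sigma_\ell^2 = \lVert x\rVert_2^2$. The objective becomes $n\langle x,y\rangle^2/\lVert x\rVert_2^2$, and Cauchy--Schwarz gives $\langle x,y\rangle^2 \le \lVert x\rVert_2^2\,\lVert y\rVert_2^2$, hence $\tau_n^{\overrightarrow{w}} \le n\sum_\ell (\beta_\ell/\sigma_\ell)^2$, with equality precisely when $x$ is parallel to $y$, i.e. $w_\ell \sigma_\ell = c\,\beta_\ell/\sigma_\ell$ for some scalar $c$. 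Unwinding the substitution yields $w_\ell \propto \beta_\ell/\sigma_\ell^2$, which is exactly \eqref{eq:1}.

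Finally I would settle feasibility and uniqueness together. Because $\beta_\ell, \sigma_\ell^2 > 0$, the candidate direction $\beta_\ell/\sigma_\ell^2$ is strictly positive in every coordinate, so after normalizing by $\sum_\ell \beta_\ell/\sigma_\ell^2$ it lands in the interior of $\mathcal{W}$; the nonnegativity constraints are thus satisfied and none of them bind. The equality case of Cauchy--Schwarz pins the maximizing direction down to the ray $\{c\,y : c \neq 0\}$, and the constraint $w_\ell \ge 0$ (with $y > 0$) forces $c > 0$, after which $\sum_\ell w_\ell = 1$ determines $c$ uniquely; this yields uniqueness of $\overrightarrow{w}^*$. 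The one place that demands care --- and the main obstacle if the model were not assortative --- is exactly this feasibility check: Cauchy--Schwarz maximizes over all of $\mathbb{R}^L$, and its optimum is a legitimate answer to our problem only because it happens to lie inside the simplex. Were some $\beta_\ell \le 0$, the unconstrained optimum could be infeasible and one would have to fall back on a KKT analysis with active nonnegativity constraints; the assortative assumption $p_n^{(\ell)} > q_n^{(\ell)}$ is precisely what lets us bypass this and read off the closed form directly.
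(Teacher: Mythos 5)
Your proof is correct and follows essentially the same route as the paper's: both rewrite the numerator as $\bigl(\sum_\ell w_\ell\sqrt{c_\ell}\cdot b_\ell/\sqrt{c_\ell}\bigr)^2$ and apply Cauchy--Schwarz, with equality pinning down $w_\ell\propto b_\ell/c_\ell$. Your additional remarks on existence, feasibility inside the simplex, and the role of assortativity are sound elaborations of what the paper leaves implicit, but they do not change the argument.
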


Formula \eqref{eq:1} has a simple and intuitive interpretation. Each $w^*_{\ell}$ is determined by the layer's own parameters. It measures the standardized difference between the within and between community connectivity probability, where the standardization (ignoring the smaller order terms $(p_n^{(\ell)})^2, (q_n^{(\ell)})^2$ in the denominator) is essentially the averaged connectivity probability over all pairs of nodes. A larger standardized difference implies a stronger community structure signal, deserving a larger weight on the corresponding layer. 

The weight vector $\overrightarrow{w}^*$ cannot be directly used in the two-step procedure as it depends on unknown parameters $\{(p_n^{(\ell)},q_n^{(\ell)})\}_{\ell=1}^L$. To address this issue, we propose an iterative spectral clustering (ISC) method. For each $\ell \in [L]$, we first run spectral clustering on the layer's own adjacency matrix $A^{(l)}$ to obtain an initial community estimator $\hat{\overrightarrow{c}}^{(\ell)}$ and compute the weight estimates:
\begin{align}
\label{weight:update}
\hat{w}^*_{\ell} \propto  \frac{\hat{p}^{(\ell)}_n-\hat{q}^{(\ell)}_n}{\hat{p}^{(\ell)}_n(1-\hat{p}^{(\ell)}_n)+(K-1)\hat{q}^{(\ell)}_n(1-\hat{q}^{(\ell)}_n)},
\end{align}
where 
\begin{align}
\label{pq:estimate}
\hat{p}^{(\ell)}_n=\frac{\sum_{ij}A^{(\ell)}_{ij}1(\hat{c}^{(\ell)}_i=\hat{c}^{(\ell)}_j)}{\sum_{ij}1(\hat{c}^{(\ell)}_i=\hat{c}^{(\ell)}_j)}, \quad \hat{q}^{(\ell)}_n=\frac{\sum_{ij}A^{(\ell)}_{ij}1(\hat{c}^{(\ell)}_i\neq \hat{c}^{(\ell)}_j)}{\sum_{ij}1(\hat{c}^{(\ell)}_i \neq \hat{c}^{(\ell)}_j)}.
\end{align}
We then run spectral clustering on the weighted adjacency matrix $A^{\hat{\overrightarrow{w}}^*}=\sum_{l=1}^L\hat{w}^*_lA^{(l)}$ to obtain a refined community estimate $\hat{\overrightarrow{c}}$. Such refinement can be repeatedly applied until convergence. We summarize the outlined method as Algorithm \ref{alg:formula}.

\begin{algorithm}[htb] 
\caption{Iterative spectral clustering [ISC]} \label{alg:formula}
\begin{algorithmic}[1] 
\REQUIRE 
$L$ layers of adjacency matrices $[A^{(l)}, l=1,\cdots,L]$, the number of communities  $K$, and the precision parameter $\epsilon_0$.
\ENSURE 
$\hat{\overrightarrow{w}}_{\rm new}$ and the community estimate $\hat{\overrightarrow{c}}$ by apply spectral clustering on $A^{\hat{\overrightarrow{w}}_{\rm  new}}$.
\STATE Initialization:  Apply spectral clustering on every single $A^{(l)}$, and compute the initial weight estimates
    according to \eqref{weight:update} and \eqref{pq:estimate}. Denote it by $\hat{\overrightarrow{w}}_{\rm old}$.  Set $\epsilon = \epsilon_0+1$.
 \WHILE {$\epsilon > \epsilon_0$}
 \STATE Apply spectral clustering on $A^{\hat{\overrightarrow{w}}_{\rm old}}$ and compute updated weights $\hat{\overrightarrow{w}}_{\rm new}$ as in \eqref{weight:update} and \eqref{pq:estimate}.
 \STATE Assign $\epsilon \leftarrow \|\hat{\overrightarrow{w}}_{\rm old}-\hat{\overrightarrow{w}}_{\rm new}\|$ and $\hat{\overrightarrow{w}}_{\rm old}\leftarrow \hat{\overrightarrow{w}}_{\rm new}$. 
 \ENDWHILE
\end{algorithmic}
\end{algorithm}

\begin{remark}
Algorithm \ref{alg:formula} is motivated by the asymptotic analysis of the mis-clustering error under balanced multi-layer planted partition models. However, as will be demonstrated by extensive numerical experiments in Sections \ref{sec:sim} and \ref{sec:real}, it works well for a much larger family of multi-layer stochastic block models. \textcolor{black}{An intuitive explanation is that for general stochastic block models, \eqref{pq:estimate} is estimating the averaged within and between community probabilities; and the weight formula \eqref{weight:update} represents a certain normalized gap between the aforementioned two probabilities which can be considered as a measure of the community signal strength, thus providing useful aggregation information. That being said, the weight formula \eqref{weight:update} is not necessarily estimating the optimal weights for general multi-layer stochastic block models. Deriving the optimal weight formulas in such a general setting is an interesting and important future research. On a related note, since our focus is on the partial recovery regime (see Section \ref{asymptotic:error} for detailed discussions), it would be interesting to investigate how well the optimal weight is estimated under MPPM cases. To our best knowledge, in the partial recovery regime when the node degrees diverge with $n$, the fundamental limits for parameter estimation have not been established in the literature.} We defer a thorough evaluation and discussion of Algorithm \ref{alg:formula} to Sections \ref{sec:sim} and \ref{sec:real}.
\end{remark}

\subsection{Spectral clustering with maximal eigenratio}\label{multi:opt_ratio}

We now present the second method to select the weight $\overrightarrow{w}$. Let $\lambda^{\overrightarrow{w}}_i$ be the $i^{th}$ largest (in magnitude) eigenvalue of the weighted adjacency matrix $A^{\overrightarrow{w}}$. The spectral clustering in the two-step procedure is implemented using the eigenvectors corresponding to the first $K$  eigenvalues $\{\lambda^{\overrightarrow{w}}_i, i=1,\cdots,K\}$. We will show that in addition to these eigenvectors, the eigenvalues of $A^{\overrightarrow{w}}$ can be utilized for community detection. In particular, the eigenvalue ratio $\lambda^{\overrightarrow{w}}_{K}/\lambda^{\overrightarrow{w}}_{K+1}$ holds critical information about how $\overrightarrow{w}$ affects mis-clustering error, as shown in following proposition. 

\begin{proposition}\label{thm:main2}
Under the same conditions of Theorem \ref{thm:error}, it holds that 
\begin{align*}
\frac{|\lambda_K^{{\overrightarrow{w}}}|}{|\lambda_{K+1}^{{\overrightarrow{w}}}|} \overset{a.s.}{\rightarrow}
\begin{cases}
\frac{1}{2}\Big(\sqrt{\frac{\tau_{\infty}^{\overrightarrow{w}}}{K}}+\sqrt{\frac{K}{\tau_{\infty}^{\overrightarrow{w}}}} \Big), & \text{if}\ \tau_\infty^{{\overrightarrow{w}}}>K,\\
1, & \text{if}\ \tau_\infty^{{\overrightarrow{w}}}\leq K.
\end{cases}
\end{align*}
Here, we have suppressed the dependence of $\lambda^{\overrightarrow{w}}_{K}$ and $\lambda^{\overrightarrow{w}}_{K+1}$ on $n$ to simplify the notation. 
\end{proposition}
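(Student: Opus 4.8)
The plan is to write $A^{\overrightarrow{w}}$ as a low-rank signal plus a near-Wigner noise matrix and then invoke the spiked-model (BBP) phase-transition machinery underlying the eigenvector analysis of \citet{bai2012limiting}. Conditioning on the labels $\overrightarrow{c}$, set $\bar{p}_n=\sum_{\ell=1}^L w_\ell p_n^{(\ell)}$ and $\bar{q}_n=\sum_{\ell=1}^L w_\ell q_n^{(\ell)}$, and decompose $A^{\overrightarrow{w}}=M+W$, where $M$ is the conditional-mean matrix $E[A^{\overrightarrow{w}}\mid\overrightarrow{c}]$ (with the negligible zero-diagonal correction, of operator norm $O(\bar{p}_n)=o(\sqrt{n}\,\sigma_n)$, absorbed into $W$) and $W=A^{\overrightarrow{w}}-E[A^{\overrightarrow{w}}\mid\overrightarrow{c}]$. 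Writing $Z\in\{0,1\}^{n\times K}$ for the membership matrix, one has $M=ZBZ^\top$ with $B=(\bar{p}_n-\bar{q}_n)I_K+\bar{q}_n J_K$, so the nonzero eigenvalues of $M$ are those of $B^{1/2}(Z^\top Z)B^{1/2}$. Under the balanced model the strong law for the multinomial labels gives $Z^\top Z=(n/K)I_K(1+o(1))$ almost surely; hence $M$ has one ``connectivity'' eigenvalue near $\tfrac{n}{K}[\bar{p}_n+(K-1)\bar{q}_n]$ and a community-discriminating eigenvalue $s_n:=\tfrac{n}{K}(\bar{p}_n-\bar{q}_n)$ of multiplicity $K-1$, with all remaining eigenvalues zero. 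Thus $\lambda_1^{\overrightarrow{w}}$ tracks the connectivity eigenvalue while $\lambda_2^{\overrightarrow{w}},\dots,\lambda_K^{\overrightarrow{w}}$ are perturbations of the common spike $s_n$.

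Next I would control the noise $W$. Its entries are centered, independent above the diagonal, and bounded, with per-entry variance $\sum_\ell w_\ell^2 p_n^{(\ell)}(1-p_n^{(\ell)})$ within communities and $\sum_\ell w_\ell^2 q_n^{(\ell)}(1-q_n^{(\ell)})$ between communities. Condition (ii) forces these two values to coincide asymptotically, so $W$ behaves as a Wigner matrix with homogeneous variance $\sigma_n^2:=\tfrac{1}{K}\sum_\ell w_\ell^2[p_n^{(\ell)}(1-p_n^{(\ell)})+(K-1)q_n^{(\ell)}(1-q_n^{(\ell)})]$, while condition (i) guarantees the variance is large enough (degrees diverge fast enough) for the rescaled entries to obey a Lindeberg condition. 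Consequently the empirical spectral distribution of $W/(\sqrt{n}\,\sigma_n)$ converges to the semicircle law and, by Bai--Yin type edge results, the extreme eigenvalues satisfy $\lambda_{\max}(W),\,|\lambda_{\min}(W)|\to 2\sqrt{n}\,\sigma_n$ almost surely. Since the $K$ outliers generated by $M$ all sit above the bulk, the first non-outlier eigenvalue obeys $|\lambda_{K+1}^{\overrightarrow{w}}|/(2\sqrt{n}\,\sigma_n)\to 1$.

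I would then apply the additive BBP transition to the rank-$(K-1)$ spike $s_n$ sitting on top of a semicircle bulk of half-width $2\sqrt{n}\,\sigma_n$. The relevant threshold is $s_n>\sqrt{n}\,\sigma_n$, and a direct computation converts it into the stated one: from $s_n=\tfrac{n}{K}(\bar{p}_n-\bar{q}_n)$ and the definition of $\tau_n^{\overrightarrow{w}}$ in \eqref{finite:snr} one obtains the identity $s_n/(\sqrt{n}\,\sigma_n)=\sqrt{\tau_n^{\overrightarrow{w}}/K}$, so $s_n>\sqrt{n}\,\sigma_n\Leftrightarrow\tau_n^{\overrightarrow{w}}>K$. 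Above the threshold the BBP formula places the (coincident) outliers at $s_n+n\sigma_n^2/s_n$, whence
\begin{align*}
\frac{|\lambda_K^{\overrightarrow{w}}|}{|\lambda_{K+1}^{\overrightarrow{w}}|}\to\frac{s_n+n\sigma_n^2/s_n}{2\sqrt{n}\,\sigma_n}=\frac{1}{2}\left(\frac{s_n}{\sqrt{n}\,\sigma_n}+\frac{\sqrt{n}\,\sigma_n}{s_n}\right)\to\frac{1}{2}\left(\sqrt{\tfrac{\tau_\infty^{\overrightarrow{w}}}{K}}+\sqrt{\tfrac{K}{\tau_\infty^{\overrightarrow{w}}}}\right),
\end{align*}
matching the first branch. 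When $\tau_\infty^{\overrightarrow{w}}\le K$ the spike is below threshold, so $\lambda_K^{\overrightarrow{w}}$ is absorbed into the bulk edge alongside $\lambda_{K+1}^{\overrightarrow{w}}$, both tending to $2\sqrt{n}\,\sigma_n$, and the ratio tends to $1$.

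The main obstacle is making this spiked-matrix step rigorous in the sparse, inhomogeneous regime rather than for a textbook Wigner matrix. Three points need care: (a) the signal matrix $M$ is itself random through $Z^\top Z$, so the spike location $s_n$ must be shown to concentrate almost surely, which is handled by the multinomial law of large numbers and balancedness; (b) $W$ has a nonconstant variance profile and Bernoulli (hence non-Gaussian, sparse) entries with $p_n^{(\ell)}\to 0$, so the semicircle edge and the BBP outlier location require a universality/Lindeberg argument valid for diverging but possibly slowly growing degrees, which is exactly where conditions (i) and (ii) are consumed; and (c) the spike has multiplicity $K-1$ with asymptotically equal eigenvalues, so the perturbation argument must track a cluster of outliers and confirm they all converge to the same limit without interlacing artifacts. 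Establishing the almost-sure edge and outlier locations for this generalized Wigner ensemble, following \citet{bai2012limiting}, is the technical crux; once those are in place the ratio computation above is immediate.
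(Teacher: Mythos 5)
Your proposal is correct and takes essentially the same route as the paper's proof: the paper likewise decomposes $A/d_n = W + W^*$ with $d_n=\sqrt{n}\,\sigma_n$, establishes the semicircle bulk and the almost-sure bound $\|W\|_2\leq 2+o(1)$ (via Vu's spectral norm bound and Avrachenkov et al.), verifies the Haar-type eigenvector property needed for the spike directions via Bai and Yao (2012), and then applies the Benaych-Georges--Nadakuditi additive phase transition to place $\lambda_K(A/d_n)$ at $G^{-1}(1/\lambda_K(W^*))=\sqrt{\tau^{\overrightarrow{w}}_{\infty}/K}+\sqrt{K/\tau^{\overrightarrow{w}}_{\infty}}$ when $\tau^{\overrightarrow{w}}_{\infty}>K$ and at the bulk edge $2$ otherwise, exactly as you compute. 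The only cosmetic difference is that the paper adopts the exactly balanced version of the model with fixed community sizes $n/K$, so the concentration of $Z^{\top}Z$ you flag in point (a) is not needed there.
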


Proposition \ref{thm:main2} reveals that the absolute eigenratio $|\lambda_K^{{\overrightarrow{w}}}|/|\lambda_{K+1}^{{\overrightarrow{w}}}|$ undergoes a phase transition: it remains a constant one when the SNR $\tau_\infty^{{\overrightarrow{w}}}$ is smaller than $K$; the ratio will be strictly increasing in $\tau_\infty^{{\overrightarrow{w}}}$ once $\tau_\infty^{{\overrightarrow{w}}}>K$. This phenomenon is consistent with Theorem \ref{thm:error}. Indeed, when $\tau_\infty^{{\overrightarrow{w}}}$ is below the threshold $K$, the informative eigenvalue $\lambda_K^{{\overrightarrow{w}}}$ is indistinguishable from the noisy one $\lambda_{K+1}^{{\overrightarrow{w}}}$. Spectral clustering on $A^{\overrightarrow{w}}$ will thus fail. See Remark \ref{rem:one} for more details. On the other hand, as the SNR $\tau_\infty^{{\overrightarrow{w}}}$ increases over the range $(K,\infty)$, the eigenratio becomes larger so that $\lambda_K^{{\overrightarrow{w}}}$ is better separated from $\lambda_{K+1}^{{\overrightarrow{w}}}$ and the mis-clustering error of spectral clustering on $A^{\overrightarrow{w}}$ is decreased. Figure \ref{fig:ratio} depicts both the finite-sample and asymptotic values of the eigenratio from the same simulation study as described in Figure \ref{fig:err}. Clearly the asymptotic values are fine predictions of the empirical ones. 

\begin{figure}[!htb]
	\centering
	\begin{subfigure}[b]{0.45\textwidth}
	\includegraphics[scale=0.5]{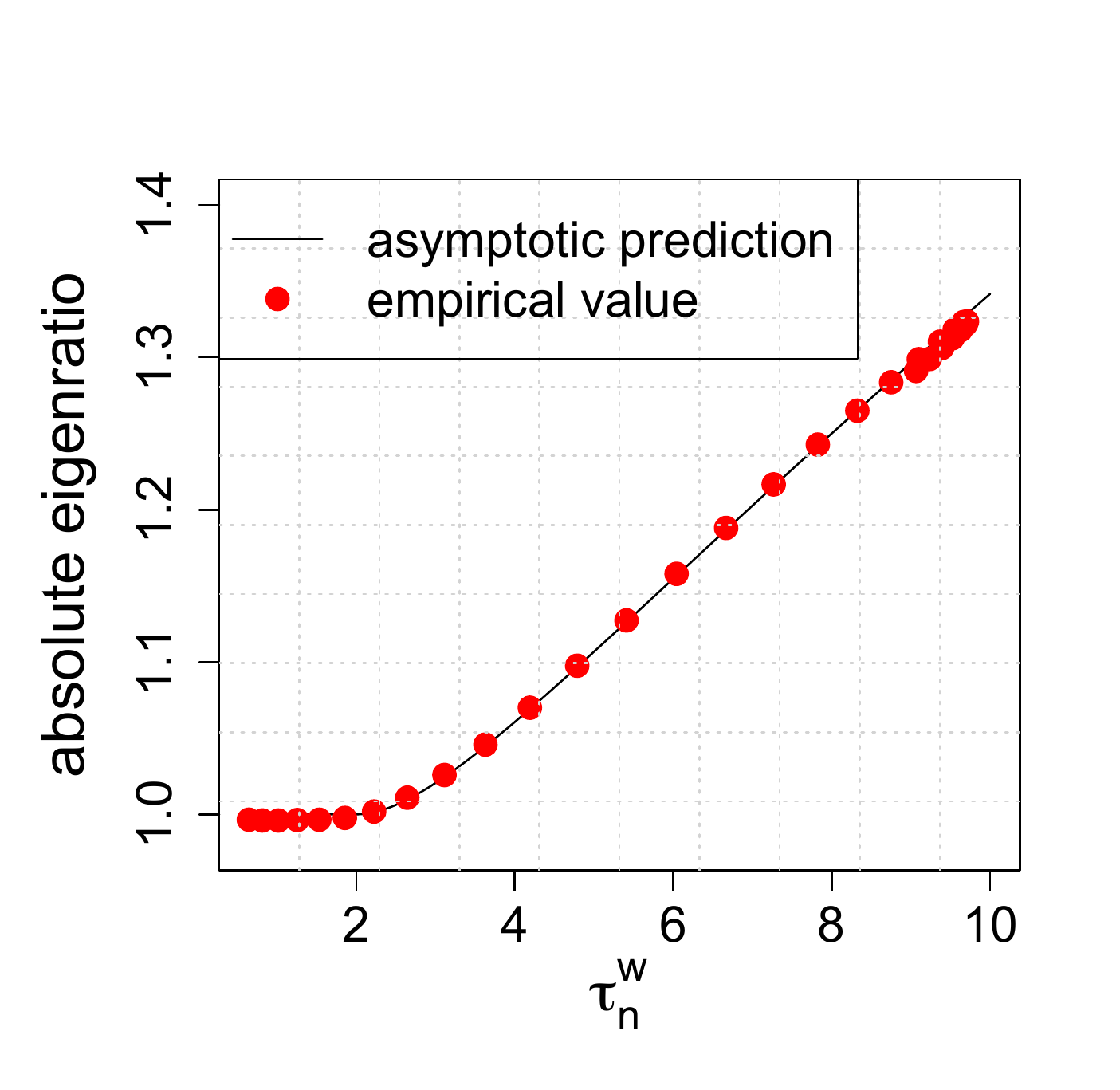}
	\caption{Absolute Eigenratio\label{fig:ratio}}

	\end{subfigure}
	\begin{subfigure}[b]{0.45\textwidth}
	\includegraphics[scale=0.5]{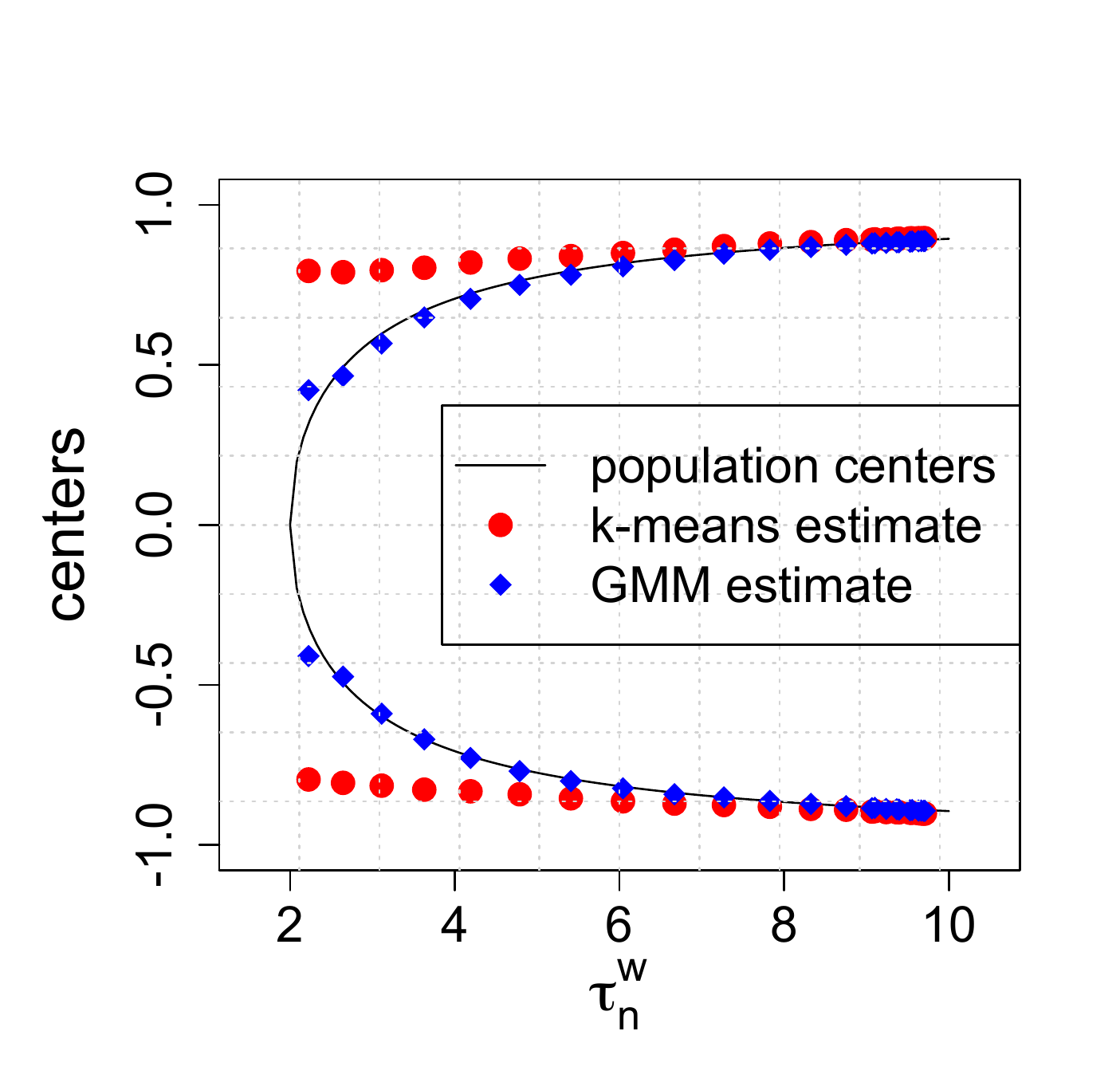}	
	\caption{Centers\label{fig:center}}
	\end{subfigure}
	\vspace{-0.3cm}
	\caption{The same  setting as in Figure \ref{fig:err}. (i). The asymptotic prediction is calculated via the limit in Proposition \ref{thm:main2}. The empirical value is computed over one repetition \textcolor{black}{without using any oracle information of the parameters}. (ii). Proposition \ref{gaussian:embed} shows that the first eigenvector is uninformative for clustering. We thus focus on the second one. There are two population centers that are calculated according to Proposition \ref{gaussian:embed}. The center estimates from $k$-means (``$k$-means estimate") and MLE under Gaussian mixture distribution (``GMM estimate") are computed over 5 repetitions.}
\end{figure}

Proposition \ref{thm:main2} together with Theorem \ref{thm:error} tells us that asymptotically the optimal weight achieving the minimum mis-clustering error maximizes the absolute eigenratio $|\lambda_K^{{\overrightarrow{w}}}|/|\lambda_{K+1}^{{\overrightarrow{w}}}|$. This motivates us to maximize the eigenratio to obtain the weight. Define the objective function $g({\overrightarrow{w}})\equiv(\lambda_K^{{\overrightarrow{w}}})^2/(\lambda_{K+1}^{{\overrightarrow{w}}})^2$. We aim to solve the optimization problem: $\max_{\overrightarrow{w} \in \mathcal{W}} g({\overrightarrow{w}})$.

When the eigenvalues $\lambda_K^{{\overrightarrow{w}}}$ and $\lambda_{K+1}^{{\overrightarrow{w}}}$ are simple (in magnitude), it is well known that they are differentiable at $A^{\overrightarrow{w}}$ and admit closed-form gradients \citep{magnus1985differentiating}. Using the chain rule,  we can derive the gradient of $g({\overrightarrow{w}})$: for each $\ell \in [L]$
\begin{align*}
\frac{\partial g({\overrightarrow{w}})}{\partial w_{\ell}}&=2\lambda_K^{{\overrightarrow{w}}}({\lambda_{K+1}^{{\overrightarrow{w}}}})^{-3}(\lambda_{K+1}^{{\overrightarrow{w}}}\nabla\lambda_K^{{\overrightarrow{w}}}-\lambda_K^{{\overrightarrow{w}}}\nabla\lambda_{K+1}^{{\overrightarrow{w}}}) \\
&=2\lambda_K^{{\overrightarrow{w}}}({\lambda_{K+1}^{{\overrightarrow{w}}}})^{-3}(\lambda_{K+1}^{{\overrightarrow{w}}}u_K^TA^{(\ell)}u_K-\lambda_{K}^{{\overrightarrow{w}}}u_{K+1}^TA^{(\ell)}u_{K+1}),
\end{align*}
where $u_K$ and $u_{K+1}$ are the eigenvectors associated with the eigenvalues $\lambda_K^{{\overrightarrow{w}}}, \lambda_{K+1}^{{\overrightarrow{w}}}$ respectively. Hence, we can perform projected gradient descent to update ${\overrightarrow{w}}$ using 
\begin{equation}
\label{gradient:form}
{\overrightarrow{w}}_{t+1}=\mathcal{P}_{\mathcal{W}}\left({\overrightarrow{w}}_{t}+\gamma_t\nabla g({\overrightarrow{w}}_{t})\right), 
\end{equation}
where $\mathcal{P}_{\mathcal{W}}(\cdot)$ denotes the projection onto unit simplex; $\gamma_t=\gamma_0/(1+rt)$ is the learning rate that decays with time, $r$ is the decay rate and $t$ is the number of iterations. The above update is only feasible when $\lambda_K^{{\overrightarrow{w}}}$ and $\lambda_{K+1}^{{\overrightarrow{w}}}$ are simple. Nevertheless, we found empirical evidence that $A^{\overrightarrow{w}}$ does not have repeated eigenvalues (in magnitude) for a wide range of $\overrightarrow{w}$. In fact, it has been proved that certain types of random matrices have simple spectrum with high probability \citep{tao2017random, luh2018sparse}. For completeness, to handle the rare scenario when $\lambda_{K}^{{\overrightarrow{w}}}$ or $\lambda_{K+1}^{{\overrightarrow{w}}}$ is not simple, we resort to coordinate descent update with one-dimensional line search. Whenever $\lambda_{K}^{{\overrightarrow{w}}}$ and $\lambda_{K+1}^{{\overrightarrow{w}}}$ become simple at the current update, the projected gradient descent is resumed. We implement the algorithm with random initialization. To achieve a better convergence, we run it independently multiple times and choose the output weight that gives the largest value of $g({\overrightarrow{w}})$. The method is summarized as Algorithm \ref{alg:ratio}.

\begin{algorithm}[htb] 
\caption{Spectral clustering with maximal eigenratio [SCME].}\label{alg:ratio}
\begin{algorithmic}[1] 
\REQUIRE 
$L$ layers of adjacency matrices $[A^{(l)}, l=1,\cdots,L]$, the number of communities $K$, initial learning rate $\gamma_0$, decay rate $r$,  maximum number of iterations $T$, number of random initializations $M$, and the precision parameter $\epsilon_0$.
\ENSURE 
$\hat{\overrightarrow{w}}$ and the community estimate $\hat{\overrightarrow{c}}$ by apply spectral clustering on $A^{\hat{\overrightarrow{w}}}$.
\STATE Initialize $m=1$.
\WHILE {$m\leq M$}
\STATE Obtain random initialization and denote it by $\hat{\overrightarrow{w}}_{\rm old}$. Set $\epsilon = \epsilon_0+1$ and $t=1$.
 \WHILE {$\epsilon>\epsilon_0$ and $t\leq T$,}
 \STATE Compute the update $\hat{\overrightarrow{w}}_{\rm new}$ 
   $
   \begin{cases}
   \text{using}~ \eqref{gradient:form}, & \text{if}~ \lambda_K^{{\overrightarrow{w}_{\rm old}}} ~\text{and}~ \lambda_{K+1}^{{\overrightarrow{w}_{\rm old}}} ~\text{are simple}, \\
   \text{via coordinate descent}, & \text{otherwise.} \\
   \end{cases}
   $
   \STATE Assign $\epsilon \leftarrow \|\hat{\overrightarrow{w}}_{\rm old}-\hat{\overrightarrow{w}}_{\rm new}\|$, $\hat{\overrightarrow{w}}_{\rm old}\leftarrow \hat{\overrightarrow{w}}_{\rm new}$, and $t \leftarrow t+1$. 
 \ENDWHILE
 \STATE Set $\hat{\overrightarrow{w}}_{\rm new}^m = \hat{\overrightarrow{w}}_{\rm new}$ and $m \leftarrow m+1$.
 \ENDWHILE
 \STATE Set $\hat{\overrightarrow{w}}\leftarrow \hat{\overrightarrow{w}}_{\rm new}^{m^*}$ where 
 $m^* = \arg\max_m g(\hat{\overrightarrow{w}}_{\rm new}^m)$.
\end{algorithmic}
\end{algorithm}

\begin{remark}
This second adaptive layer aggregation method has an intuitive explanation as well. For networks with $K$ communities, the ratio between the last informative eigenvalue $\lambda_K$ and the first noisy eigenvalue $\lambda_{K+1}$ is a reasonable measure of the community structure signal strength. Such intuition is well pronounced under the balanced MPPM: the maximization of the eigen-ratio leads to the optimal layer aggregation. Extensive numerical studies in Section \ref{numeric:exp} will further demonstrate the robustness and effectiveness of the method working beyond balanced MPPM. 
\end{remark}

\subsection{$k$-means clustering versus Gaussian mixture model clustering}\label{two:vs}

We have presented two novel methods tailored for adaptive layer aggregation. We now turn to discuss the spectral clustering step of the two-step framework that is introduced in Section \ref{two:step}. Specifically, we will provide convincing evidence to support clustering using Gaussian mixture models (GMM) as a substitute for $k$-means in spectral clustering. Towards this goal, let $U\in \mathcal{R}^{n\times K}$ be the eigenvector matrix of which the $i^{th}$ column is the eigenvector of $A^{\overrightarrow{w}}$ associated with the $i^{th}$ largest (in magnitude) eigenvalue.
\begin{proposition}
\label{gaussian:embed}
Under the same conditions as in Theorem \ref{thm:error}, there exists an orthogonal matrix $\mathcal{O} \in \mathcal{R}^{K \times K}$ such that for any pair $i, j \in [n]$ conditioning on the community labels $c_i, c_j$, it holds that as $n\rightarrow \infty$
\begin{align*}
\begin{pmatrix}
\sqrt{n}\mathcal{O}U^Te_i \\
\sqrt{n}\mathcal{O}U^Te_j
\end{pmatrix}
\overset{d}{\rightarrow} \mathcal{N}(\mu, \Sigma), \quad 
\mu=
\begin{pmatrix}
\mu^{(c_i)} \\
\mu^{(c_j)}
\end{pmatrix}
, \quad \Sigma=
\begin{pmatrix}
\Theta & 0 \\
0 & \Theta
\end{pmatrix}
,
\end{align*}
where $\{e_i\}_{i=1}^n$ is the standard basis in $\mathcal{R}^n$. We have omitted the dependence of $\mathcal{O},U$ on $n$ for simplicity. The mean and covariance matrix in the multivariate Gaussian distribution take the following expressions:
\begin{align*}
\mu^{(c_i)}=
\begin{pmatrix}
1 \\
\sqrt{\frac{K(\tau^{\overrightarrow{w}}_{\infty}-K)}{\tau^{\overrightarrow{w}}_{\infty}}}\nu_{c_i}
\end{pmatrix}
, ~~\mu^{(c_j)}=
\begin{pmatrix}
1 \\
\sqrt{\frac{K(\tau^{\overrightarrow{w}}_{\infty}-K)}{\tau^{\overrightarrow{w}}_{\infty}}}\nu_{c_j}
\end{pmatrix}
,~~ \Theta=\frac{K}{\tau^{\overrightarrow{w}}_{\infty}}\cdot
\begin{pmatrix}
0 & 0 \\
0 &  I_{K-1}
\end{pmatrix}
.
\end{align*}
Here, the $\mathcal{V}_{K\times K-1}=[\nu_1,\nu_2,\cdots,\nu_K]^T$ such that the  $K\times K$ matrix
$
[1/\sqrt{K} 1_{K} , \mathcal{V}_{K\times K-1}]
$
is orthogonal, where $1_K$ is a length-$K$ vector of 1.
\end{proposition}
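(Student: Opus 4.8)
The plan is to condition on the latent labels $\overrightarrow{c}$ and decompose the weighted adjacency matrix as $A^{\overrightarrow{w}}=M+W$, where $M=E[A^{\overrightarrow{w}}\mid\overrightarrow{c}]$ is a low-rank signal matrix and $W=A^{\overrightarrow{w}}-M$ is a symmetric, mean-zero noise matrix whose entrywise variance is asymptotically homogeneous by condition (ii). Under the balanced MPPM, $M=G\,\Omega^{\overrightarrow{w}}G^T$ up to a negligible diagonal correction, where $G\in\{0,1\}^{n\times K}$ is the membership matrix and $\Omega^{\overrightarrow{w}}=(\bar{p}_n-\bar{q}_n)I_K+\bar{q}_nJ_K$ with $\bar{p}_n=\sum_\ell w_\ell p^{(\ell)}_n$ and $\bar{q}_n=\sum_\ell w_\ell q^{(\ell)}_n$. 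First I would diagonalize $M$ explicitly: since the communities are balanced, $G^TG\approx (n/K)I_K$, so the eigenvectors of $M$ are the ``blown-up'' eigenvectors of $\Omega^{\overrightarrow{w}}$. This produces one top eigenvalue $\approx (n/K)(\bar{p}_n+(K-1)\bar{q}_n)$ with eigenvector $1_n/\sqrt{n}$, and a degenerate block of $K-1$ eigenvalues $\approx (n/K)(\bar{p}_n-\bar{q}_n)$ whose eigenspace is spanned by $\{\sqrt{K/n}\,G\nu_k\}$, with $\nu_k$ the columns of $\mathcal{V}$, so that the population entry at node $i$ equals $\sqrt{K/n}\,\nu_{c_i}$.

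Next I would pass to the normalized deformed-Wigner model. Writing $v_n$ for the common entrywise variance of $W$, which is of the same order as $s_n/K$ with $s_n$ the denominator of $\tau^{\overrightarrow{w}}_n$ in \eqref{finite:snr}, condition (i) guarantees that $\|W\|/\sqrt{nv_n}\to 2$ and that the local semicircle law for $W/\sqrt{nv_n}$ holds. Hence $A^{\overrightarrow{w}}/\sqrt{nv_n}$ is a deformed Wigner matrix whose community spikes have strength $\theta_n:=(n/K)(\bar{p}_n-\bar{q}_n)/\sqrt{nv_n}$, and a direct computation gives $\theta_n^2=\tau^{\overrightarrow{w}}_n/K\to\tau^{\overrightarrow{w}}_\infty/K$, while the density spike diverges. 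Because $\tau^{\overrightarrow{w}}_\infty>K$ we have $\theta>1$, so the community spikes sit above the Baik--Ben Arous--P\'ech\'e threshold, exactly as in Proposition \ref{thm:main2}.

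The core step is then to invoke an eigenvector central limit theorem for deformed Wigner matrices in this vanishing-variance regime, following the techniques of \citet{bai2012limiting}. Three outputs are needed. (a) The squared overlap between each sample signal eigenvector and the corresponding population eigenspace converges to $1-1/\theta^2=(\tau^{\overrightarrow{w}}_\infty-K)/\tau^{\overrightarrow{w}}_\infty$; multiplying the population entry $\sqrt{K/n}\,\nu_{c_i}$ by $\sqrt{n}$ and by this overlap yields the mean $\sqrt{K(\tau^{\overrightarrow{w}}_\infty-K)/\tau^{\overrightarrow{w}}_\infty}\,\nu_{c_i}$, while for the density spike the overlap tends to $1$, giving the first mean coordinate $1$. (b) The component of each sample eigenvector orthogonal to the signal is asymptotically an isotropic Gaussian vector on the sphere, so $\sqrt{n}$ times a single entry has variance $1/\theta^2$; for the community directions this equals $K/\tau^{\overrightarrow{w}}_\infty$, matching the lower block of $\Theta$, whereas for the density spike $1/\theta_1^2\to 0$, producing the vanishing first-coordinate variance. (c) The fluctuations at two distinct nodes $i\neq j$, and across the $K-1$ degenerate directions, are asymptotically independent, yielding the block-diagonal $\Sigma$ and the diagonal $\Theta$. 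The rotation $\mathcal{O}$ absorbs the arbitrary choice of orthonormal basis inside the degenerate eigenspace together with the sign ambiguity of the eigenvectors, and is constructed to align the empirical basis with the canonical one $[1_K/\sqrt{K},\,\mathcal{V}]$.

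I expect the main obstacle to be establishing this eigenvector CLT rigorously in the present setting. Unlike the classical results for dense Wigner matrices with fixed variances, here the noise entries are centered Bernoulli sums whose variances vanish with $n$; condition (i) is precisely what controls the entrywise fluctuations and validates the local law, but carrying the argument through requires care. A further difficulty beyond the standard single-spike case is the degeneracy of the $K-1$ community eigenvalues: the individual eigenvectors are not identifiable, so one must work at the level of the spectral projector onto the degenerate eigenspace, show that its restriction to the signal subspace concentrates at the overlap computed above, and prove that the residual fluctuations, after the alignment by $\mathcal{O}$, become jointly and isotropically Gaussian with the claimed two-node independence. Establishing this joint two-node Gaussian limit with independent blocks is the technical crux supporting the stated covariance structure.
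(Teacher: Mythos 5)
Your outline identifies the correct structure and, importantly, gets every limiting constant right: the signal-plus-noise decomposition, the explicit diagonalization of $E[A^{\overrightarrow{w}}\mid \overrightarrow{c}]$ into a diverging density spike and a $(K-1)$-fold degenerate community spike of normalized strength $\theta_n$ with $\theta_n^2=\tau^{\overrightarrow{w}}_n/K$, the overlap $1-1/\theta^2=(\tau^{\overrightarrow{w}}_{\infty}-K)/\tau^{\overrightarrow{w}}_{\infty}$ producing the mean, the residual variance $1/\theta^2=K/\tau^{\overrightarrow{w}}_{\infty}$ producing $\Theta$, and the role of $\mathcal{O}$ in resolving the degeneracy. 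This is the same route the paper takes, and the paper likewise works with the centered (modularity) matrix first and treats the top eigenvector of $A$ separately.

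The gap is the one you name yourself: the entrywise eigenvector CLT in this vanishing-variance, degenerate-spike regime is the entire content of the proof, and "invoke an eigenvector CLT following \citet{bai2012limiting}" does not discharge it — no off-the-shelf result applies here, which is why the paper builds it by hand. Concretely, the paper's mechanism is a leave-one-out resolvent expansion: starting from the eigenvalue equation and the Woodbury identity, it shows (Lemma \ref{lemma:two}) that
\begin{align*}
\sqrt{n}\,e_i^TEP=\sqrt{\tfrac{K(\tau^{\overrightarrow{w}}_{\infty}-K)}{\tau^{\overrightarrow{w}}_{\infty}}}\,\nu^T_{c_i}-\sqrt{\tfrac{\tau^{\overrightarrow{w}}_{\infty}-K}{\tau^{\overrightarrow{w}}_{\infty}}}\cdot \sqrt{n}\,\omega_i^TH_{(i)}^{-1}E^*+o_p(1),
\end{align*}
i.e., the fluctuation of the $i$th embedded point is, to leading order, a \emph{linear} statistic of the independent entries of the $i$th noise column $\omega_i$ weighted by a resolvent $H_{(i)}^{-1}$ that is independent of $\omega_i$. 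Joint normality and two-node independence then follow by replacing $H_{(i)}^{-1},H_{(j)}^{-1}$ with the common $H_{(ij)}^{-1}$ (Lemma \ref{pair:convergence}) and applying a multivariate Berry--Esseen bound conditionally on $H_{(ij)}$ (Lemma \ref{embedding:m}); the limiting covariance is the quadratic form $(E^*)^TH^{-2}E^*\to K/(\tau^{\overrightarrow{w}}_{\infty}-K)\,I_{K-1}$, computed via the semicircle law and a contour integral, and the Haar-like eigenvector property of \citet{bai2012limiting} enters only through this step and the overlap limit. If you want to complete your proposal, this linearization of the eigenvector entry — rather than a projector-level concentration statement — is the missing idea; working only with the spectral projector onto the degenerate eigenspace, as you suggest, gives you the overlap but not the joint Gaussian fluctuations at two fixed coordinates.
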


\begin{remark}
\label{weak:dep}
We believe that it is possible to have a nontrivial generalization of our analysis to obtain similar convergence results under more general multi-layer stochastic block models. However, the limiting Gaussian mixture distribution will be much more complicated. Some limiting results have been obtained under single-layer stochastic block models in \citet{tang2018limit}. \textcolor{black}{\cite{levin2017central} derives the central limiting theorem for an omnibus embedding of multiple random graphs for graph comparison inference.} However, as explained in Remarks \ref{asym:one} and \ref{asym:two}, we are considering a more challenging asymptotic regime, which requires notably different analytical techniques. Given that $k$-means is invariant under scaling and orthogonal transformation, the vectors $\{\sqrt{n}\mathcal{O}U^Te_i\}_{i=1}^n$ can be considered as the input data points for $k$-means in spectral clustering. Proposition \ref{gaussian:embed} establishes that asymptotically all the spectral embedded data points follow a Gaussian mixture distribution, and they are pairwise independent. These properties turn out to be sufficient for the convergence result of $k$-means under classical i.i.d. settings in \citet{pollard1981strong} to carry over to the current case. With the convergence of estimated centers of $k$-means, the mis-clustering error in Theorem \ref{thm:error} can be derived in a direct way. See the proof of Theorem 1 for details. 
\end{remark}

\begin{remark}
\label{kmean:argue}
Given a sample of independent observations from a mixture distribution: $x \mid c=k \sim f(x;\theta_k)$, it is known that the optimal \emph{asymptotic} mis-clustering error is 
\begin{align*}
\min_{g}{\rm pr}(g(x)\neq c)={\rm pr}(g^*(x)\neq c), ~\mbox{where}~g^*(x)= \underset{k}{\operatorname{argmax}}~{\rm pr}(c=k \mid x).
\end{align*}
It is straightforward to verify that the \emph{asymptotic} error in Theorem \ref{thm:error} coincides with the above optimal error when the mixture distribution is the Gaussian mixture distribution given in Proposition \ref{gaussian:embed}. Combining this fact with Remark \ref{weak:dep}, we can conclude that $k$-means achieves the best possible outcome and is the proper clustering method to use in spectral clustering (if the conditions of Theorem \ref{thm:error} hold). However, we would like to emphasize that the $k$-means in spectral clustering, in general, does not consistently estimate the population centers of the Gaussian mixture distribution. Such a phenomenon is well recognized in the classical i.i.d. scenario (see for example \citet{bryant1978asymptotic}). It continues to occur in the present case. A formal justification of the statement can be found in Lemma 1 of the appendix. Some supportive simulation results are shown in Figure \ref{fig:center}.  Given the inconsistency of $k$-means for estimating the population centers, it is intriguing to ask why $k$-means is able to obtain the optimal mis-clustering error. This is uniquely due to population centers' symmetry and the simple structure of the covariance matrix in this specific Gaussian mixture distribution. Simple calculations show that the $k$-means error will be optimal as long as the estimated centers (ignoring the irrelevant first coordinate) after a common scaling converge to the population centers. This is verified numerically in Figure \ref{fig:center}: at each $\tau^{\overrightarrow{w}}_n$, the two center estimates from the $k$-means will be compatible with the population centers after a common shrinkage. 
\end{remark}

\begin{figure}[t]
\centering
\begin{subfigure}{.33\textwidth}
  \includegraphics[width=.85\linewidth]{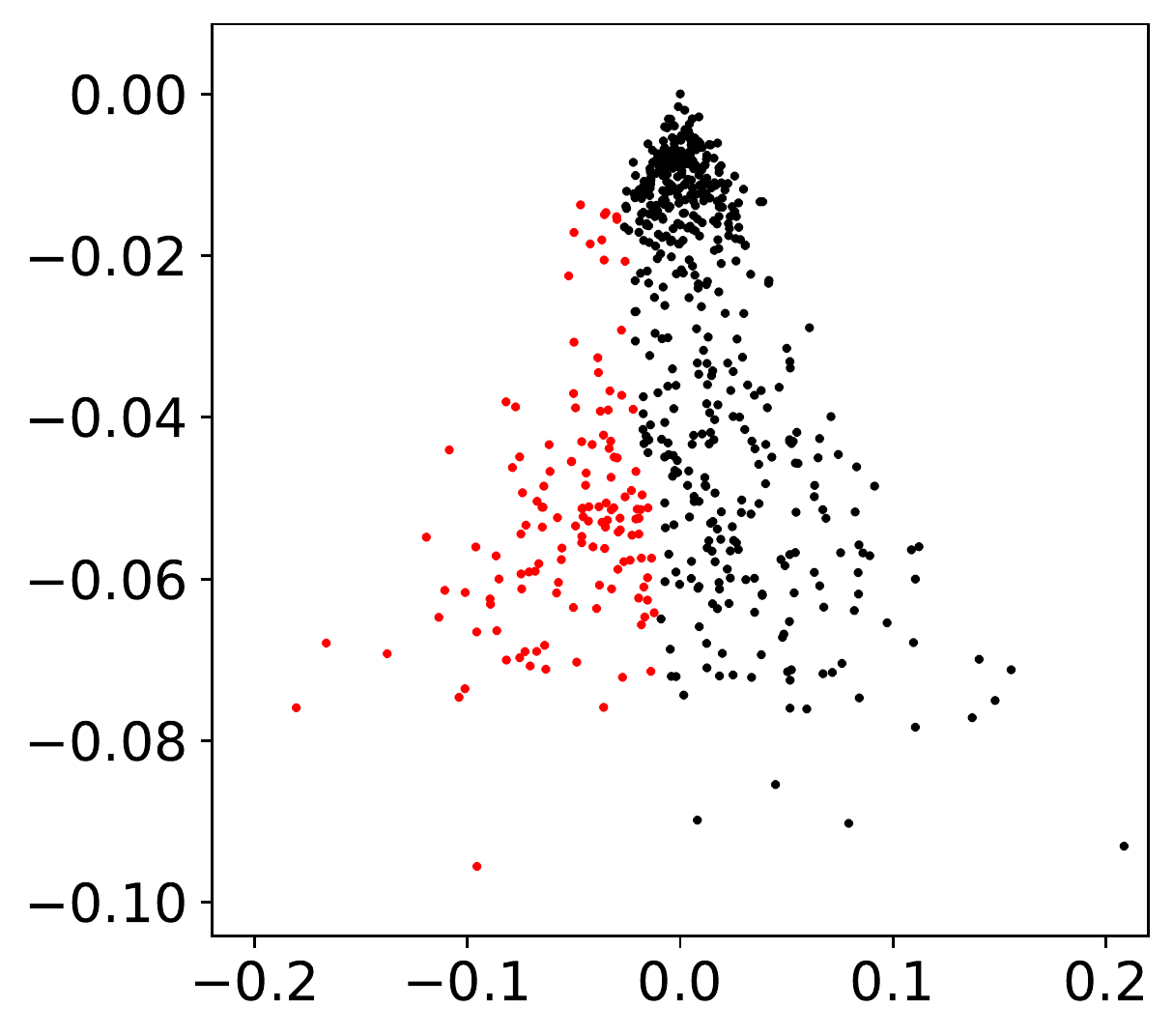}  
  \vspace{-0.5\baselineskip} 
  \caption{$k$-means}
  \label{fig:sub-km}
\end{subfigure}
\hspace{-2em}%
\begin{subfigure}{.33\textwidth}
  \includegraphics[width=.85\linewidth]{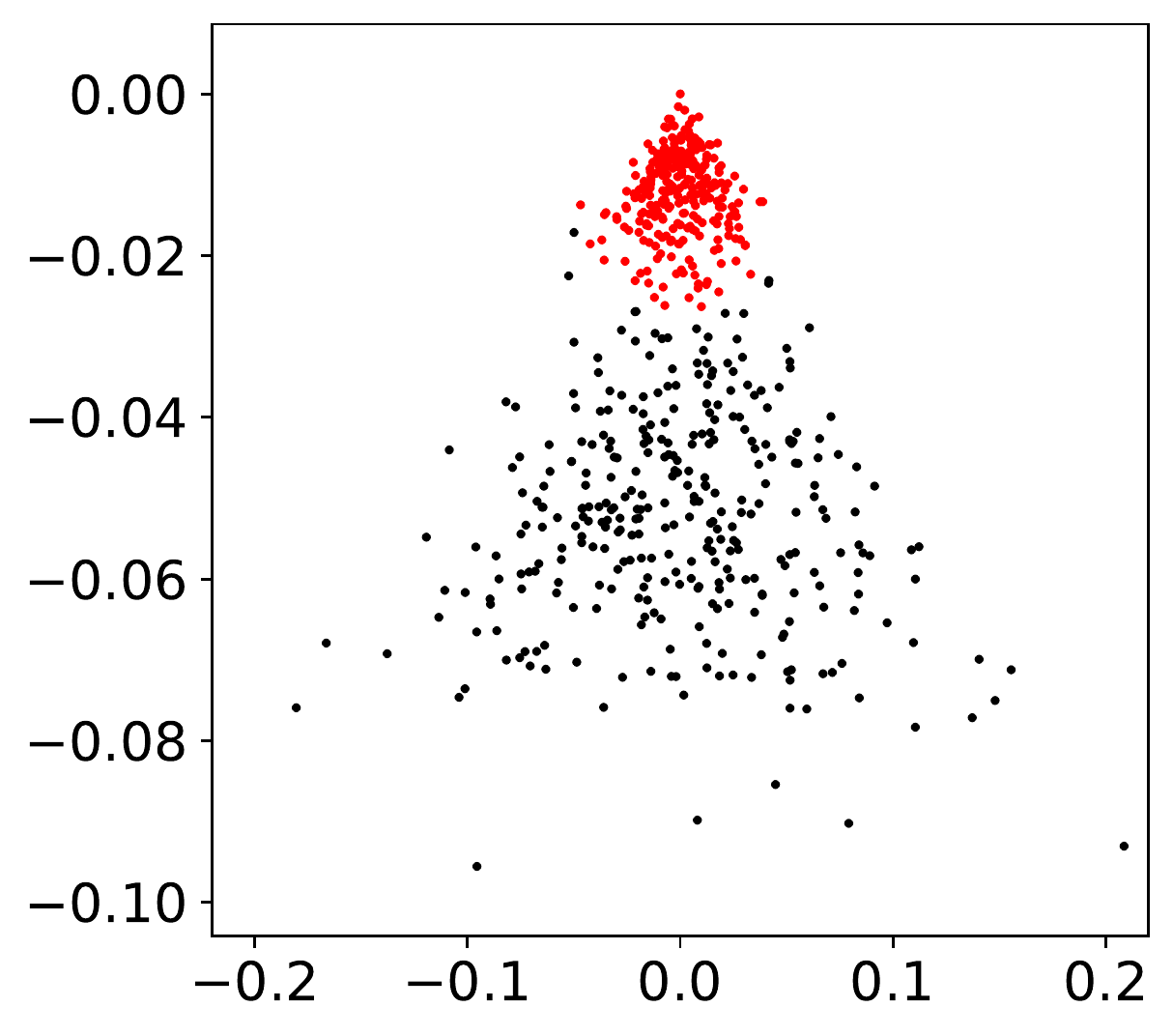}  
  \vspace{-0.5\baselineskip} 
  \caption{GMM} 
  \label{fig:sub-gm}
\end{subfigure}
\hspace{-2em}%
\begin{subfigure}{.33\textwidth}
  \includegraphics[width=.85\linewidth]{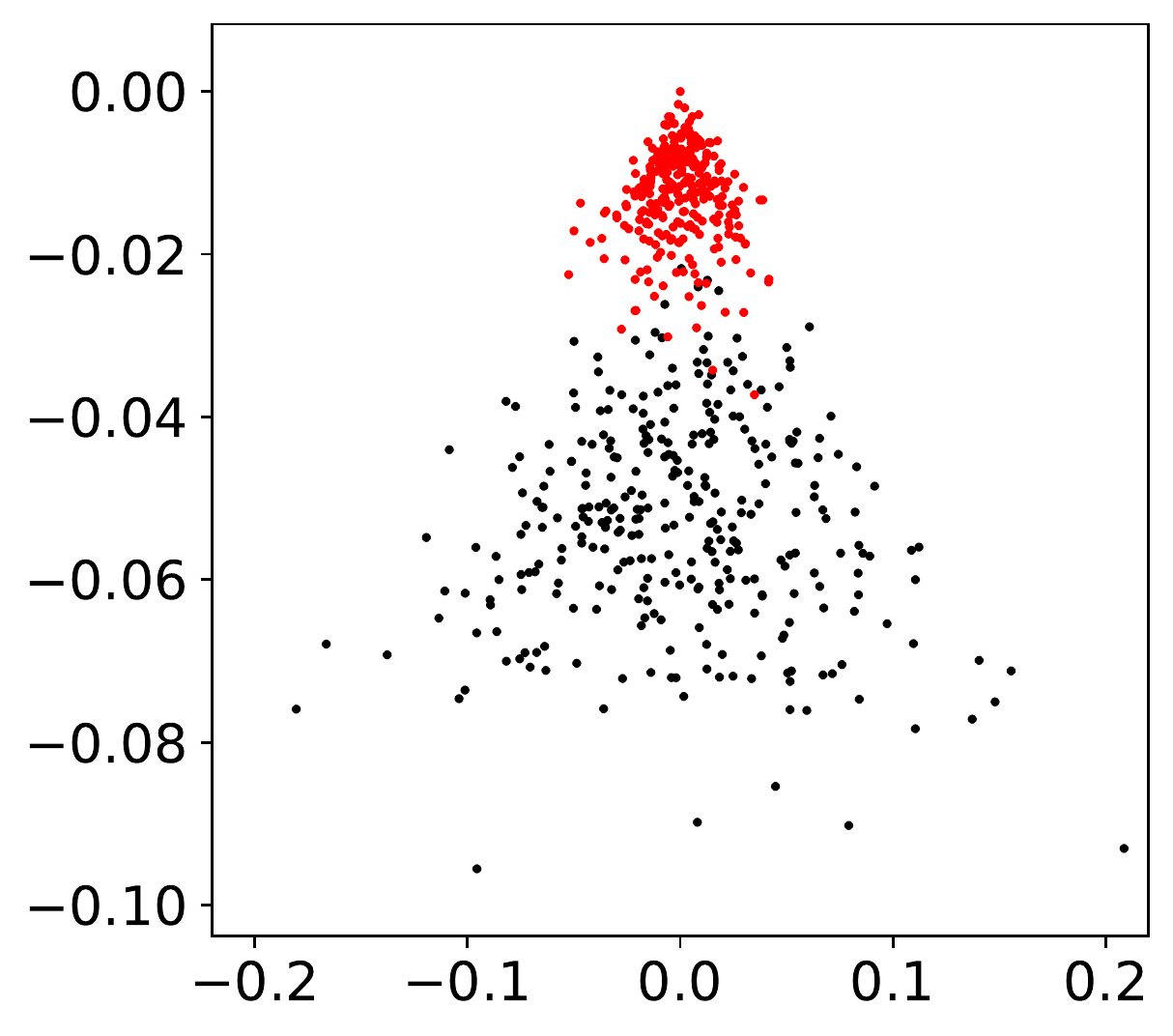}  
  \vspace{-0.5\baselineskip} 
  \caption{Ground truth} 
  \label{fig:sub-true}
\end{subfigure}
\vspace{-0.6\baselineskip} 
\caption{Consider a balanced multi-layer stochastic block model with $K=2, L=1, n=600, \Omega^{(1)}_{11}=0.053, \Omega^{(1)}_{12}=0.011, \Omega^{(1)}_{22}=0.016$. The spectral embedding in $\mathcal{R}^2$ along with the clustering results of $k$-means, GMM, and ground truth is shown. }
\label{fig:eigen}
\end{figure}

In light of Proposition \ref{gaussian:embed}, it is appealing to consider using Gaussian mixture model (GMM) clustering as an alternative in the spectral clustering, since the spectral embedded data follows a Gaussian mixture distribution asymptotically. Indeed, Figures \ref{fig:center} and \ref{fig:err} from a simulation study have demonstrated that GMM clustering is able to recover the population centers and obtain the optimal mis-clustering error. Therefore, both $k$-means and GMM clustering are optimal clustering methods that can be implemented in spectral clustering under balanced multi-layer planted partition models. However, as discussed in Remark \ref{kmean:argue}, it is the special parameter structures in the Gaussian mixture distribution that make $k$-means attain optimal error even though it is inconsistent for estimating population centers. Under a general multi-layer stochastic block model, it is likely that the limiting Gaussian mixture distribution (if exists) will have different parameter configurations so that the inconsistency of $k$-means eventually degrades the clustering performance. In contrast, GMM clustering specifies the asymptotically correct model and is able to better capture various shapes of embedded data. We  therefore expect it to outperform $k$-means in general settings. A quick numerical comparison is shown in Figure \ref{fig:eigen}. We observe that as the model deviates from balanced MPPM, the shapes of mixture components can be fairly different. The $k$-means fails to identify the shapes, resulting in poor clustering performance, while GMM provides an excellent fit thus gaining significant improvement in clustering. We provide more numerical experiments regarding the comparison in Sections \ref{sec:sim} and \ref{sec:real}.

\section{Numerical Experiments}
\label{numeric:exp}

We have proposed iterative spectral clustering (ISC) and spectral clustering with maximal eigenratio (SCME) for community detection. With the use of $k$-means or GMM clustering in spectral clustering, we in fact have four different methods: ISC with $k$-means clustering [ISC\_km], ISC with GMM clustering [ISC\_gm], SCME with $k$-means clustering [SCME\_km], and SCME with GMM clustering [SCME\_gm]. In this section, we present a systematic numerical study of the community detection performance of the four methods. Moreover, we compare our methods with three standard spectral methods in the literature, which are based on mean adjacency matrix [Mean adj.] \citep{han2015consistent}, aggregate spectral kernel [SpecK] \citep{paul2017consistency} and module allegiance matrix [Module alleg.] \citep{braun2015dynamic}. We use the adjusted rand index (ARI) \citep{hubert1985comparing} to evaluate the community detection performance of all the methods. ARI is a common measure of the similarity between two data clusterings. It is bounded by $1$, with the value of 1 indicating perfect recovery while 0 implying the estimation is no better than a random guess \citep{steinhaeuser2010identifying}. \textcolor{black}{Finally, to verify that our algorithms can reach (nearly) optimal layer aggregation under balanced MPPM, we include the oracle two-step procedure [Grid Search] which uses the weight that minimizes the empirical ARI (over a grid).}

\subsection{Simulation results}\label{sec:sim}

In the simulations, we consider four different cases corresponding to balanced MPPM, imbalanced MPPM, balanced MSBM, and imbalanced MSBM, respectively. Our methods are motivated by the asymptotic analysis under balanced MPPM. We thus would like to verify their effectiveness under the assumed models. In addition, we aim to study to what extent our methods work well for more general models. Each experiment is repeated 100 times. 

\begin{enumerate}

\item \emph{Balanced MPPM case}. For a balanced model $A^{[L]}\sim {\rm MPPM}(p^{[L]},q^{[L]},  \overrightarrow{\pi})$, we consider $p^{[L]}\equiv (p^{(1)},\cdots, p^{(L)})^T=c_\rho\frac{\log n}{n} \overrightarrow{p}, q^{[L]}\equiv (q^{(1)},\cdots, q^{(L)})^T=c_\rho\frac{\log n}{n} \overrightarrow{q}$. We vary values of the five parameters $n, K, L, c_\rho, \overrightarrow{q}$ in the model to investigate settings with different sample sizes, numbers of communities, numbers of layers, sparsity levels, and connectivity probabilities. Values of model parameters are summarized in Table~\ref{tab:ppm}.

\begin{table}[!h]
\begin{center}
	
\begin{threeparttable}[t]

\caption{Balanced \textsc{MPPM}. The notation ``s-t" denotes that a parameter is changed over the range $[s,t]$.}{%
\begin{tabular}{ccccccc}
\
	Case & $n$ & $K$ & $L$ & $c_\rho$ & $\overrightarrow{p}$ & $\overrightarrow{q}$\\ 
	1a 	&	600	&	2	&	2	&	1.5	&	(4, 4) & (2, 0-4) \\ 
	1b	&	600	&	2-6	&	2	&	1.5	&	(4, 4) & (0, 3) \\ 
	1c	&	600	&	2	&	1-5	&	1.5	&	(4, $\cdots$, 4) & (0, 4, $\cdots$, 4)\\ 
	1d	&	600	&	2	&	2	&	.4-1.2	&	(4, 4) & (.5, 2.5)	\\ 
	1e	&	200-1000	&2	&2	& .6	&	(4, 4)	&	(1, 3)\\ 
\end{tabular}}
\label{tab:ppm}

\end{threeparttable}

\end{center}
\end{table}

\begin{figure}[!htb]
\centering
\vspace{-0.5em}%
\begin{subfigure}{.45\textwidth}
  \includegraphics[width=.9\linewidth]{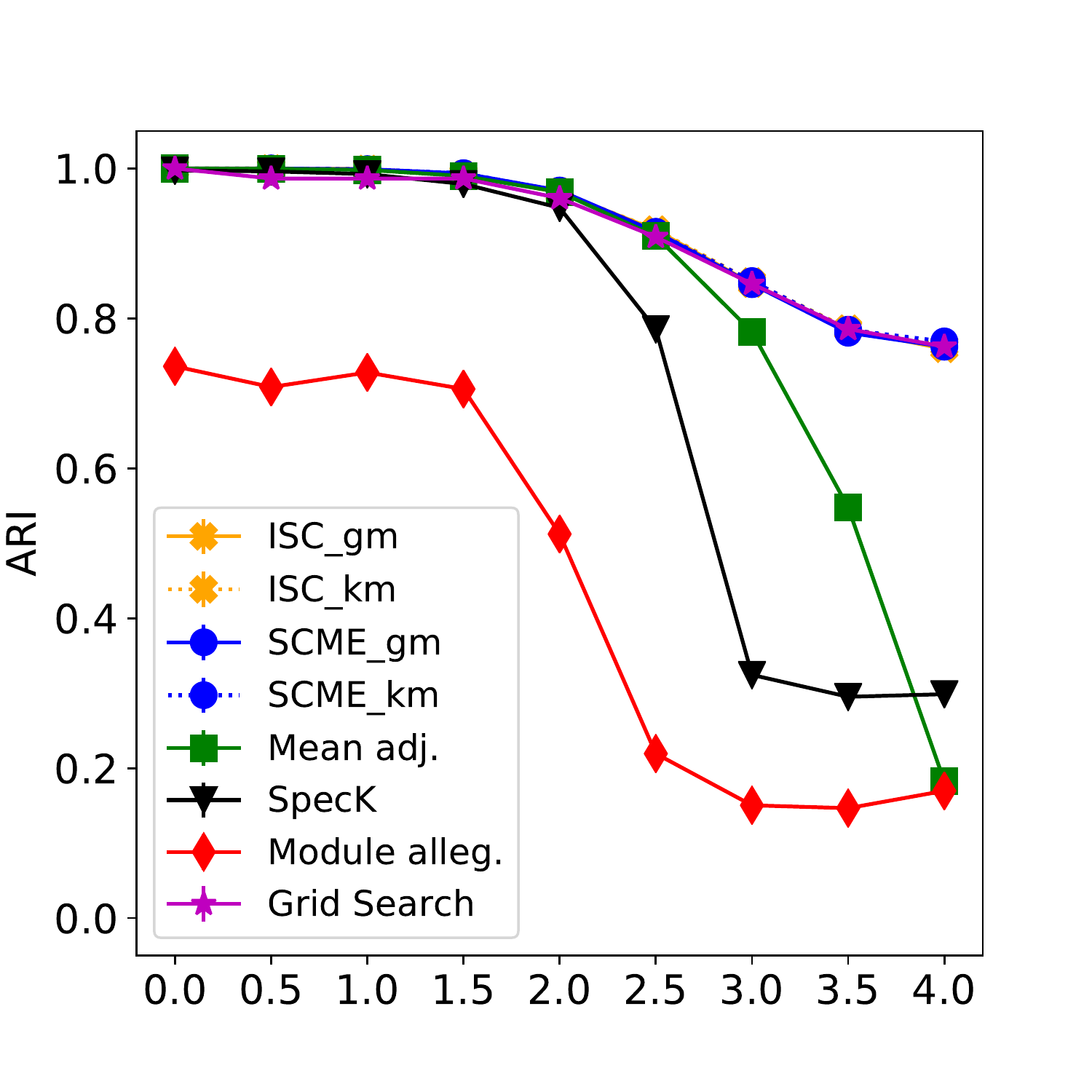} 
  \vspace{-0.8\baselineskip} 
  \caption{Case 1a: $\overrightarrow{q}$}
  \label{fig:sub-Omega}
\end{subfigure}
\begin{subfigure}{.45\textwidth}
  \includegraphics[width=.9\linewidth]{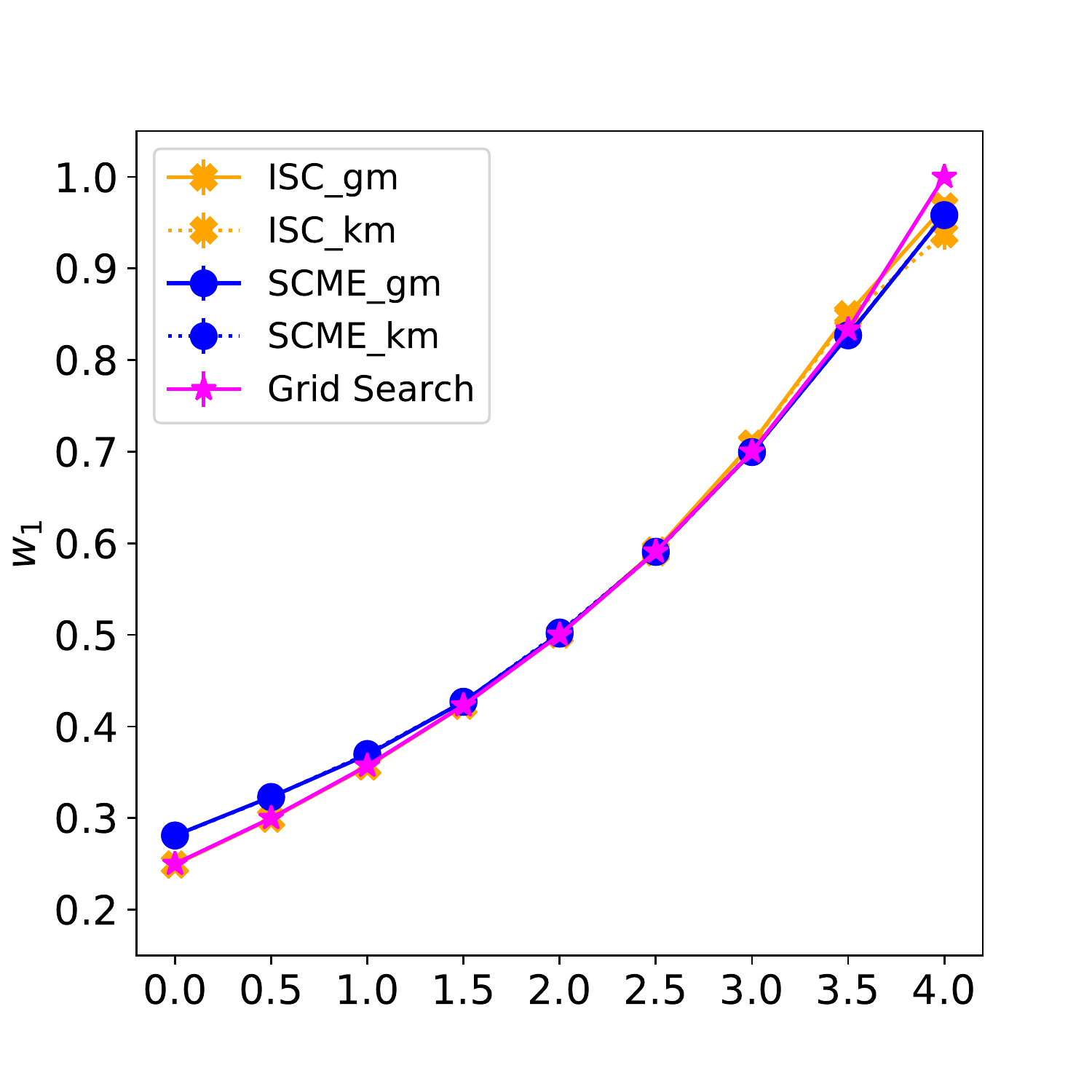}  
  \vspace{-0.3\baselineskip} 
  \caption{Case 1a: $\overrightarrow{q}$}
  \label{fig:sub-weight}
\end{subfigure}
\vspace{-0.3em}%
\begin{subfigure}{.45\textwidth}
  \includegraphics[width=.9\linewidth]{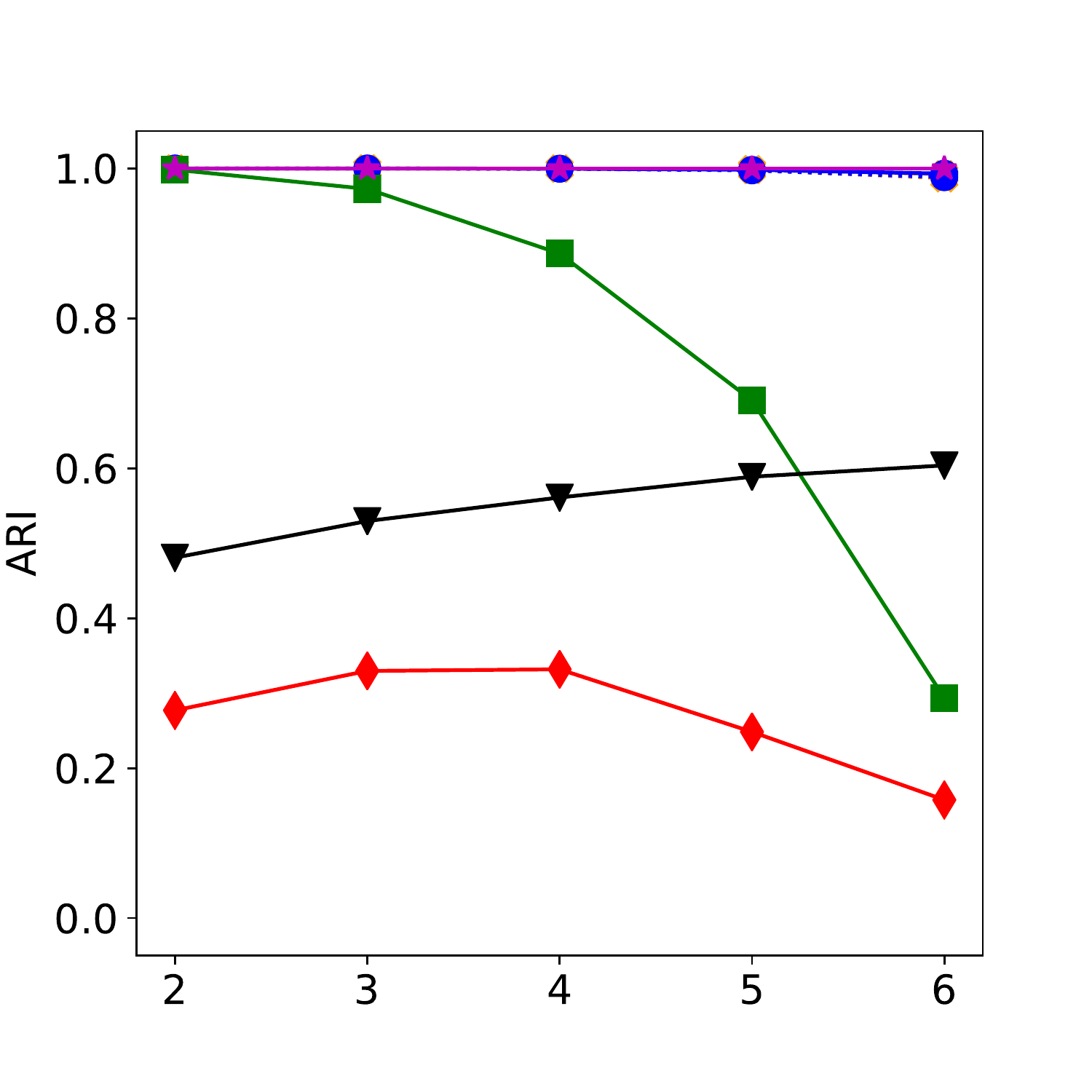}  
  \vspace{-0.3\baselineskip} 
  \caption{Case 1b: $K$}
  \label{fig:sub-K}
\end{subfigure}
\begin{subfigure}{.45\textwidth}
  \includegraphics[width=.9\linewidth]{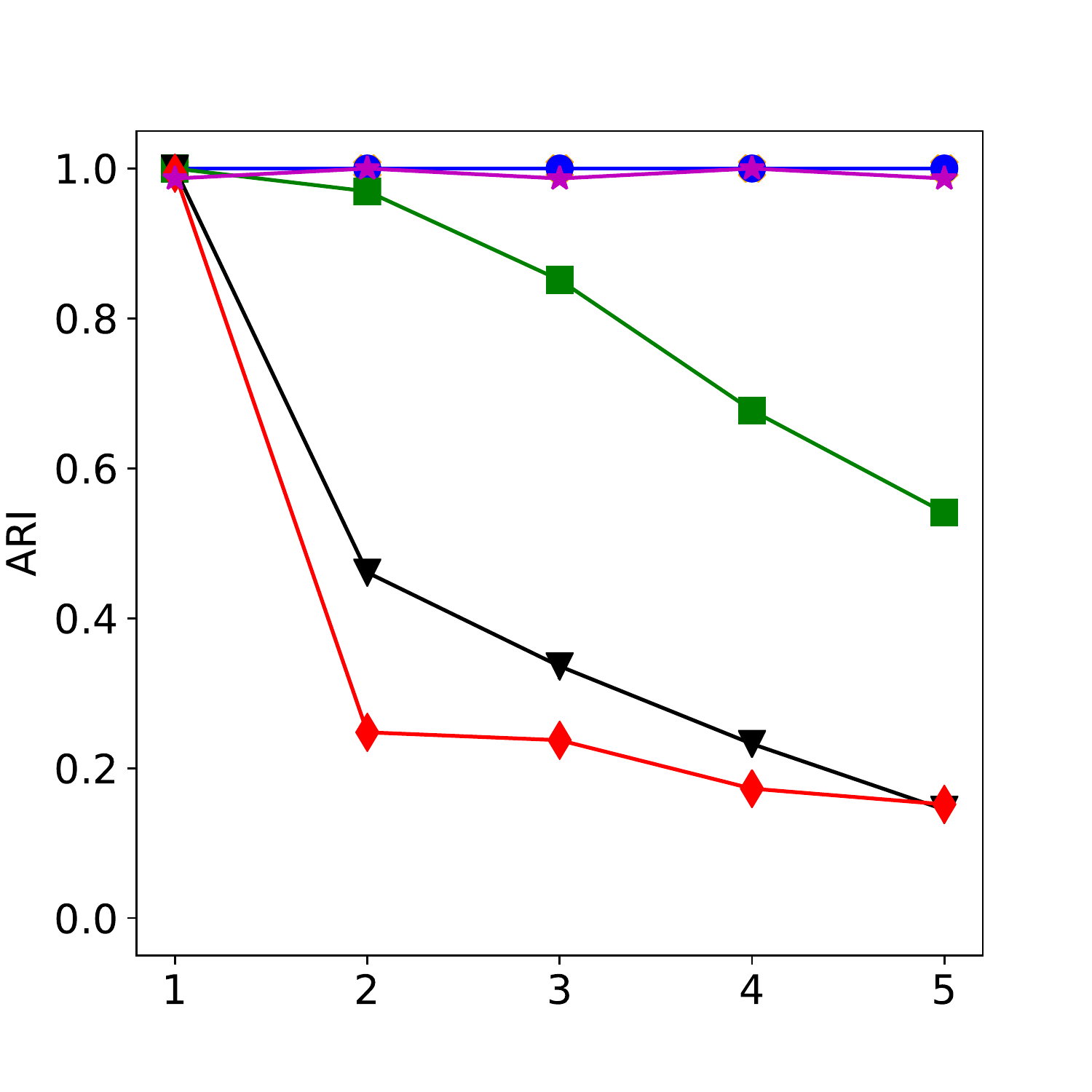}  
  \vspace{-0.3\baselineskip} 
  \caption{Case 1c: $L$}
  \label{fig:sub-L}
\end{subfigure}
\vspace{-0.3em}%
\begin{subfigure}{.45\textwidth}
  \includegraphics[width=.9\linewidth]{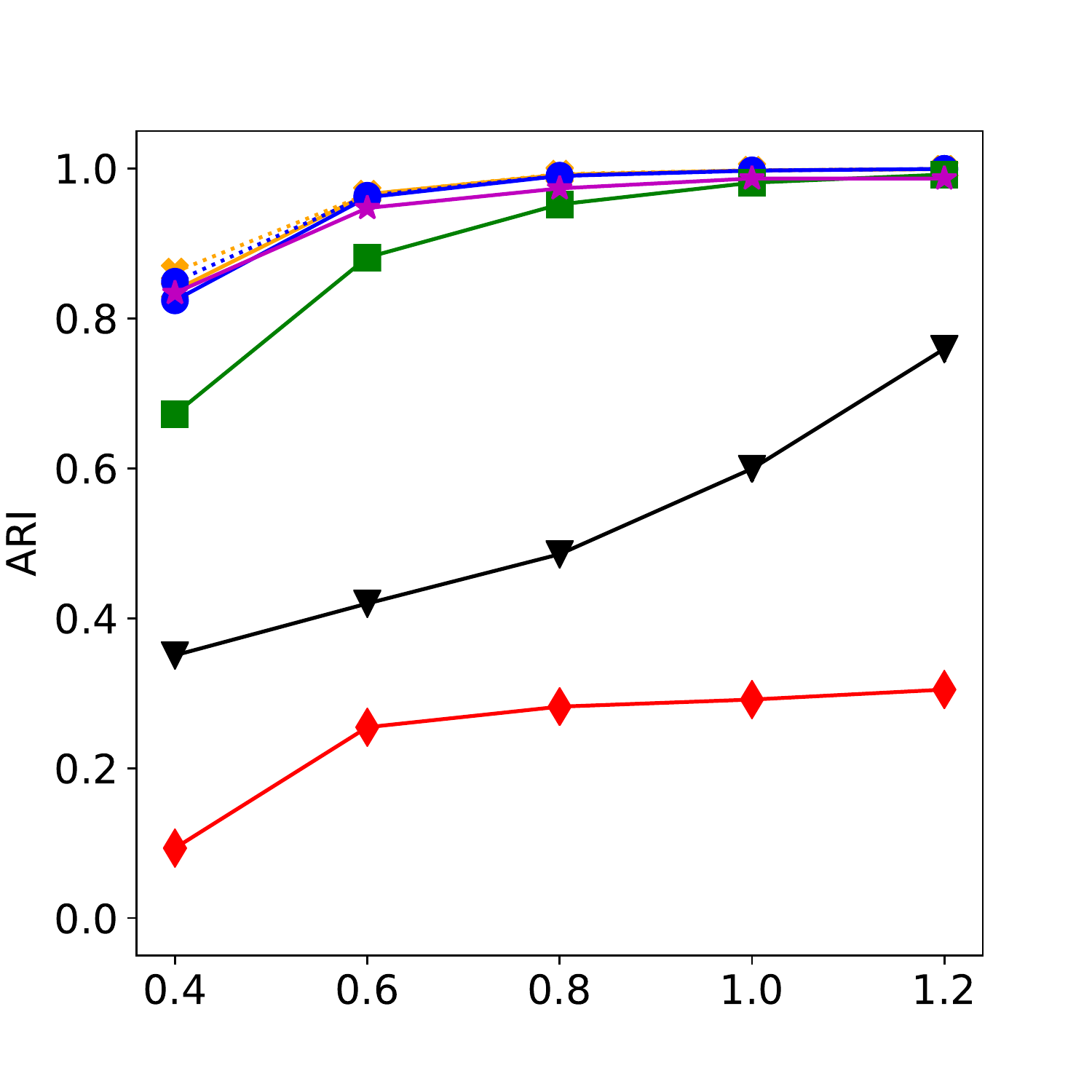} 
  \vspace{-0.3\baselineskip}  
  \caption{Case 1d: $c_\rho$}
  \label{fig:sub-rho}
\end{subfigure}
\begin{subfigure}{.45\textwidth}
  \includegraphics[width=.9\linewidth]{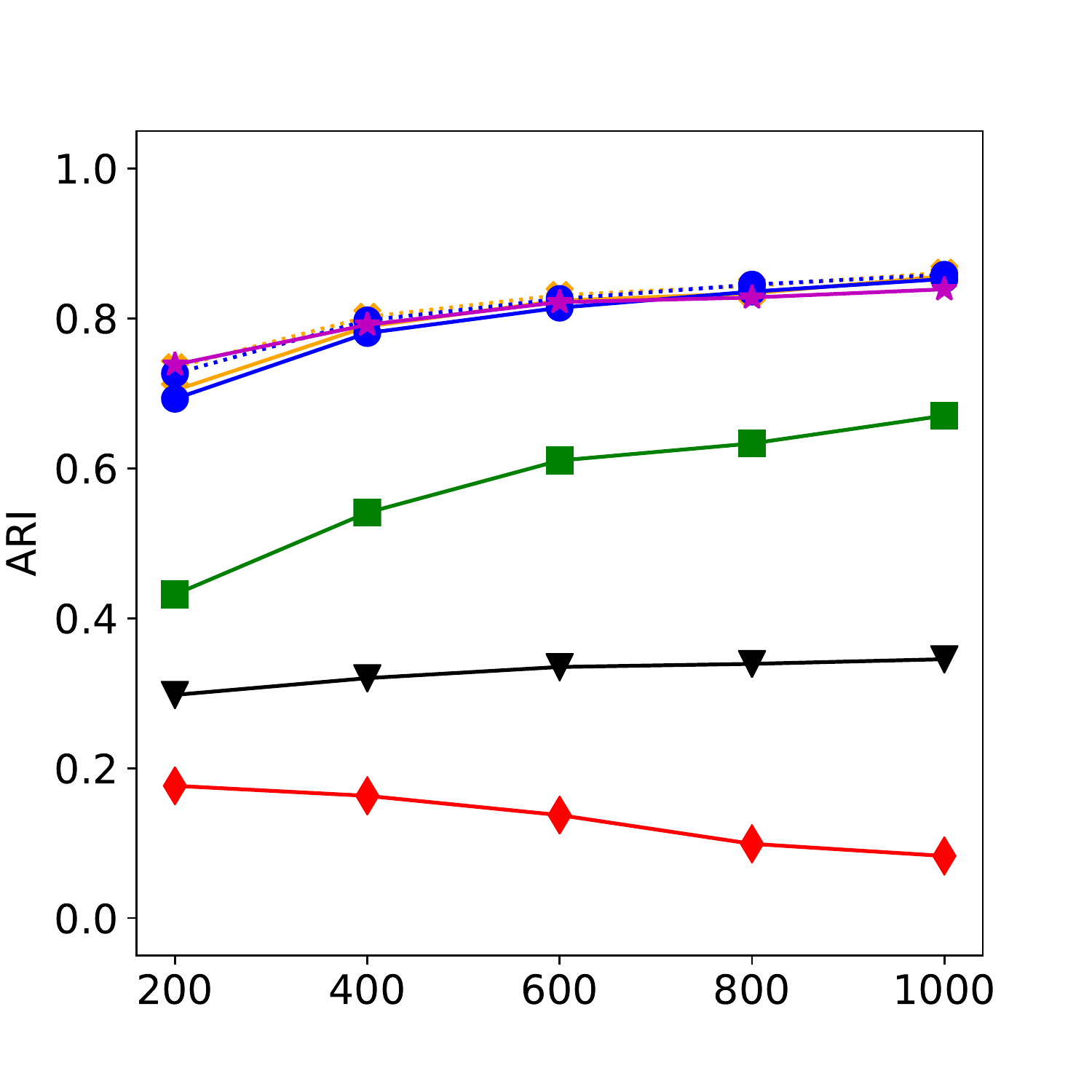} 
  \vspace{-0.3\baselineskip}  
  \caption{Case 1e: $n$}
  \label{fig:sub-n}
\end{subfigure}
\vspace{-0.2\baselineskip} 
\caption{Balanced \textsc{MPPM} case (maximum standard error in the figure is .022)}
\label{fig:ppm}
\end{figure}

Simulation results are presented in Figure~\ref{fig:ppm}. We observe that our four methods show competitive or superior performances compared with the other three consistently under Cases 1a-1e. This lends further strong support to the use of proposed methods. Moreover, the four methods yield uniformly comparable results across all the settings. Such phenomena are consistent with the discussion on the equivalence of $k$-means and GMM clustering under balanced MPPM in Section \ref{two:vs}. 

We now discuss each of the scenarios to shed more light on the performance of our methods. In Case 1a, we vary the between-community connectivity probability in the second layer while keeping all other parameters fixed. As this probability increases, the second layer becomes less informative for community detection, so the ARI decreases for all the methods. However, compared with the three existing methods, ISC and SCME are both robust to the increased noises in the second layer, thanks to the adaptive layer aggregation mechanism in the new methods. Figure \ref{fig:sub-weight} depicts the weight for the first layer selected by ISC\_gm, ISC\_km, SCME\_gm, and SCME\_km, along with the (empirical) optimal weights. It is clear that all the four methods successfully found the (nearly) optimal weights, thus being able to adaptively down-weigh the second layer when it becomes noisier. Case 1b changes the number of communities. As $K$ varies with everything else fixed, the effective information in each layer for every method tends to change. Take our methods for example. As can be verified from the optimal weight formula \eqref{eq:1}, a layer with a larger ratio of within-community to between-community connectivity probability should be weighted higher when the number of communities increases. Referring to Figure \ref{fig:sub-K}, we see that our methods provide stable and decent results due to adaptive layer aggregation, while the other three methods are comparatively sensitive to the change of $K$. In Case 1c, we fix the first layer as an informative layer and set other layers to be random noises.  Figure \ref{fig:sub-L} clearly shows that our methods detect communities effectively and are remarkably robust to added noise layers. In contrast, the three existing methods perform worse with more noise layers. Case 1d generates networks with different sparsity levels by simply scaling all the connectivity probabilities in the model. As the scaling $c_\rho$ is getting larger, the network becomes denser and more informative, so all the methods perform better as seen in Figure \ref{fig:sub-rho}. Our methods outperform Mead adj. for sparse networks, even though they are similar in dense settings. Case 1e examines the performance under different sample sizes. We notice from Figure \ref{fig:sub-n} that our methods can be advantageous for moderately small-scale networks.

\item \emph{Imbalanced {\rm MPPM} case}. For an imbalanced $A^{[L]}\sim {\rm MPPM}(p^{[L]},q^{[L]},  \overrightarrow{\pi})$, the parameter $ \overrightarrow{\pi}$ characterizing the size of each community is of primary interest. Adopting the same notation from the balanced MPPM case, we list the parameter values in Table \ref{tab:imppm}. The results are shown in Figure \ref{fig:imppm}. In all scenarios, our four methods outperform other methods. ISC\_km (ISC\_gm) and SCME\_km (SCME\_gm) are comparable. One notably different observation from the balanced MPPM case is that GMM clustering leads to superior performances than $k$-means in both ISC and SCME. This validates our arguments regarding the comparison between $k$-means and GMM in Section \ref{two:vs}: when the model deviates from balanced MPPM, GMM clustering is expected to better account for the heterogeneity of spectral embedded data. As illustrated in Figure \ref{fig:sub-2a}, the improvement of GMM over $k$-means becomes more significant as the communities get more imbalanced. In Case 2b, we keep the communities imbalanced at a given level and change the sparsity level by varying $c_\rho$. Figure \ref{fig:sub-2b} shows that SCME is slightly better than ISC in some settings. Our methods outperform Mean adj. by a larger margin for sparser networks.

\begin{table}[t]
\begin{center}

\def~{\hphantom{0}}
\caption{Imbalanced \textsc{MPPM}}{%
\begin{tabular}{cccccccc}
	Case & $n$ & $K$ & $L$ & $c_\rho$ & $\overrightarrow{p}$ & $\overrightarrow{q}$ & $\overrightarrow{\pi}$\\ 
2a & 600 & 2 & 2  & 2 & (4, 4) & (2, 3.5) & (.25-.5, .75-.5) \\
2b   &600	&2 &	2 &1.5-2.7	 &(4, 4)&(2, 3.5) & (.3, .7) \\
\end{tabular}}
\label{tab:imppm}
	
\end{center}

\end{table}

\begin{figure}[!htb]
\centering
\begin{subfigure}{.45\textwidth}
  \includegraphics[width=.9\linewidth]{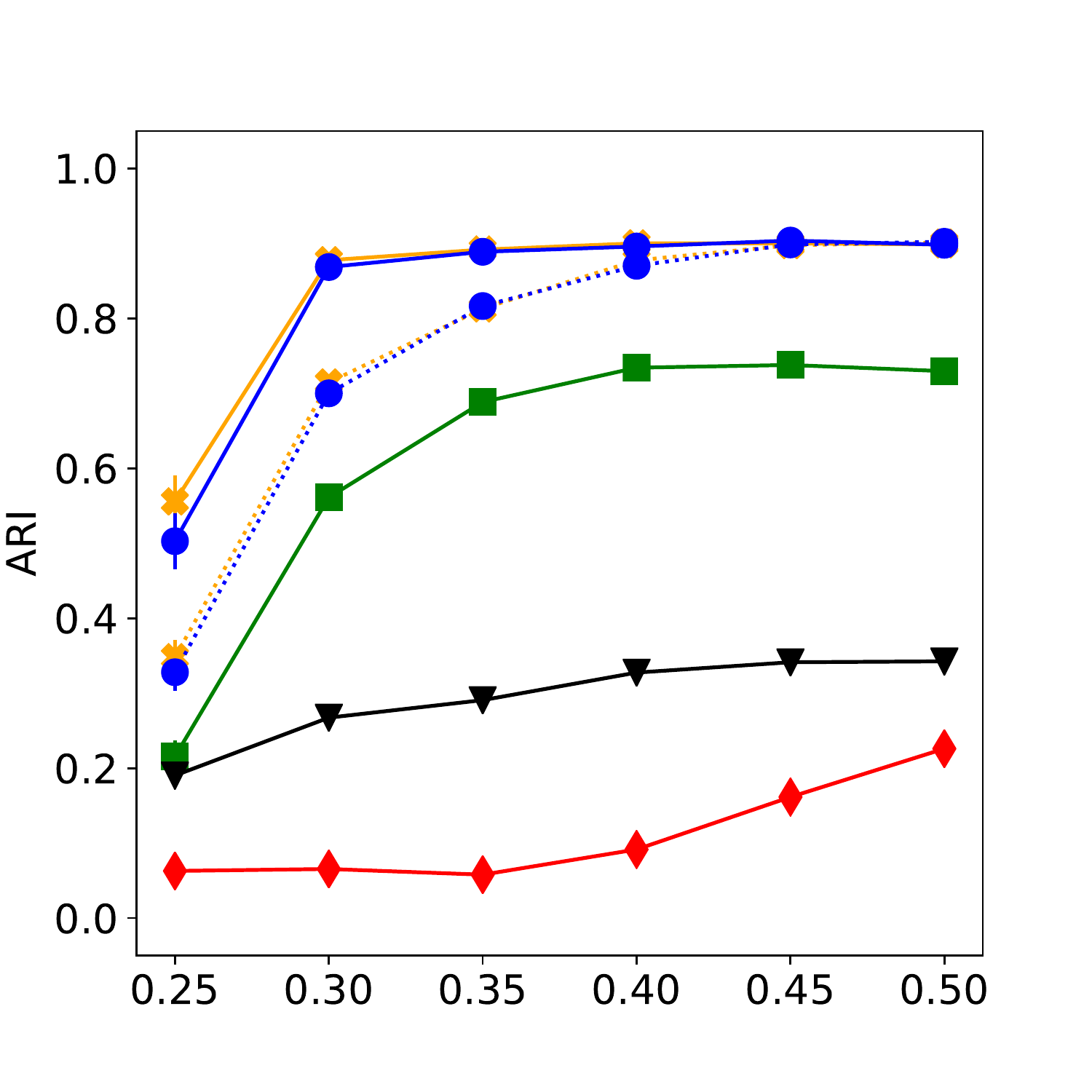}  
  \caption{Case 2a: $\pi_1$}
  \label{fig:sub-2a}
\end{subfigure}
\hspace{-2em}%
\begin{subfigure}{.45\textwidth}
  \includegraphics[width=.9\linewidth]{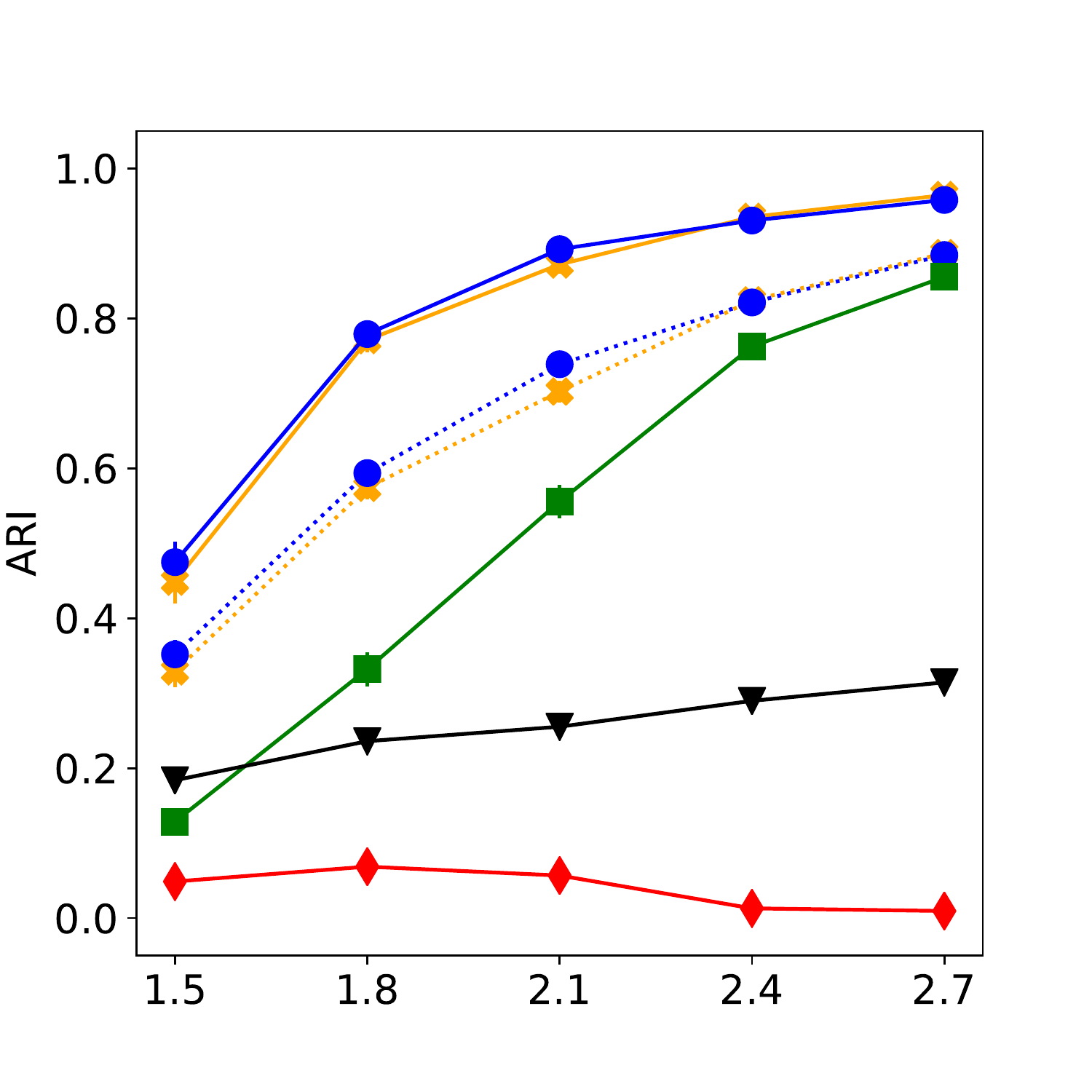}   
  \caption{Case 2b: $c_\rho$} 
  \label{fig:sub-2b}
\end{subfigure}
\vspace{-0.6\baselineskip} 
\caption{Imbalanced \textsc{MPPM} case (maximum standard error in the figure is .037)}
\label{fig:imppm}
\end{figure}

\item \emph{Balanced {\rm MSBM} case}. We set $\Omega^{[L]}=c_\rho\frac{\log n}{n}\bar{\Omega}$ in the balanced $A^{[L]}\sim\textsc{MSBM}(\Omega^{[L]},\overrightarrow{\pi})$. Table \ref{tab:sbm} summarizes the parameter values. We consider two different balanced MSBM scenarios, both varying the sparsity level by changing $c_\rho$.

\begin{table}[!htb]
\caption{Balanced \textsc{MSBM}}

\begin{center}
{%
\begin{tabular}{cccccc}
Case & $n$ & $K$ & $L$ & $c_\rho$ & $\bar{\Omega}$ \\  
\multirow{2}{*}{3a} & \multirow{2}{*}{600} & \multirow{2}{*}{2} & \multirow{2}{*}{2}
    & \multirow{2}{*}{.6-1.6} & \multirow{2}{*}{$\begin{pmatrix} 
5 & 2 \\
2 & 4 
\end{pmatrix},\begin{pmatrix} 
4 & 3.5 \\
3.5 & 5 
\end{pmatrix}$ }\\
&  & & & &  \\ \\
\multirow{3}{*}{3b} & \multirow{3}{*}{600} & \multirow{3}{*}{3} & \multirow{3}{*}{5}
    & \multirow{3}{*}{.5-3} & \multirow{3}{*}{$\begin{pmatrix} 
9 & 2 & 2 \\
2 & 2 & 2 \\
2 & 2 & 9 
\end{pmatrix},\begin{pmatrix} 
2 & 2 & 2 \\
2 & 4 & 2 \\
2 & 2 & 2
\end{pmatrix}, 2J_3, 2J_3, 2J_3$}\\ 
&  & & &  &  \\ 
\end{tabular}
}
	
\end{center}

\label{tab:sbm}

\end{table}

\begin{figure}[!htb]
\centering

\begin{subfigure}{.45\textwidth}
  \includegraphics[width=.9\linewidth]{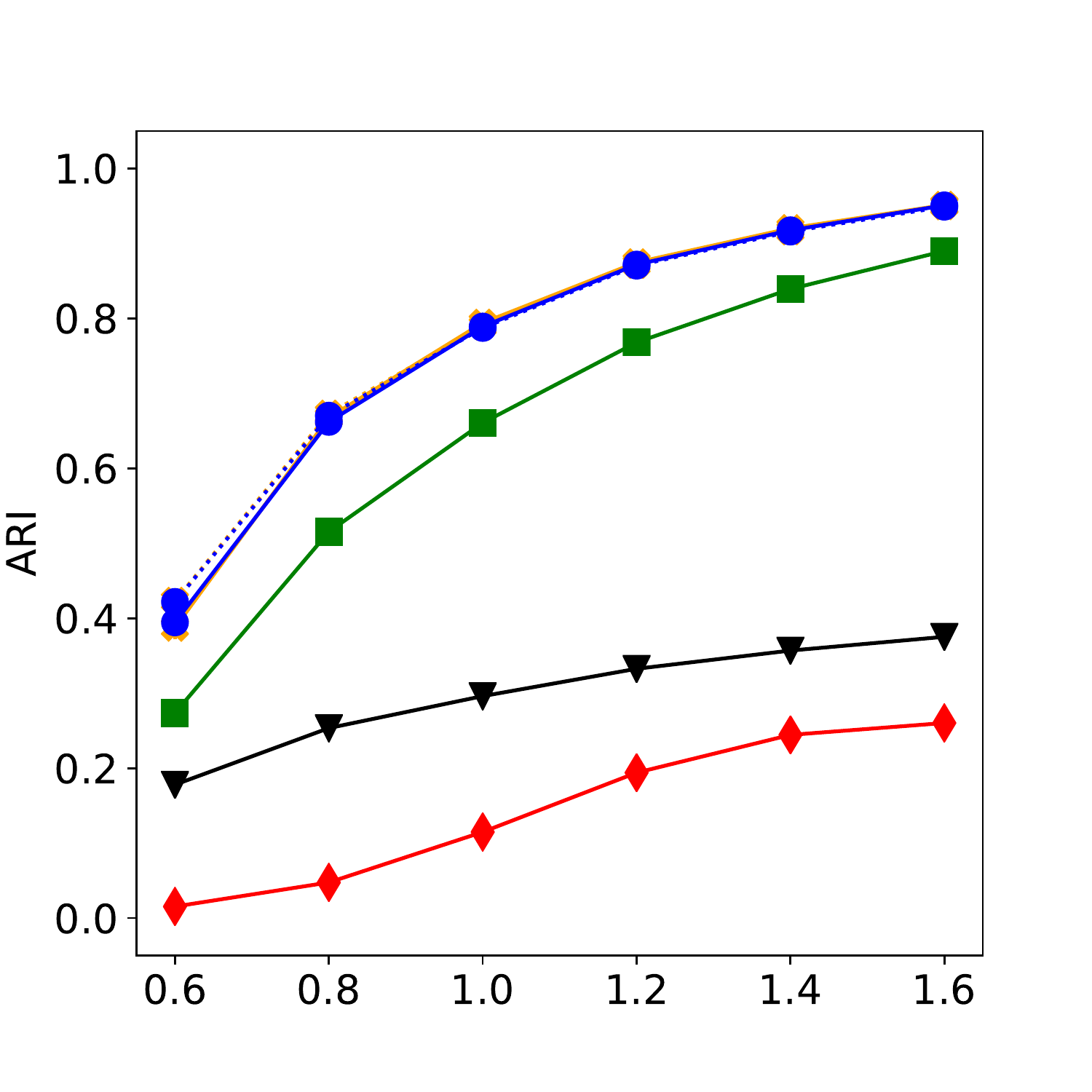}

  \caption{Case 3a: $c_\rho$}
  \label{fig:sub-3a}
\end{subfigure}
\begin{subfigure}{.45\textwidth}
  \includegraphics[width=.9\linewidth]{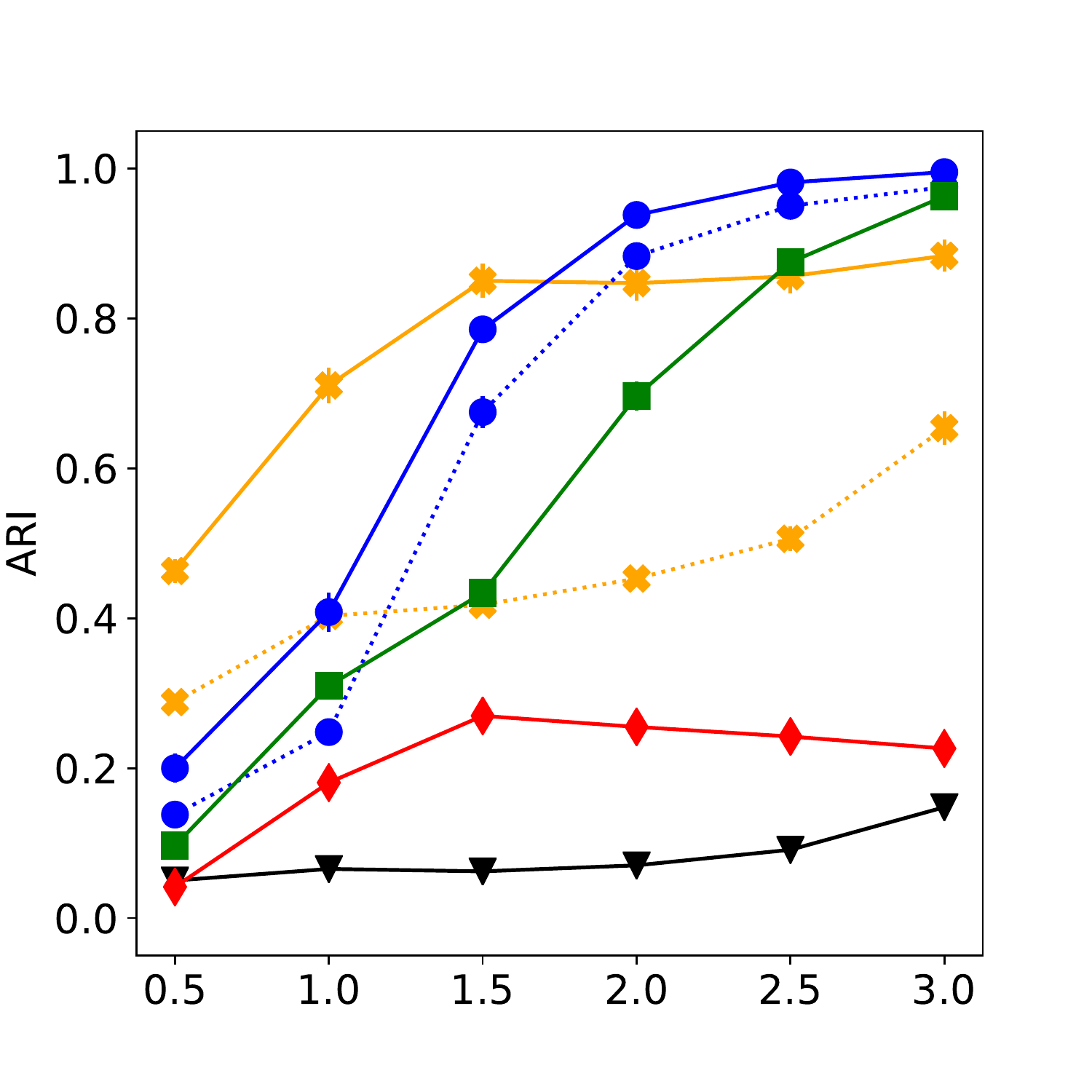}  
  \caption{Case 3b: $c_\rho$}
  \label{fig:sub-3b}
\end{subfigure}

\caption{Balanced \textsc{MSBM} case (maximum standard error in the figure is .027)}
\label{fig:sbm}
\end{figure}

In Case 3a, we consider a two-layer MSBM deviating moderately from MPPM.  As illustrated in Figure~\ref{fig:sub-3a}, the comparison results are similar to what we observe in the balanced MPPM case: our four methods perform alike and are uniformly better than the other three methods. In Case 3b, we make the MSBM adequately different from MPPM. Figure \ref{fig:sub-3b} shows the results. First, we see that for both ISC and SCME, GMM outperforms $k$-means. The same outcomes occur in the imbalanced MPPM case. We have well discussed the reason in Section \ref{two:vs}. Second, ISC\_gm (ISC\_km) works better than SCME\_gm (SCME\_km) in sparse settings while the latter wins in the dense cases. The result indicates that eigenratio is more robust to model misspecification as long as the network is sufficiently dense, while the weight update \eqref{weight:update} used in ISC is less variable when the network is sparse. We leave a thorough investigation of the distinct impact of sparsity level on ISC and SCME for future research. We also observe that SCME\_gm continues to outperform the three standard methods, and ISC\_gm does so in all the sparse settings.

\item \emph{Imbalanced {\rm MSBM} case}. We set $\Omega^{[L]}=c_\rho\frac{\log n}{n}\bar{\Omega}$ in the imbalanced $A^{[L]}\sim\textsc{MSBM}(\Omega^{[L]},\overrightarrow{\pi})$. We list the parameter values in Table \ref{tab:imsbm}, where for Case 4a we fix the second community's proportion. Results are shown in Figure \ref{fig:imsbm}. As expected, we observe again that GMM performs better than $k$-means for both ISC and SCME. Case 4a can be seen as an imbalanced continuation of Case 3b at a given $c_\rho$. Figure \ref{fig:sub-4a} illustrates the effectiveness of SCME\_gm and ISC\_gm compared to the three existing methods over a wide range of imbalanced community sizes. Case 4b is an imbalanced variant of Case 3b. We see from Figure \ref{fig:sub-4b} that the same comparison outcomes observed in Figure \ref{fig:sub-3b} for Case 3b remain valid in the imbalanced case. 

\begin{table}[!h]
\caption{Imbalanced \textsc{MSBM}.}
\begin{center}
{%
\begin{tabular}{ccccccc}
Case & $n$ & $K$ & $L$ & $c_\rho$ & $\bar{\Omega} $ & $\overrightarrow{\pi}$\\  
\multirow{2}{*}{4a} & \multirow{3}{*}{600} & \multirow{3}{*}{3} & \multirow{3}{*}{5}
    & \multirow{2}{*}{2}  & \multirow{3}{*}{$\begin{pmatrix} 
9 & 2 & 2 \\
2 & 2 & 2 \\
2 & 2 & 9 
\end{pmatrix},\begin{pmatrix} 
2 & 2 & 2 \\
2 & 4 & 2 \\
2 & 2 & 2
\end{pmatrix}, 2J_3, 2J_3, 2J_3$}&\multirow{1.5}{*}{(.17-.33, .33, .5-.34)}\\ 
\multirow{2}{*}{4b} &  & & &\multirow{2}{*}{.5-3}   &  &\multirow{2}{*}{(.25, .33, .42)} \\ &&&&&\\ \\ 
\end{tabular}
}
\end{center}
\label{tab:imsbm}
\end{table}

\begin{figure}[!htb]
\centering

\begin{subfigure}{.45\textwidth}
  \includegraphics[width=.9\linewidth]{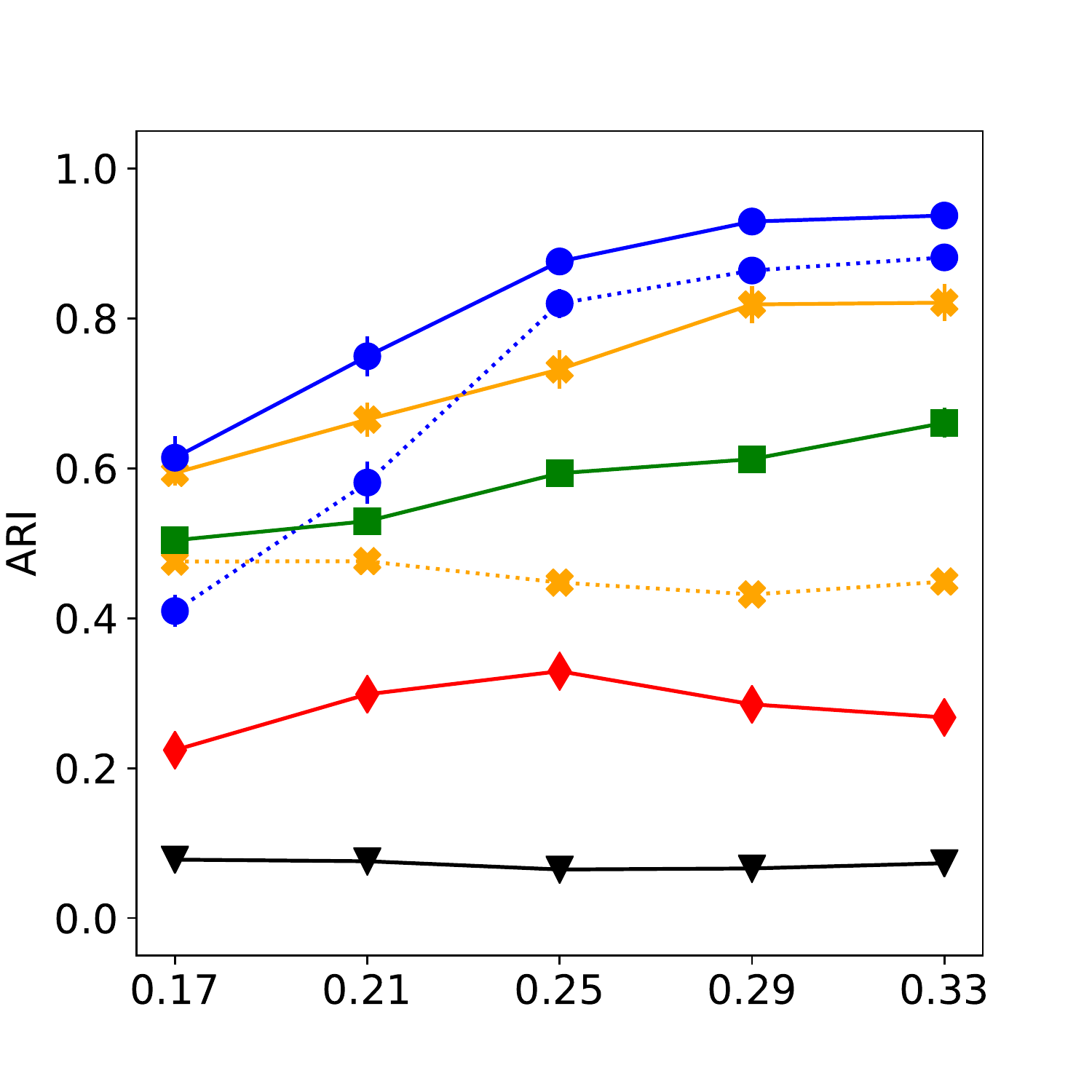}
  \vspace{-0.8\baselineskip}  
  \caption{Case 4a: $\pi_1$}
  \label{fig:sub-4a}
\end{subfigure}
\hspace{-2em}%
\begin{subfigure}{.45\textwidth}
  \includegraphics[width=.9\linewidth]{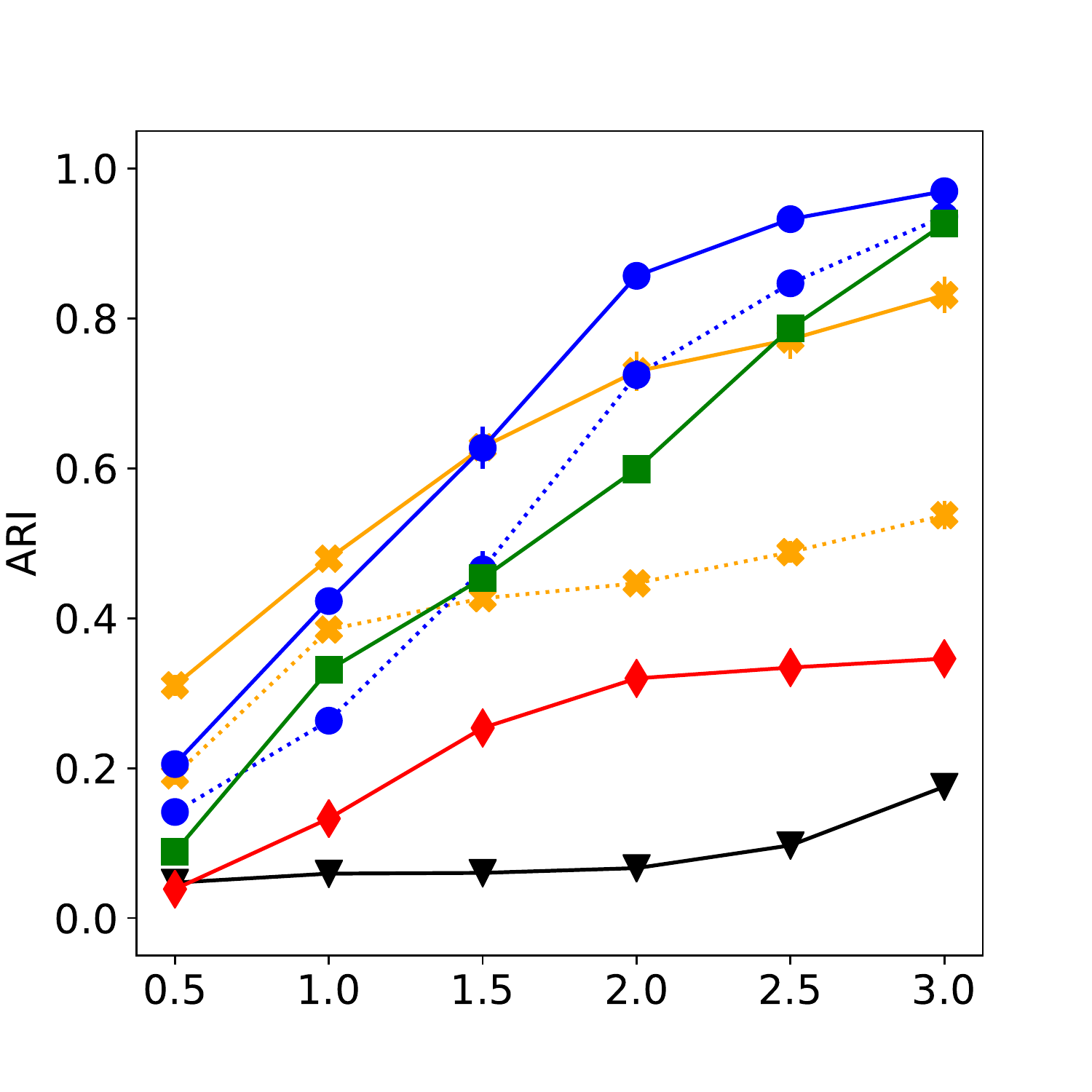}  
  \vspace{-0.8\baselineskip} 
  \caption{Case 4b: $c_\rho$}
  \label{fig:sub-4b}
\end{subfigure}
\vspace{-0.6\baselineskip} 
\caption{Imbalanced \textsc{MSBM} case (maximum standard error in the figure is .028)}
\label{fig:imsbm}
\end{figure}

\item \emph{Summary}. The simulations under balanced MPPM demonstrate that both ISC and SCME are able to select (nearly) optimal weight for layer aggregation and thus achieve superior community detection results, which is consistent with the asymptotic analysis presented in Section \ref{sec:def}. Additional simulations under general MSBM cases support our claim made in Section \ref{two:vs} that GMM is a generally better method than $k$-means in spectral clustering. Moreover, ISC\_gm and SCME\_gm are rather robust and continue to work well under general multi-layer stochastic block models. In practice, we recommend SCME\_gm for dense networks and ISC\_gm for sparse ones.

\end{enumerate}

\subsection{Real data example}\label{sec:real}

In this section, we use S\&P 1500 data to evaluate our methods further and compare them with the three existing spectral methods. S\&P 1500 index is a good representative of the US economy as it covers 90\% of the market value of US stocks. We obtain the daily adjusted close price of stocks from Yahoo! Finance \citep{ran2019} for the period from 2001-01-01 to 2019-06-30, including 4663 trading days in total. We keep only stocks with less than 50 days' missing data and forward fill the price. This leaves us with 1020 stocks. According to the newest Global Industry Classification Standard [GICS], there are in total 11 stock market sectors, of which each consists of a group of stocks that share some common economic characteristics. Therefore, we treat each sector as a community and use the sector information as the ground truth for community detection. We aim to discover the sectors or communities from stock prices. One more step of data pre-processing we did is to remove the sector ``Communication Services" due to its small size, and the sectors ``Industrials" and ``Materials" because of their similar performances during economic cycles. The final dataset contains 770 stocks from 8 sectors. 

We use the logarithmic return, a standard measure used in stock price analysis, to construct the network for stocks. Specifically, for a pair of stocks, we compute the Pearson correlation coefficient of log-returns (during a period of time) between the two stocks as the edge among them to measure their similarity. Note that the constructed network is a weighted graph whose adjacency matrix takes continuous values in $[-1,1]$. A binary network would have been easily obtained by thresholding. However, the conversion can result in a substantial loss of community information. Hence we will keep the weighted networks for community detection. We will see that our methods work well for this type of network as well.

The most straightforward idea is to use the whole time window to calculate the Pearson correlation among the stocks and create a single-layer network. However, since financial data is usually non-stationary, correlation tends to change over time. As a result, the within-community and between-community connectivity patterns may vary with time. To tackle such heterogeneity, we split the data into four time periods, according to the National Bureau of Economic Research [NBER], which are respectively recession I (2001/03-2001/11), expansion I (2001/12-2007/12), recession II (2008/01-2009/06) and expansion II (2009/07-2019/06). The intuition is that the economy cycle is a determinant of sector performance over the intermediate term. Different sectors tend to perform differently compared with the market in different phases of the economy. We then use the Pearson correlation computed within each time period to construct one layer. We end up having a four-layer network of stocks. 
\begin{table}[!h]
\caption{Community detection results for the stock market data}\begin{center}
\begin{tabular}{ccc|ccc}
	Method & ARI & Weights &Method & ARI & Weights\\  
	ISC\_gm 	&	.44 	&	[.52, .23, .07, .18]&Mean adj.	&	.37	& [.25, .25, .25, .25]\\ 
	ISC\_km	&	.35 	&	[.57, .21, .07, .15]&SpecK & .45 & - 	\\ 
	SCME\_gm	&	.65	&	[.08, .33, .01, .60]&Module alleg. & .29 & -\\ 
	SCME\_km	&	.43	&	[.08, .33, .01, .60]&&&	
\end{tabular}
\end{center}
\label{tab:sp}
\end{table}

Table~\ref{tab:sp} summarizes the community detection results of different methods. Again, GMM works better than $k$-means in both ISC and SCME. Our method SCME\_gm outperforms all the others by a large margin. ISC\_gm is also competitive compared with the three spectral methods. For a closer comparison, Figure~\ref{fig:conf} shows the confusion matrix from Mean adj. and SCME\_gm. We can clearly see the significant accuracy improvements for the consumer discretionary, financials, information technology, and real estate sectors. Table~\ref{tab:sp} also shows the weights for the four layers learned by our methods. In the appendix, we further explain the interesting implications of the weights learned by our best method SCME\_gm.

\begin{figure}[!htb]

\begin{subfigure}{.5\textwidth}
  \centering
  \includegraphics[width=1\linewidth]{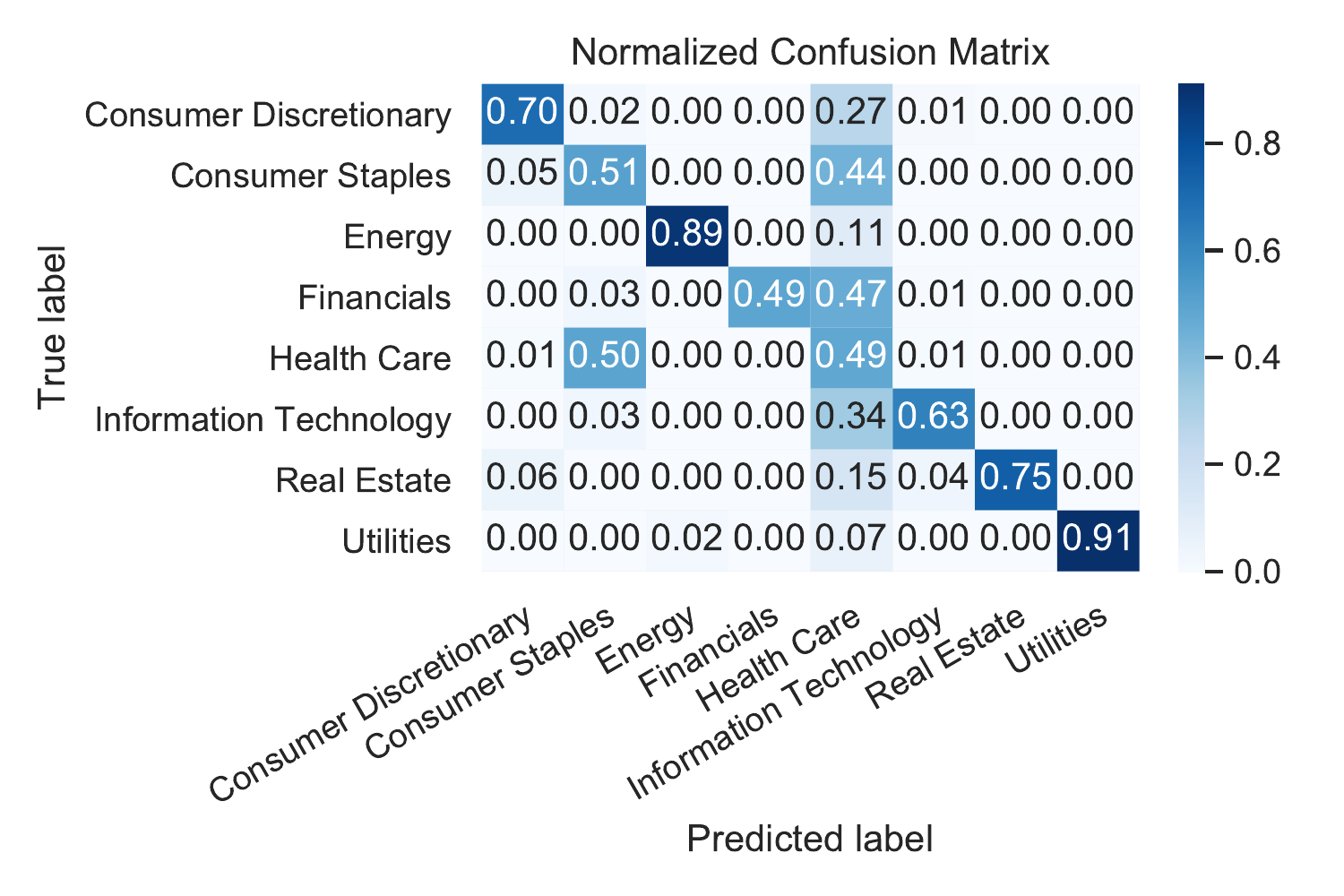}  
  \caption{Mean adj.}
  \label{fig:sub-all}
\end{subfigure}
\hfill
\begin{subfigure}{.5\textwidth}
  \centering
  \includegraphics[width=1\linewidth]{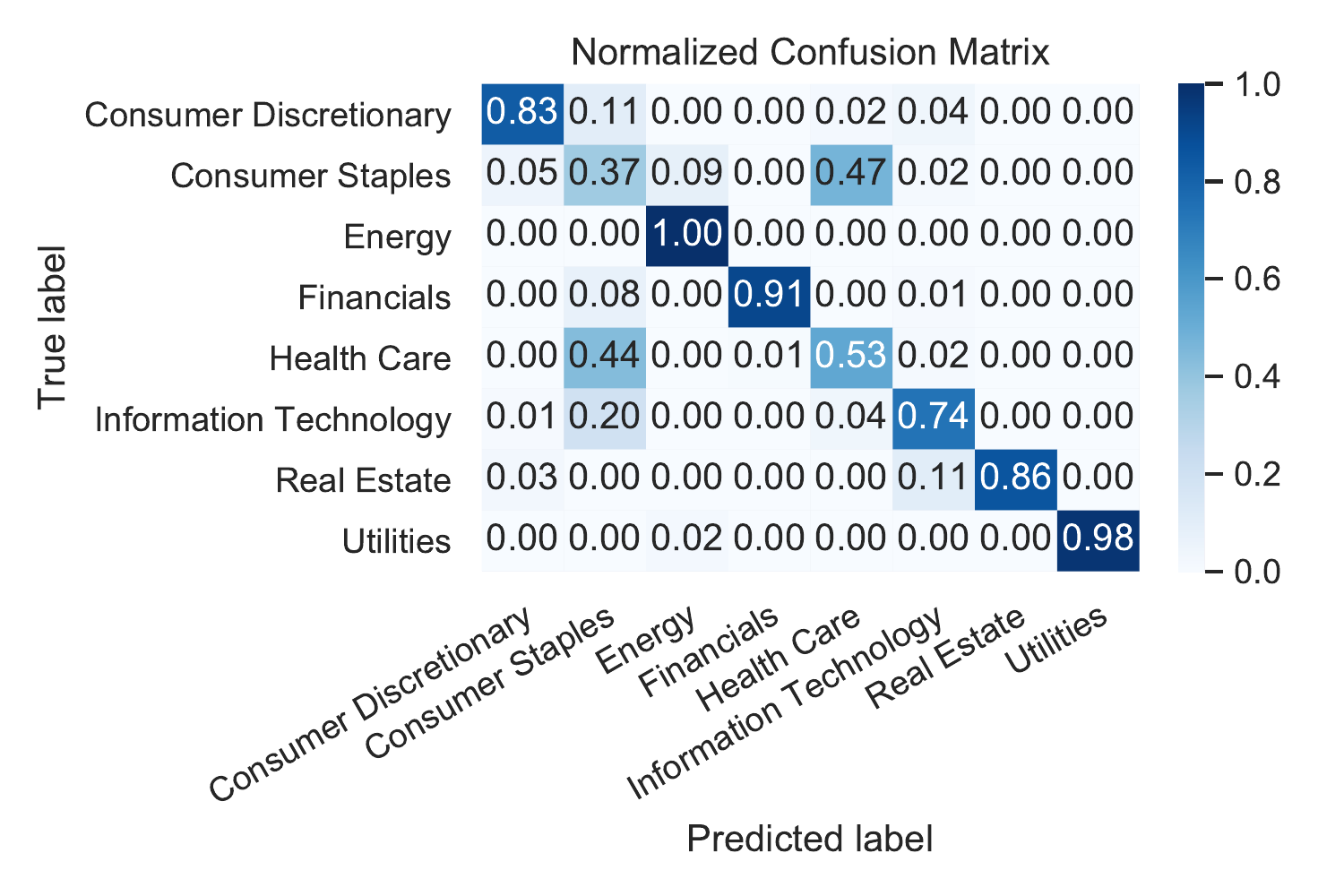}  
  \caption{SCME\_gm}
  \label{fig:sub-wam}
\end{subfigure}

\caption{Normalized confusion matrix of community detection results.}
\label{fig:conf}

\end{figure}

\section{Conclusions and Discussions}\label{dis:sess}

This paper presents a thorough study of the community detection problem for multi-layer networks via spectral clustering with adaptive layer aggregation. We develop two eigensystem-based methods for adaptive selection of the layer weight, and further provide a detailed discussion on the superiority of GMM clustering to $k$-means in the spectral clustering. Extensive numerical experiments demonstrate the impressive community detection performance of our algorithms. The proposed ISC and SCME algorithms are implemented on GitHub (\url{https://github.com/sihanhuang/Multi-layer-Network}).
Several important directions are left open for future research. 
\begin{itemize}
\item \textcolor{black}{The theory and methods are primarily developed for multi-layer networks with assortative communities structures. It is interesting to further study how our layer aggregation methods can be used to discover both assortative and dis-assortative community structures in multi-layer networks of a mixed structure, especially given the fact that simply adding the adjacency matrices from assortative layers and dis-assortative layers can potentially wash out the community signals in the data. We have applied the ISC algorithm (without any modification) to balanced MPPM of a mixed structure in some new simulations (see Section 2 in the appendix for details). The preliminary results show that like in assortative cases of Section \ref{sec:sim}, our algorithm is able to find the (nearly) optimal weight for layer aggregation. On the other hand, we expect that the SCME algorithm cannot be directly applied. This is because the algorithm is looking for maximum eigenratio between the $K$th and $(K+1)$th eigenvalues, while aggregating assortative layers (with positive weights) and dis-assortative layers (with negative weights) can potentially reduce the number of informative eigenvalues. Some modifications such as taking into account the first $K$ eigenratios $|\lambda_1|/|\lambda_2|,\ldots, |\lambda_K|/|\lambda_{K+1}|$ are needed to make it work. We leave a full investigation of detecting mixed community structures in a separate project.}
\item The current paper focuses on spectral clustering based on adjacency matrices. It is well known that variations of spectral clustering using matrices such as normalized or regularized graph Laplacian \citep{rohe2011spectral, qin2013regularized, amini2013pseudo, sarkar2015role, joseph2016impact, le2017concentration} can be useful for community detection under different scenarios. It is of great interest and feasible to generalize our framework to incorporate various spectral clustering forms. 
\item The effectiveness of our framework under general MSBM was illustrated by a wide array of synthetic and real datasets. A generalization of our asymptotic analysis to general MSBM will enable us to refine the proposed adaptive layer aggregation methods to achieve even higher accuracy. Such a generalization will rely on more sophisticated random matrix analysis and is considered as an important yet challenging future work. 
\item Our work assumes the number of communities $K$ is given, which may be unknown in certain applications. Some recent efforts for estimating $K$ in single-layer networks include \citet{le2015estimating, lei2016goodness, saldana2017many, wang2017likelihood}, among others. Extending our framework along these lines is interesting and doable. 
\end{itemize}

\section*{Supplementary Materials}
\begin{enumerate}
	\item Appendix: Additional simulation results under balanced MPPM, more real data analysis results, and all proofs. (pdf)
	\item Package: Python package for implementing the proposed algorithms. \url{https://github.com/sihanhuang/Multi-layer-Network}
	\item Code: Python codes for reproducing the simulation and real data analysis results. (zip)

\end{enumerate}
\section*{Acknowledgement}
We thank the editor, the AE, and the referees for their insightful comments which greatly improved the paper. Haolei Weng was partially supported by NSF Grant DMS-1915099. Yang Feng was partially supported by NSF CAREER Grant DMS-2013789 and NIH Grant 1R21AG074205-01. 
\begin{singlespace}\setstretch{0.75}


\bibliographystyle{JASA}

\bibliography{reference}
\end{singlespace}

\section{Appendix}
\beginsupplement
\subsection{Organization}

This appendix is organized as follows. Section \ref{simu:supp} shows the additional simulation results under balanced MPPM. Section \ref{rd:supp} provides more real data analysis results. Section \ref{sec:proof:prop1} proves Proposition \ref{prop:one}. Section \ref{sec:notpre} introduces some notations and preliminaries that will be extensively used in the latter proofs. Section \ref{sec:thm1} proves Theorem \ref{thm:error}.  Section \ref{sec:prop2} proves Proposition \ref{thm:main2}. Section \ref{sec:prop3} contains the proof of Proposition \ref{gaussian:embed}.

\subsection{Additional Simulations under Balanced MPPM}\label{simu:supp}
\label{more:bMPPM}

\begin{enumerate}

\item \emph{Accuracy under balanced {\rm MPPM}}. To further check the performance of our proposed algorithms, we computed accuracy, which is defined as one minus the mis-clustering error, instead of ARI under the same setting of Case 1a in Section \ref{sec:sim} of the main text. As is clear from Figure \ref{fig:1a-acc}, even with a relative small sample $n=600$, the accuracy of our algorithms matches well with the theoretical optimal values, as well as the empirical optimal values calculated by a grid search for the weights. These results indicate that for balanced MPPM, our algorithms are effective in aggregating the layers to obtain (nearly) optimal mis-clustering error when the network size is reasonably large.

\begin{figure}[!htb]
\centering
\hspace{-3.cm}
\begin{subfigure}{.45\textwidth}
  \includegraphics[width=1.3\linewidth]{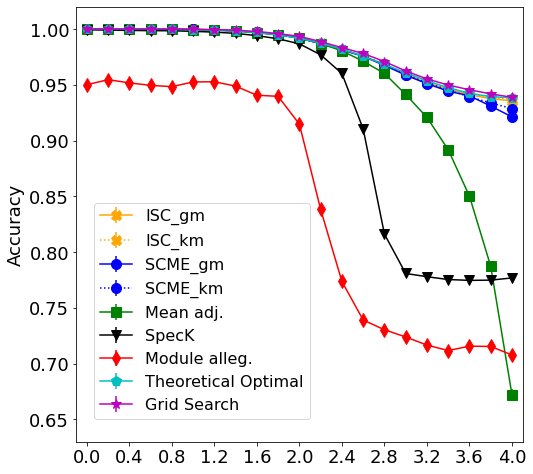}  
  \caption*{\hspace{1.5cm} Case 1a: $\overrightarrow{q}$}
  \label{fig:sub-1a_acc}
\end{subfigure}
\caption{``Theoretical Optimal" refers to the asymptotic accuracy calculated according to Theorem \ref{thm:error} in the main article, under the optimal weight derived in Proposition \ref{prop:one} of the main article.}
\label{fig:1a-acc}
\end{figure}

\item \emph{Balanced {\rm MPPM} with both assortative and dis-assortative community structures}. We conduct some preliminary simulations to check how our algorithms perform under balanced MPPM which has both assortative and dis-assortative community structures in the layers. The parameter values are specified in Table \ref{tab:dis-mppm}. Given that SCME may not be directly applicable to the mixed structure case, we focus on the performance of ISC. Referring to the results in Figure \ref{fig:mixed:new}, we can see that like in the assortative cases, our algorithm not only effectively integrates the information across assortative and dis-assortative layers for improved community detection results (in comparison to the ones from single layers), but also can allocate the (nearly) optimal aggregation weight to obtain the (nearly) optimal errors. It has been well known that simply adding the adjacency matrices from assortative layer and dis-assortative layer can potentially wash out the community signals in the data. The key success of our aggregation approach is to adaptively choose the appropriate weight for each layer to reveal the signals. In particular, as illustrated in Figure \ref{fig:mixed:new}, assigning negative weights to dis-assortative layers and positive weights to assortative layers can still effectively aggregate the signals across different layers. 

\begin{table}[htb]
\begin{center}

\def~{\hphantom{0}}
\caption{Balanced \textsc{MPPM} of a mixed community structure}{%
\begin{tabular}{cccccc}
$n$ & $K$ & $L$ & $c_\rho$ & $\overrightarrow{p}$ & $\overrightarrow{q}$ \\ 
6000 & 2 & 2  & 0.4 & (2-6, 1) & (1, 4) 
\end{tabular}}
\label{tab:dis-mppm}
	
\end{center}
\end{table}

\begin{figure}[!t]
\centering
\hspace{-3.cm}
\begin{subfigure}{.45\textwidth}
 \caption*{\hspace{2.5cm} ARI plot}
  \includegraphics[width=1.3\linewidth]{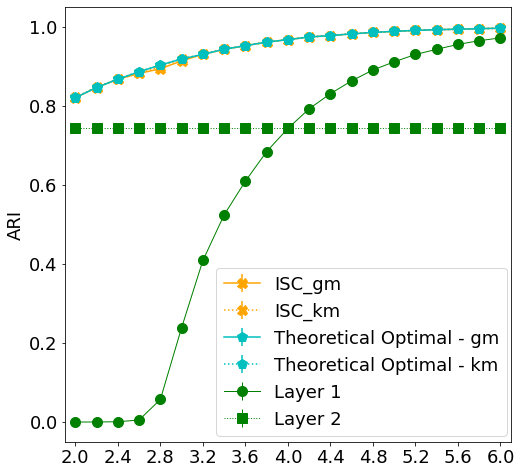}  
  \vspace{-1.1\baselineskip} 
  \label{fig:sub-km}
\end{subfigure}
\\
\vspace{0.5cm}
\begin{subfigure}{.4\textwidth}
  \caption*{\hspace{0.4cm} Weight plot for the first layer} 
  \includegraphics[scale=0.3]{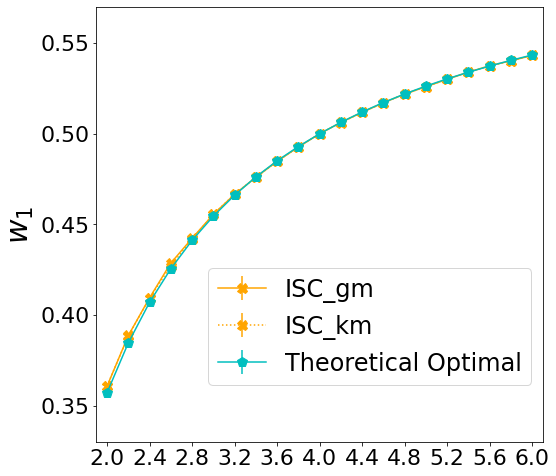}  
  \vspace{-0.5\baselineskip} 
  \label{fig:sub-gm}
\end{subfigure}
\begin{subfigure}{.4\textwidth}
  \caption*{\hspace{0.4cm} Weight plot for the second layer} 
  \includegraphics[scale=0.3]{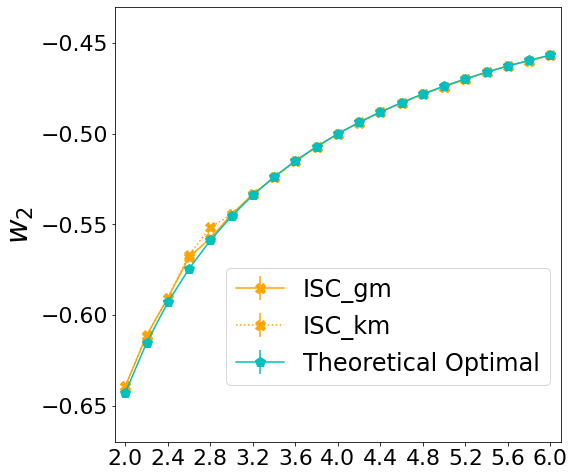}  
  \vspace{-0.5\baselineskip} 
  \label{fig:sub-true}
\end{subfigure}
\vspace{0.55\baselineskip} 
\caption{The x-axis in the three plots represents the varying parameter values specified in Table \ref{tab:dis-mppm}. Upper panel: ``Theoretical Optimal - gm (-km)" uses the optimal weight formula from Proposition \ref{prop:one} of the main article and GMM ($k$ means) clustering; ``Layer 1" only uses the first layer and $k$-means in spectral clustering; ``Layer 2" only uses the second layer and $k$-means in spectral clustering. Lower panel: ``Theoretical Optimal" denotes the optimal weight derived in Proposition \ref{prop:one} of the main article.}
\label{fig:mixed:new}
\end{figure}

\end{enumerate}


\subsection{Additional Real Data Analysis}\label{rd:supp}

For the real data example from the main text, Table~\ref{tab:sp} in Section \ref{sec:real} also shows the weights for the four layers learned by our methods. The best method SCME\_gm places most weights on the second and fourth layers corresponding to two periods of expansions. It indicates that expansions play a much more important role compared with recessions for distinguishing sectors. This is in accordance with our knowledge that in recessions, different sectors collapse in a comparatively similar way, while they will grow in different patterns during expansions. To illustrate this point further, we plot the adjacency matrices of correlations for the four-time periods in Figure \ref{fig:adj}. As is evident, there is a larger difference between within-sector and between-sector correlations in expansion I (II) than in recession I (II).

\begin{figure}[!htb]
	\centering
	\includegraphics[width=.65\linewidth]{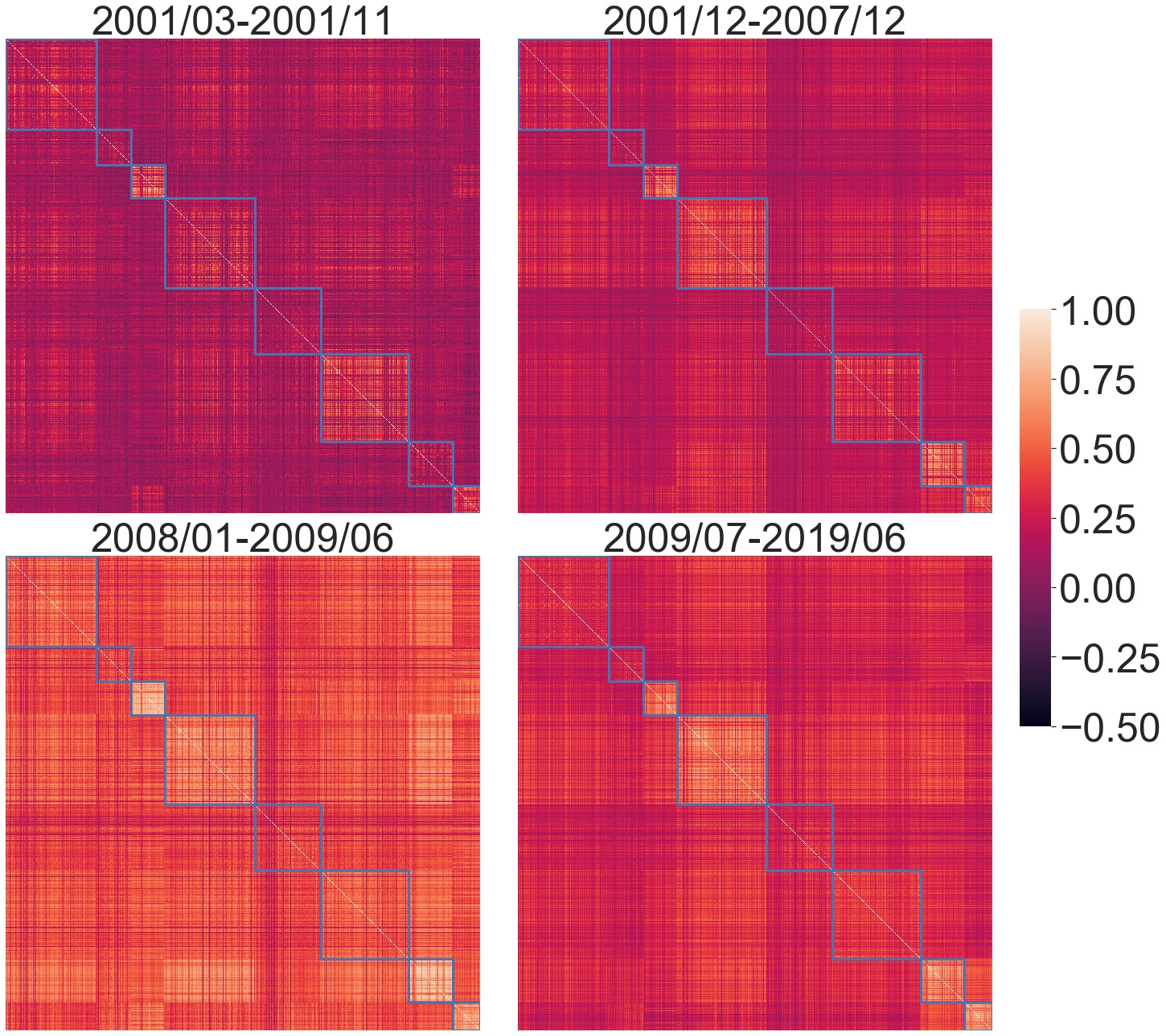}
	\caption{The adjacency matrix of correlations for different time periods. One marked block corresponds to one sector. The sectors are in the same order as in Figure~\ref{fig:conf} of the main text.}
	\label{fig:adj}
\end{figure}

\subsection{Proof of Proposition \ref{prop:one}} \label{sec:proof:prop1}

\begin{proof}

Denote $b_{\ell}=p^{(\ell)}_n-q^{(\ell)}_n, c_{\ell}=p^{(\ell)}_n(1-p^{(\ell)}_n)+(K-1)q^{(\ell)}_n(1-q^{(\ell)}_n), \ell\in [L]$. The proof is a direct application of Cauchy-Schwarz inequality:
\begin{align*}
\tau^{\overrightarrow{w}}_n= \frac{n(\sum_{\ell=1}^Lw_{\ell}\sqrt{c_{\ell}}\cdot \frac{b_{\ell}}{\sqrt{c_{\ell}}}\big)^2}{\sum_{\ell=1}^Lw_{\ell}^2c_{\ell}}\leq \frac{n\sum_{\ell=1}^Lw_{\ell}^2c_{\ell}\cdot \sum_{\ell=1}^L \frac{b_{\ell}^2}{c_{\ell}}}{\sum_{\ell=1}^Lw_{\ell}^2c_{\ell}}=n\sum_{\ell=1}^L \frac{b_{\ell}^2}{c_{\ell}},
\end{align*}
The above inequality becomes equality if and only if $w_{\ell}\sqrt{c_{\ell}} \propto  \frac{b_{\ell}}{\sqrt{c_{\ell}}}, \ell \in [L]$.

\end{proof}

\subsection{Notations and Preliminaries}\label{sec:notpre}

For a balanced multi-layer planted partition model $A^{[L]}\sim$\textsc{MPPM}$(p^{[L]},q^{[L]}, \overrightarrow{\pi})$, the weighted adjacency matrix is $A^{\overrightarrow{w}}=\sum_{\ell=1}^Lw_{\ell}A^{(\ell)}$. For simplicity, we will drop the subscript or superscript $\overrightarrow{w}$ in various notations such as $A^{\overrightarrow{w}}$ and $\hat{\overrightarrow{c}}_{\overrightarrow{w}}$ whenever it does not cause any confusion. We may also omit the dependency of notations on $n$ whenever it is clear from the context. Define $\mathcal{B}_r=\{x: \|x\|_2\leq r\}$; $(a)_+=\max\{a,0\}; (a)_{-}=\max\{0,-a\}; {\rm sign}(a)$ is the sign of $a$; $J_k$ denotes the $k\times k$ matrix with all entries being one; $\{e_i\}_{i=1}^n$ is the standard basis in $\mathcal{R}^n$; $[m]=\{1,2,\ldots, m\}$ for a positive integer $m$; $\|\cdot\|_F$ denotes the Frobenius norm; $\|\cdot\|_2$ is the spectral norm; $\alpha_n\lesssim \beta_n$ means $\alpha_n\leq C\beta_n$ for some constant $C>0$ and all $n$; $\alpha_n\approx \beta_n$ denotes $\lim_{n\rightarrow \infty}\alpha_n/\beta_n=1$; $\alpha_n \propto \beta_n$ means $\alpha_n,\beta_n$ are of the same order; ${\rm diag}(a_1,a_2,\ldots, a_n)$ denotes a diagonal matrix with diagonal entries $a_1,\ldots, a_n$; $\tau^{\overrightarrow{w}}_{n}$ is defined in \eqref{finite:snr} from the main article and $\tau^{\overrightarrow{w}}_{\infty}=\lim_{n\rightarrow \infty} \tau^{\overrightarrow{w}}_{n}$. Note that the size of each community concentrates tightly around $\frac{n}{K}$. It is sometimes more convenient to consider an asymptotically equivalent balanced planted partition model where the community labels $\overrightarrow{c}$ are uniformly drawn such that each community size equals $\frac{n}{K}$ \citep{avrachenkov2015spectral, abbe2015exact, mossel2015consistency}. To streamline the overall proving process, we will adopt such a version of planted partition model throughout the appendix.

\subsection{Proof of Theorem \ref{thm:error}}\label{sec:thm1}

The proof relies on the asymptotic normality of spectral embedding in Proposition \ref{gaussian:embed} of the main article. Recall that $U\in \mathcal{R}^{n\times K}$ is the eigenvector matrix and $\mathcal{O}\in \mathcal{R}^{K\times K}$ is the orthogonal matrix in Proposition \ref{gaussian:embed}. The rows of $U\in \mathcal{R}^{n\times K}$ are the input data points for the $k$-means in spectral clustering. Since $k$-means is invariant under scaling and orthogonal transformation, it is equivalent to solving the following optimization problem
\begin{align*}
\min_{\{c_i\}_{i=1}^n}\min_{\{\xi_k\}_{k=1}^K}\sum_{i=1}^n\sum_{k=1}^K \|x_i-\xi_{c_i}\|_2^2,
\end{align*}
where $x_i\equiv \sqrt{n}\mathcal{O}U^Te_i$ and $c_i\in [K]$. Denote the estimated community labels and centers by $\hat{\overrightarrow{c}}=(\hat{c}_1, \ldots, \hat{c}_n)^T$ and $\{\hat{\xi}_k\}_{k=1}^K$ respectively. For a permutation $\delta: [K]\rightarrow [K]$, define
\[
r^{\delta}(\hat{\overrightarrow{c}})\equiv \frac{1}{n}\sum_{i=1}^n 1(\delta(\hat{c}_i)\neq c_i).
\]
The mis-clustering error can be then written as $r(\hat{\overrightarrow{c}})=\inf_{\delta}r^{\delta}(\hat{\overrightarrow{c}})$. We first analyze $E[r^{\delta}(\hat{\overrightarrow{c}})]$ for a given permutation $\delta$. Conditioning on the community labels $\overrightarrow{c}$, we have that $\forall \epsilon>0$,
\begin{align*}
&\hspace{0.6cm} E[r^{\delta}(\hat{\overrightarrow{c}})] \\
&= 1-\frac{1}{n}\sum_{k=1}^K\sum_{c_i=k}{\rm pr}(\hat{c}_i=\delta^{-1}(k)) \leq 1-\frac{1}{n}\sum_{k=1}^K\sum_{c_i=k}{\rm pr}\Big(\|x_i-\hat{\xi}_{\delta^{-1}(k)}\|_2<\min_{j\neq  \delta^{-1}(k)}\|x_i-\hat{\xi}_j\|_2\Big) \\
& \leq 1-\frac{1}{n}\sum_{k=1}^K\sum_{c_i=k} {\rm pr}\Big(\|x_i-\vartheta_{\delta^{-1}(k)}\|_2<\min_{j\neq  \delta^{-1}(k)}\|x_i-\vartheta_j\|_2-3\epsilon,~ \max_{j\in [K]}\|\hat{\xi}_j-\vartheta_j\|_2\leq \epsilon\Big) \\
& \leq 1-\frac{1}{n}\sum_{k=1}^K\sum_{c_i=k} {\rm pr}\Big(\|x_i-\vartheta_{\delta^{-1}(k)}\|_2<\min_{j\neq  \delta^{-1}(k)}\|x_i-\vartheta_j\|_2-3\epsilon\Big)+ {\rm pr}\Big(\max_{j\in [K]}\|\hat{\xi}_j-\vartheta_j\|_2> \epsilon\Big) 
\end{align*}
Here, $\{\vartheta_j\}_{j=1}^K$ denote the limits that the estimated centers $\{\hat{\xi}_k\}_{k=1}^K$ converge to as shown in Lemma \ref{centerk}. With the exchangeability of $\{x_i\}_{c_i=k}$ we can proceed from the last upper bound to obtain that marginally
\begin{align*}
E[r^{\delta}(\hat{\overrightarrow{c}})]  \leq &1-\frac{1}{K}\sum_{k=1}^K {\rm pr}\Big(\|x_i-\vartheta_{\delta^{-1}(k)}\|_2<\min_{j\neq  \delta^{-1}(k)}\|x_i-\vartheta_j\|_2-3\epsilon\mid c_i=k\Big) \\
&\hspace{0.3cm} + {\rm pr}\Big(\max_{j\in [K]}\|\hat{\xi}_j-\vartheta_j\|_2> \epsilon\Big).
\end{align*}
According to Proposition \ref{gaussian:embed} and Lemma \ref{centerk}, the above result implies that $\forall \epsilon>0$
\begin{align*}
\limsup_{n\rightarrow \infty} E[r^{\delta}(\hat{\overrightarrow{c}})] \leq 1-\frac{1}{K}\sum_{k=1}^K  {\rm pr} \Big(\|z_k-\vartheta_{\delta^{-1}(k)}\|_2<\min_{j\neq \delta^{-1}(k)}\|z_k-\vartheta_j\|_2-3\epsilon\Big), 
\end{align*}
where $z_k\sim \mathcal{N}(\mu^{(k)},\Theta)$ is the Gaussian distribution specified in Proposition \ref{gaussian:embed}. Letting $\epsilon\rightarrow 0$ on both sides of the above inequality gives
\begin{align*}
\limsup_{n\rightarrow \infty} E[r^{\delta}(\hat{\overrightarrow{c}})] &\leq 1-\frac{1}{K}\sum_{k=1}^K {\rm pr}\Big(\|z_k-\vartheta_{\delta^{-1}(k)}\|_2 \leq \min_{j\neq \delta^{-1}(k)}\|z_k-\vartheta_j\|_2\Big) \\
&=1-\frac{1}{K}\sum_{k=1}^K {\rm pr}\Big(\cap_{j\neq \delta^{-1}(k)}\{z^T_k(\vartheta_{\delta^{-1}(k)}-\vartheta_j)\geq 0\}\Big)
\end{align*}
In a similar way, we can derive a lower bound. 
\begin{align*}
&\hspace{0.6cm} E[r^{\delta}(\hat{\overrightarrow{c}})] \\
&\geq 1-\frac{1}{n}\sum_{k=1}^K\sum_{c_i=k}  {\rm pr}\Big(\|x_i-\vartheta_{\delta^{-1}(k)}\|_2<\min_{j\neq  \delta^{-1}(k)}\|x_i-\vartheta_j\|_2+3\epsilon~\mbox{or}~\max_{j\in [K]}\|\hat{\xi}_j-\vartheta_j\|_2> \epsilon\Big) \\
& \geq 1-\frac{1}{K}\sum_{k=1}^K {\rm pr} \Big(\|x_{i}-\vartheta_{\delta^{-1}(k)}\|_2<\min_{j\neq  \delta^{-1}(k)}\|x_{i}-\vartheta_j\|_2+3\epsilon\mid c_i=k\Big) \\
&\hspace{1.cm} -  {\rm pr}\Big(\max_{j\in [K]}\|\hat{\xi}_j-\vartheta_j\|_2> \epsilon\Big).
\end{align*}
Letting $n\rightarrow \infty$ and then $\epsilon\rightarrow 0$ yields
\[
\liminf_{n\rightarrow \infty} E[r^{\delta}(\hat{\overrightarrow{c}})]  \geq 1-\frac{1}{K}\sum_{k=1}^K {\rm pr}\Big(\cap_{j\neq \delta^{-1}(k)}\{z^T_k(\vartheta_{\delta^{-1}(k)}-\vartheta_j)\geq 0\}\Big).
\]
We thus have proved that 
\begin{align*}
\lim_{n\rightarrow \infty} E[r^{\delta}(\hat{\overrightarrow{c}})] &=1-\frac{1}{K}\sum_{k=1}^K {\rm pr}\Big(\cap_{j\neq \delta^{-1}(k)}\{z^T_k(\vartheta_{\delta^{-1}(k)}-\vartheta_j)\geq 0\}\Big)
\end{align*}
Using the multivariate Gaussian distributions of $\{z_k\}_{k=1}^K$, it is straightforward to confirm that 
\begin{align*}
\lim_{n\rightarrow \infty} E[r^{\delta}(\hat{\overrightarrow{c}})]=
\begin{cases}
1-{\rm pr}(a_i \geq 0, i=1,2,\ldots, K-1) & \text{if}\ \delta \ \text{is the identity mapping}\ I   \\
1-\frac{1}{K}\sum_{k=1}^Kp^{\delta}_{k}& \text{otherwise}
\end{cases}
\end{align*}
Here, $(a_1,\ldots,a_{K-1})^T$ follows the multivariate Gaussian distribution described in Theorem 1. At least two of $\{p^{\delta}_k\}_{k=1}^K$ are equal to ${\rm pr}(a_i \geq 1, i=1,2,\ldots, K-2, a_{K-1}\geq 2)$, and the rest of $\{p^{\delta}_k\}_{k=1}^K$ are all equal to ${\rm pr}(a_i \geq 0, i=1,2,\ldots, K-1)$. It is clear that $\lim_{n\rightarrow \infty} E[r^{\delta}(\hat{\overrightarrow{c}})]$ is smallest when $\delta$ equals the identity mapping $I$. With a direct modification of the preceding arguments, it is not hard to further show that the convergence of $r^{\delta}(\hat{\overrightarrow{c}})$ holds in probability as well. As a result, as $n\rightarrow \infty$
\begin{align*}
{\rm pr}(\mathcal{A}_n)\rightarrow 1, \quad \mathcal{A}_n \equiv \Big\{r^{I}(\hat{\overrightarrow{c}}) \leq \min_{\delta \neq I} r^{\delta}(\hat{\overrightarrow{c}})\Big\}
\end{align*}
Accordingly, we have
\begin{align*}
0\leq E[r^I(\hat{\overrightarrow{c}})]-E[r(\hat{\overrightarrow{c}})]=E[r^I(\hat{\overrightarrow{c}})]-E[\inf_{\delta}r^{\delta}(\hat{\overrightarrow{c}})]\leq 2{\rm pr}(\mathcal{A}^c_n)\rightarrow 0,
\end{align*}
which leads to
\[
\lim_{n\rightarrow \infty}E[r(\hat{\overrightarrow{c}})] =\lim_{n\rightarrow \infty} E[r^{I}(\hat{\overrightarrow{c}})]= 1-{\rm pr}(a_i \geq 0, i=1,2,\ldots, K-1).
\]
Finally, the monotonicity of the limit follows from a simple change of variable:
\begin{align*}
&{\rm pr}(a_i \geq 0, i=1,2,\ldots, K-1)\\
=&(2\pi)^{-\frac{K-1}{2}}|I_{K-1}+J_{K-1}|^{-\frac{1}{2}}\int_{y_i\geq 0}e^{-\frac{1}{2}(y-\sqrt{\tau^{\overrightarrow{w}}_{\infty}-K}\overrightarrow{1})^T(I_{K-1}+J_{K-1})^{-1}(y-\sqrt{\tau^{\overrightarrow{w}}_{\infty}-K}\overrightarrow{1})}dy \\
=&(2\pi)^{-\frac{K-1}{2}}|I_{K-1}+J_{K-1}|^{-\frac{1}{2}}\int_{y_i\geq -\sqrt{\tau^{\overrightarrow{w}}_{\infty}-K}}e^{-\frac{1}{2}y^T(I_{K-1}+J_{K-1})^{-1}y}dy.
\end{align*}

\begin{lemma}\label{centerk}
The estimated centers from the $k$-means can be written as
\begin{align}
\label{obj:one}
\{\hat{\xi}_k\}_{k=1}^K= \underset{\{\xi_k\}_{k=1}^K}{\operatorname{argmin}} ~\frac{1}{n}\sum_{i=1}^n \min_{1\leq k\leq K}\|\sqrt{n}\mathcal{O}U^Te_i-\xi_k\|_2^2.
\end{align}
There exists a labeling of the $K$ centers and some constant $t>0$ such that for each 
$k\in [K]$, 
\begin{equation}
\hat{\xi}_k \overset{P}{\rightarrow}
\begin{pmatrix}
1 \\
t \sqrt{\frac{K(\tau^{\overrightarrow{w}}_{\infty}-K)}{\tau^{\overrightarrow{w}}_{\infty}}} \nu_k
\end{pmatrix}
,
\quad \mbox{as}~ n\rightarrow \infty,  
\label{center:limits}
\end{equation}
where the $(K-1)$-dimensional vectors $\nu_k$'s are defined in Proposition \ref{gaussian:embed}. 
\end{lemma}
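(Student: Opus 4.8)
The plan is to treat \eqref{obj:one} as a $k$-means problem for a triangular array and to adapt the strong-consistency argument of \citet{pollard1981strong} to our non-i.i.d.\ setting. Write $x_i=\sqrt{n}\mathcal{O}U^Te_i$ and let $P$ denote the limiting Gaussian mixture identified in Proposition \ref{gaussian:embed}: the equal-weight mixture of $\mathcal{N}(\mu^{(k)},\Theta)$, $k\in[K]$. For a set of $K$ centers $A=\{\xi_1,\dots,\xi_K\}$ define the empirical and population functionals $\Phi_n(A)=\frac1n\sum_{i=1}^n\min_k\|x_i-\xi_k\|_2^2$ and $\Phi(A)=\int\min_k\|x-\xi_k\|_2^2\,dP(x)$. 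The overall scheme is the standard argmin-consistency route: (a) show $\Phi_n\to\Phi$ uniformly over a compact family of admissible centers; (b) show the population minimizer $\{\vartheta_k\}$ of $\Phi$ exists, is unique up to relabeling, and is well separated. These two facts force $\{\hat\xi_k\}$ to converge to $\{\vartheta_k\}$, after which it remains to identify $\{\vartheta_k\}$ explicitly and confirm its radial-simplex form.

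For step (a), the key departure from Pollard's i.i.d.\ theory is that $\{x_i\}$ is a triangular array that is only (asymptotically) pairwise independent and convergent in distribution. I would first fix $A$ and establish the pointwise law of large numbers $\Phi_n(A)\overset{P}{\rightarrow}\Phi(A)$. The expectation $E[\Phi_n(A)]\to\Phi(A)$ follows from the marginal convergence of each $x_i$ given $c_i$ in Proposition \ref{gaussian:embed} together with uniform integrability supplied by the convergence of second moments. The variance $\mathrm{Var}(\Phi_n(A))=n^{-2}\sum_{i,j}\mathrm{Cov}(g_A(x_i),g_A(x_j))$, with $g_A(x)=\min_k\|x-\xi_k\|_2^2$, is controlled using the asymptotic pairwise independence of Proposition \ref{gaussian:embed}, which makes the $O(n^2)$ off-diagonal covariances vanish fast enough that the average is $o(1)$; verifying this decay is uniform over pairs is one of the technical points. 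To upgrade to uniform convergence I would first restrict to a compact set of candidate centers (configurations whose centers escape to infinity or collapse are dominated by a fixed reference, using the finite, nondegenerate second moment of $P$), then combine pointwise convergence at the points of a finite $\epsilon$-net with the equicontinuity of $\Phi_n$ and $\Phi$ in $A$ (both Lipschitz in $A$ with modulus controlled by the $O_P(1)$ empirical first moment $\frac1n\sum_i\|x_i\|_2$). This yields $\sup_A|\Phi_n(A)-\Phi(A)|\overset{P}{\rightarrow}0$ and hence $\hat\xi_k\overset{P}{\rightarrow}\vartheta_k$.

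It then remains to characterize $\{\vartheta_k\}$. Because $\Theta$ has a zero in its top-left block, the first coordinate of every $x$ under $P$ is degenerate at $1$, so projecting any center onto $\{x_1=1\}$ cannot increase $\Phi$; thus each $\vartheta_k$ has first coordinate $1$. For the remaining coordinates the marginal of $P$ is the equal-weight mixture of $\mathcal{N}\big(\sqrt{\tfrac{K(\tau^{\overrightarrow{w}}_{\infty}-K)}{\tau^{\overrightarrow{w}}_{\infty}}}\,\nu_k,\ \tfrac{K}{\tau^{\overrightarrow{w}}_{\infty}}I_{K-1}\big)$, whose means $\{\nu_k\}$ are the vertices of a centered regular simplex (from $\nu_k^T\nu_l=\delta_{kl}-1/K$ and $\sum_k\nu_k=0$, consequences of the orthogonality of $[\tfrac1{\sqrt K}1_K,\mathcal{V}]$). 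This mixture is invariant under the group of orthogonal maps realizing the $S_K$ permutations of the simplex vertices. By the uniqueness of the optimal center set from step (b), the invariance of $P$ forces $\{\vartheta_k\}$ to be invariant under this group, which pins it down to the radial configuration $\vartheta_k=(1,\ t\sqrt{K(\tau^{\overrightarrow{w}}_{\infty}-K)/\tau^{\overrightarrow{w}}_{\infty}}\,\nu_k)^T$ for a single common scalar $t\ge 0$; a first-order variance computation shows the origin is not optimal, giving $t>0$, as claimed.

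The main obstacle I anticipate is step (a): pushing Pollard's consistency through for a triangular array that is merely asymptotically pairwise independent and convergent in distribution, in particular securing uniform (rather than merely pointwise) convergence without an i.i.d.\ empirical-process bound and controlling the covariance decay uniformly over pairs. A secondary but genuine difficulty is the uniqueness of the population $k$-means optimum up to relabeling, needed both to run the argmin argument and to license the symmetry reduction that produces the common scalar $t$; here I would exploit the strong $S_K$ symmetry and the non-degeneracy of $P$ to rule out asymmetric or higher-multiplicity optima.
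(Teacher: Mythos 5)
Your consistency argument (compactification of the center set, pointwise laws of large numbers upgraded to uniform convergence over a compact family, then argmin continuity) is essentially the paper's route: the appendix adapts \citet{pollard1981strong} in exactly this way, with the pointwise convergence supplied by Lemma \ref{lln}. One device worth noting is how the paper disposes of the point you flag as the main obstacle, namely controlling the $O(n^2)$ off-diagonal covariances uniformly over pairs: since the rows $\{x_i\}_{c_i=k}$ within a community are exchangeable, all off-diagonal covariances in a block coincide, so it suffices to show a single representative covariance is $o(1)$, which follows from the asymptotic pairwise independence in Proposition \ref{gaussian:embed} plus uniform integrability extracted from the exact identity $\frac{1}{n}\sum_i\|x_i\|_2^2=K$. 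Where you genuinely diverge is the identification of the limit set. The paper fixes the radial family $\{t\tilde{\nu}_k\}$, derives the explicit optimal $t$ from the one-dimensional restriction, smooths the $\min$ by a log-sum-exp surrogate $f_{\beta}$ to compute the gradient of the population objective, and verifies by a permutation change of variables that the gradient vanishes there; you instead argue that the unique optimal set must be invariant under the orthogonal realization of $S_K$ permuting the simplex vertices, which forces a single orbit of size $K$, i.e.\ the radial configuration. Your route is cleaner but leaves two things unfinished that you should close: (i) the classification of $S_K$-invariant $K$-point sets (you need that the standard representation admits no orbits of size in $\{2,\dots,K\}$ other than the vertex axes, which holds by the subgroup-index structure of $S_K$ but deserves a sentence, including the $K=6$ exceptional class); and (ii) the sign of $t$ --- an invariant orbit of size $K$ is $\{t\nu_k\}$ for some $t\neq 0$ of either sign, and ruling out the origin does not rule out $t<0$; you need the comparison $E\max_k\tilde{\nu}_k^TY>E\max_k(-\tilde{\nu}_k^TY)$, which is where the paper's explicit formula for $t$ does real work. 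Both proofs ultimately lean on the assumed uniqueness of the population minimizer, so you are not worse off there.
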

\begin{proof}
The strong consistency of $k$-means in classical i.i.d. cases has been derived in \citet{pollard1981strong}. We adapt the arguments in \citet{pollard1981strong} to prove the convergence of $k$-means in probability under the spectral clustering settings. To avoid the ambiguities in the labeling of the centers, we treat the objective function in \eqref{obj:one} as a function of a set $\Xi$ containing $K$ (or fewer) points in $\mathcal{R}^K$, and denote it by 
\[
\ell_n(\Xi) \equiv \frac{1}{n}\sum_{i=1}^n \min_{\xi\in \Xi}\|\sqrt{n}\mathcal{O}U^Te_i-\xi\|_2^2.
\]
The distance between two subsets is measured by the Hausdorff metric $d_H(\cdot,\cdot)$. Accordingly, the estimated centers $\{\hat{\xi}_k\}_{k=1}^K$ can be represented by a set $\hat{\Xi}_n$ given as 
\[
\hat{\Xi}_n=\underset{\Xi \in \mathcal{C}_K}{\operatorname{argmin}} ~\ell_n(\Xi), \quad \mathcal{C}_k=\{\Xi\subseteq \mathcal{R}^K: \Xi~\mbox{contains}~K~\mbox{or fewer points}\}.
\]
In light of Lemma \ref{lln} (i), we define the `population' function as
\begin{align*}
\ell(\Xi)\equiv E\min_{\xi \in \Xi}\|z-\xi\|_2^2, 
\end{align*}
where $z$ is the Gaussian mixture distribution specified in Lemma \ref{lln}. We first aim to prove the following results: for some large constant $r>0$, as $n\rightarrow \infty$
\begin{enumerate}
\item[(a)] ${\rm pr}(\hat{\Xi}_n \subseteq \mathcal{B}_r) \rightarrow 1.$
\item[(b)] $\sup_{\Xi \in \mathcal{D}_K} |\ell_n(\Xi)-\ell(\Xi)|\overset{P}{\rightarrow} 0$, where $\mathcal{D}_K=\{\Xi \subseteq \mathcal{B}_r: \Xi~\mbox{has}~K~\mbox{or fewer points} \}$.
\item[(c)] $\ell(\Xi)$ is continuous on $\mathcal{D}_K$.
\end{enumerate}
Suppose the above results hold and $\ell(\Xi)$ has a unique minimizer $\Xi^*\in \mathcal{D}_K$, then $\forall \epsilon>0$ there exists $\delta_{\epsilon}>0$ such that
\begin{align*}
&{\rm pr}(d_H(\hat{\Xi}_n, \Xi^*)>\epsilon)= {\rm pr}(d_H(\hat{\Xi}_n, \Xi^*)>\epsilon, \hat{\Xi}_n  \subseteq \mathcal{B}_r) + {\rm pr}(d_H(\hat{\Xi}_n, \Xi^*)>\epsilon,\hat{\Xi}_n  \not\subseteq \mathcal{B}_r) \\
\leq & {\rm pr}\Big(\sup_{\Xi \in \mathcal{D}_K} |\ell_n(\Xi)-\ell(\Xi)| >\delta_{\epsilon}\Big)+  {\rm pr}(\hat{\Xi}_n  \not\subseteq \mathcal{B}_r)\rightarrow 0,
\end{align*}
thus $\hat{\Xi}_n\overset{P}{\rightarrow }\Xi^*$. The proof will be completed after a further analysis of $\Xi^*$. We closely follow the idea in \citet{pollard1981strong} to prove (a)(b)(c). For the sake of simplicity, we will describe several key steps and refer the readers to \citet{pollard1981strong} for more details. To prove Part (a), we first show ${\rm pr}(\hat{\Xi}_n \cap \mathcal{B}_r\neq \emptyset)\rightarrow 1$ when $r$ is large enough. This is because for any $0<\tilde{r}<r$,
\begin{align*}
&\frac{1}{n}\sum_{i=1}^n\|\sqrt{n}\mathcal{O}U^Te_i\|_2^2=\ell_n(\{0\})  \geq \ell_n(\hat{\Xi}_n) \\
\geq & 1(\hat{\Xi}_n \cap \mathcal{B}_r=\emptyset) \cdot  \frac{(r-\tilde{r})^2}{n}\sum_{i=1}^n 1(\sqrt{n}\mathcal{O}U^Te_i\in \mathcal{B}_{\tilde{r}}).
\end{align*}
Choosing $r$ sufficiently large that $E\|z\|_2^2<a<(r-\tilde{r})^2\cdot {\rm pr}(\|z\|_2\leq \tilde{r})$ for some $a>0$, we continue from the above bound to have
\begin{align*}
{\rm pr}\Big(\frac{1}{n}\sum_{i=1}^n\|\sqrt{n}\mathcal{O}U^Te_i\|_2^2>a\Big)&\geq {\rm pr}\Big(\frac{(r-\tilde{r})^2}{n}\sum_{i=1}^n 1(\sqrt{n}\mathcal{O}U^Te_i\in \mathcal{B}_{\tilde{r}})>a,\hat{\Xi}_n \cap \mathcal{B}_r=\emptyset\Big) \\
&\hspace{-0.9cm} \geq {\rm pr}(\hat{\Xi}_n \cap \mathcal{B}_r=\emptyset)-{\rm pr}\Big(\frac{(r-\tilde{r})^2}{n}\sum_{i=1}^n 1(\sqrt{n}\mathcal{O}U^Te_i\in \mathcal{B}_{\tilde{r}})\leq a\Big),
\end{align*}
which combined with Lemma \ref{lln} (ii)(iii) leads to ${\rm pr}(\hat{\Xi}_n \cap \mathcal{B}_r= \emptyset)\rightarrow 0$. Define $\tilde{\Xi}_n= \hat{\Xi}_n \cap \mathcal{B}_{5r} $. On the event $\mathcal{E}\equiv \{\hat{\Xi}_n \cap \mathcal{B}_r\neq \emptyset\}\cap \{\hat{\Xi}_n \not\subseteq \mathcal{B}_{5r}\}$,  $\tilde{\Xi}_n$ has at least one point $x$ from $\mathcal{B}_r$  and contains at most $K-1$ points. So it holds that on $\mathcal{E}$ 
\begin{align*}
\min_{\Xi \in \mathcal{C}_{K-1}}\ell_n(\Xi) & \leq \ell_n(\tilde{\Xi}_n) \leq \ell_n(\hat{\Xi}_n) + \frac{1}{n}\sum_{i=1}^n\|\sqrt{n}\mathcal{O}U^Te_i-x\|_2^2\cdot 1(\|\sqrt{n}\mathcal{O}U^Te_i\|\geq 2r, \|x\|\leq r) \\
& \leq \ell_n(\Xi^*) +\frac{5}{2n}\sum_{i=1}^n\|\sqrt{n}\mathcal{O}U^Te_i\|_2^2 \cdot 1(\|\sqrt{n}\mathcal{O}U^Te_i\|\geq 2r). 
\end{align*}
The upper bound converges in probability to $\min_{\Xi\in \mathcal{C}_K}\ell(\Xi)+\frac{5}{2}E[\|z\|_2^2\cdot 1(\|z\|_2\geq 2r)]$ by Lemma \ref{lln} (i)(ii). The convergence of the lower bound $\min_{\Xi \in \mathcal{C}_{K-1}}\ell_n(\Xi)\overset{P}{\rightarrow} \min_{\Xi\in \mathcal{C}_{K-1}}\ell(\Xi)$ is obtained by the same inductive argument from \citet{pollard1981strong}. Since $\min_{\Xi\in \mathcal{C}_{K-1}}\ell(\Xi)> \min_{\Xi\in \mathcal{C}_{K}}\ell(\Xi)$, choosing $r$ large enough makes the upper bound limit smaller than the lower bound limit, which implies 
\[
{\rm pr}(\hat{\Xi}_n \not\subseteq \mathcal{B}_{5r}) \leq {\rm pr}(\hat{\Xi}_n \cap \mathcal{B}_r= \emptyset) + {\rm pr}(\mathcal{E})\rightarrow 0.
\]
Having the convergence results shown in Lemma \ref{lln} (i), (b)(c) can be proved by following exactly the same lines in \citet{pollard1981strong}. We thus do not repeat the arguments. 

It remains to verify that the $K$ limits in \eqref{center:limits} form the set $\Xi^*$ which minimizes $\ell(\Xi)$. Because the first coordinate of $z$ equals the constant 1, the first coordinates of points in $\Xi^*$ will be all equal to it. Regarding the other $K-1$ coordinates, it is equivalent to show that for the following objective function
\[
g(\mu)\equiv E\min_{1\leq k\leq K}\|\mu_k-Y\|_2^2,
\]
where $Y\sim  \frac{1}{K}\sum_{k=1}^K\mathcal{N}(\tilde{\nu}_k,K/\tau^{\overrightarrow{w}}_{\infty}\cdot I_{K-1}), \tilde{\nu}_k=\sqrt{K(\tau^{\overrightarrow{w}}_{\infty}-K)/\tau^{\overrightarrow{w}}_{\infty}} \nu_k$, there exists a constant $t>0$ such that $\mu^*_k=t \tilde{\nu}_k, k\in [K]$ minimize $g(\mu)$. It is clear that the constant $t>0$ has to minimize 
$E\min_{1\leq k\leq K}\|t\tilde{\nu}_k-Y\|_2^2$, thus 
\begin{align}
\label{t:formula}
t=-E\min_{ k}-\tilde{\nu}_k^TY/\|\tilde{\nu}_1\|_2^2. 
\end{align}
We next compute the gradient of $g(\mu)$. Define $f_{\beta}(\mu)=E\frac{1}{-\beta}\log\sum_ke^{-\beta\|\mu_k-Y\|_2^2}$ for some $\beta>0$. It is easy to check that $f_{\beta}(\mu)\leq g(\mu)\leq f_{\beta}(\mu)+\frac{\log K}{\beta}$. Hence for any given direction $h$, 
\[
\Big|\frac{g(\mu+\Delta \cdot h)-g(\mu)}{\Delta}   -\frac{f_{\beta}(\mu+\Delta \cdot h)-f_{\beta}(\mu)-\beta^{-1}\log K}{\Delta} \Bigg |\leq  \frac{\log K}{\beta \Delta}.
\]
Choosing $\beta=\Delta^{-2}$ and applying Dominated Convergence Theorem, it is straightforward to confirm that taking $\Delta \searrow 0$ on both sides of the above inequality yields 
\[
h^T\nabla g(\mu)=\sum_{k} 2h_k^T E[(\mu_k-Y)\cdot 1(\|\mu_k-Y\|_2\leq \|\mu_j-Y\|_2, j \neq k)]. 
\]
We verify that the derivative $\frac{\partial g(\mu)}{\partial \mu_k}$ evaluated at $\{\mu^{*}_k\}_{k\in [K]}$ is equal to zero. We have
\begin{align}
\label{gradient:eqc}
\frac{\partial g(\mu^*)}{\partial \mu_k}&=2 E[(t\tilde{\nu}_k-Y)\cdot 1((\tilde{\nu}_k-\tilde{\nu}_j)^TY\geq 0, j \neq k)] \nonumber \\
&=2K^{-1}t\tilde{\nu}_k-2E[Y\cdot 1((\tilde{\nu}_k-\tilde{\nu}_j)^TY\geq 0, j \neq k)],
\end{align}
where the second equality is due to the symmetry of $\{\tilde{\nu}_k\}$. To evaluate the second term on the right-hand side of above equation, denote $\mathcal{V}=(\tilde{\nu}_1,\ldots,\tilde{\nu}_{k-1},\tilde{\nu}_{k+1},\ldots, \tilde{\nu}_K)^T\in \mathcal{R}^{(K-1)\times (K-1)}$. With the change of variables $Y=\mathcal{V}^T\tilde{Y}$, we get
\begin{align*}
&E[Y\cdot 1((\tilde{\nu}_k-\tilde{\nu}_j)^TY\geq 0, j \neq k)] \\
=&(2\pi K/\tau^{\overrightarrow{w}}_{\infty})^{(1-K)/2}\int_{\tilde{Y}_i \geq 0, i\in [K-1]} \mathcal{V}^T\tilde{Y}\sum_{k=1}^Ke^{-\frac{\tau^{\overrightarrow{w}}_{\infty}\|\mathcal{V}^T\tilde{Y}-\tilde{\nu}_k\|_2^2}{2K}}d\tilde{Y}.
\end{align*}
Moreover, the summation in the above integrand is invariant under the permutation of the rows of $\mathcal{V}$. Let $\mathcal{V}_{\varpi}$ be the resulting matrix after a row permutation $\varpi$ of $\mathcal{V}$. We use the change of variables $Y=\mathcal{V}_{\varpi}^T\tilde{Y}$ to derive 
\begin{align}
\label{symmetry:use}
&E[Y\cdot 1((\tilde{\nu}_k-\tilde{\nu}_j)^TY\geq 0, j \neq k)] \nonumber \\
=&(2\pi K/\tau^{\overrightarrow{w}}_{\infty})^{(1-K)/2} (K!)^{-1} \int_{\tilde{Y}_i \leq 0, i\in [K-1]} \sum_{\varpi }\mathcal{V}^T_{\varpi}\tilde{Y}\cdot \sum_{k=1}^Ke^{-\frac{\tau^{\overrightarrow{w}}_{\infty}\|\mathcal{V}^T_{\varpi}\tilde{Y}-\tilde{\nu}_k\|_2^2}{2K}}d\tilde{Y} \nonumber \\
=&-\tilde{v}_k(2\pi K/\tau^{\overrightarrow{w}}_{\infty})^{(1-K)/2} K^{-1} \int_{\tilde{Y}_i \leq 0, i\in [K-1]} 1^T\tilde{Y}\cdot \sum_{k=1}^Ke^{-\frac{\tau^{\overrightarrow{w}}_{\infty}\|\mathcal{V}^T\tilde{Y}-\tilde{\nu}_k\|_2^2}{2K}}d\tilde{Y}.
\end{align}
We have used the fact $\sum_k\tilde{\nu}_k=0$ in the last equality. In a similar way, we can obtain 
\begin{align*}
E\min_{ k}-\tilde{\nu}_k^TY &=\sum_{k}E[-\tilde{\nu}_k^TY\cdot 1((\tilde{\nu}_k-\tilde{\nu}_j)^TY\geq 0, j \neq k)]  \\
&=\|\tilde{v}_1\|_2^2(2\pi K/\tau^{\overrightarrow{w}}_{\infty})^{(1-K)/2} \int_{\tilde{Y}_i \leq 0, i\in [K-1]} 1^T\tilde{Y}\cdot \sum_{k=1}^Ke^{-\frac{\tau^{\overrightarrow{w}}_{\infty}\|\mathcal{V}^T\tilde{Y}-\tilde{\nu}_k\|_2^2}{2K}}d\tilde{Y}.
\end{align*}
This result combined with \eqref{t:formula}\eqref{gradient:eqc}\eqref{symmetry:use} shows that the derivatives are zero. 
\end{proof}

\begin{lemma}\label{lln}
Define $x_i\equiv \sqrt{n}\mathcal{O}U^Te_i, i=1,2,\ldots, n$, and the Gaussian mixture distribution $z\sim \frac{1}{K}\sum_{k=1}^K\mathcal{N}(\mu^{(k)},\Theta)$, where $\{\mu^{(k)}\}_{k=1}^K$ and $\Theta$ are specified in Proposition \ref{gaussian:embed}. The following convergence results hold:
\begin{itemize}
\item[(i)] For any constant $\delta \in \mathcal{R}$ and any given set $\Xi$ that contains a finite number of points,
\[
\frac{1}{n}\sum_{i=1}^n\min_{\xi\in \Xi}(\|x_i-\xi\|_2- \delta)_+^2 \overset{P}{\rightarrow} E\min_{\xi\in \Xi}(\|z-\xi\|_2-\delta)_+^2.
\]
\item[(ii)] $\frac{1}{n}\sum_{i=1}^n\|x_i\|_2^2\cdot 1(\|x_i\|_2> r) \overset{P}{\rightarrow} E[\|z\|_2^2\cdot 1(\|z\|_2> r)], ~~\forall r \in [-\infty, \infty)$.
\item[(iii)] $\frac{1}{n}\sum_{i=1}^n1(x_i\in \mathcal{B}_r) \overset{P}{\rightarrow} {\rm pr}(z \in \mathcal{B}_r), ~~\forall r \in (0,\infty)$.
\end{itemize}
\end{lemma}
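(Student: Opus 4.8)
The plan is to treat all three statements as instances of a single triangular-array weak law of large numbers for the spectral embedding points $x_i = \sqrt{n}\mathcal{O}U^T e_i$, driven by two structural facts. The first is Proposition \ref{gaussian:embed}, which supplies both the marginal convergence $x_i \overset{d}{\to} z_{c_i}$ (so that, averaging over the $K$ equal-sized communities of the balanced model, a uniformly chosen $x_i$ converges weakly to the mixture $z$) and the pairwise asymptotic independence of $(x_i,x_j)$ encoded by the block-diagonal covariance $\Sigma$. The second is the exact energy identity $\frac{1}{n}\sum_{i=1}^n \|x_i\|_2^2 = K$, which holds because $\sum_i \|x_i\|_2^2 = n\,\mathrm{tr}(U^T U) = nK$ for the orthonormal columns of $U$; crucially this matches $E\|z\|_2^2 = K$, a direct computation from the parameters in Proposition \ref{gaussian:embed}.

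First I would establish the bounded case: for any bounded $\phi$ whose discontinuity set is $\mathcal{L}(z)$-null, $\frac{1}{n}\sum_i \phi(x_i) \overset{P}{\to} E\phi(z)$. Mean convergence $\frac{1}{n}\sum_i E\phi(x_i) \to E\phi(z)$ follows from the marginal convergence via the Portmanteau (continuity-set) theorem and averaging over the $K$ communities. For the variance I would exploit the within-community exchangeability of the embedding (under the equivalent model with exact community sizes $n/K$, as already used in the proof of Theorem \ref{thm:error}): the covariance $\mathrm{Cov}(\phi(x_i),\phi(x_j))$ takes only the two values determined by whether $c_i=c_j$, and each tends to $0$ by pairwise asymptotic independence; since there are only two distinct covariance values, no uniformity over the $O(n^2)$ pairs is needed, and $\mathrm{Var}\big(\frac{1}{n}\sum_i\phi(x_i)\big)=O(1/n)+o(1)\to 0$, whence Chebyshev concludes. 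Part (iii) is then immediate, because $\partial\mathcal{B}_r$ is $\mathcal{L}(z)$-null: the first coordinate of $\Theta$ is degenerate, so $z$ lives on the affine hyperplane $\{x_1=1\}$ and is absolutely continuous there, and the sphere meets that hyperplane in a set of measure zero.

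To reach the quadratically growing integrands of (i) and (ii) I would pass through uniform integrability. Weak convergence of the averaged law $\bar\mu_n \equiv \frac{1}{n}\sum_i \mathcal{L}(x_i)$ to $\mathcal{L}(z)$, established above on bounded continuous test functions, together with the exact moment match $\int \|x\|_2^2\, d\bar\mu_n = K = \int \|x\|_2^2\, d\mathcal{L}(z)$ for every $n$, yields by the standard criterion (weak convergence plus convergence of $\int f$ for a nonnegative continuous $f$ forces $f$ to be uniformly integrable) that $\|x\|_2^2$ is uniformly integrable under $\{\bar\mu_n\}$; that is, $\sup_n \frac{1}{n}\sum_i E\big[\|x_i\|_2^2\,1(\|x_i\|_2^2 > M)\big]\to 0$ as $M\to\infty$. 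With this in hand I would run a truncation argument: for the integrand $\phi$ in (i) or (ii), which obeys $0\le \phi(x)\lesssim \|x\|_2^2+1$, apply the bounded case to $\phi\wedge M$, and bound the tail $\frac{1}{n}\sum_i(\phi(x_i)-\phi(x_i)\wedge M)$ in expectation uniformly in $n$ by the uniform integrability; letting $M\to\infty$, using $E[\phi(z)\wedge M]\to E\phi(z)<\infty$, delivers the claim. In (ii) one checks once more that the discontinuity at $\|x\|_2=r$ is $\mathcal{L}(z)$-null, as in part (iii).

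The main obstacle is precisely this last passage to the unbounded integrands: Proposition \ref{gaussian:embed} controls only bounded continuous functionals of the $x_i$, so the quadratic growth in (i) and (ii) cannot be handled by weak convergence alone. The device that resolves it is the exact identity $\frac{1}{n}\sum_i\|x_i\|_2^2 = K$, which simultaneously pins down the limiting second moment and, through the uniform-integrability criterion, upgrades weak convergence to the tail control required by the truncation. A secondary point demanding care is that $z$ is degenerate in its first coordinate, so all the ``null-boundary'' verifications must be carried out on the $(K-1)$-dimensional affine support of $z$ rather than in $\mathcal{R}^K$.
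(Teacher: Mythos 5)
Your argument is correct, but it resolves the central difficulty --- the quadratic growth of the integrands in (i) and (ii) --- by a genuinely different device than the paper. The paper never forms the averaged law $\bar\mu_n$ or invokes uniform integrability of $\|x\|_2^2$ under it: for part (i) it expands $(\|x_i-\xi\|_2-\delta)_+^2$ so that the quadratic term $\|x_i\|_2^2$ is cancelled \emph{exactly} by the identity $\frac{1}{n}\sum_i\|x_i\|_2^2=K$, leaving a remainder $y_i(\xi)$ that grows only linearly in $\|x_i\|_2$; it then runs a mean/variance (Chebyshev) argument on $\min_{\xi}y_i(\xi)$ within each community, using exchangeability to get $\limsup_n E\|x_1\|_2^2\leq K^2$ and $\limsup_n E[\|x_1\|_2^2\|x_2\|_2^2]<\infty$, which provide the $L^2$-uniform integrability needed to convert the distributional convergence and asymptotic pairwise independence of Proposition \ref{gaussian:embed} into convergence of means and vanishing covariances. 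For part (ii) the paper reduces to the complementary bounded functional $\|x\|_2^2\cdot 1(\|x\|_2\leq r)$ and handles the discontinuity by an explicit bounded continuous interpolation $g_{\epsilon}$, rather than by your portmanteau/null-boundary route on the degenerate support of $z$. Your approach --- weak convergence of $\bar\mu_n=\frac{1}{n}\sum_i\mathcal{L}(x_i)$ plus the exact moment match $\int\|x\|_2^2\,d\bar\mu_n=K=E\|z\|_2^2$ forcing uniform integrability, then truncation --- rests on the same two structural inputs (the trace identity and within-community exchangeability) but packages them into a single reusable law of large numbers for a.s.-continuous functionals of at most quadratic growth; this is more modular and avoids the term-by-term algebra, at the price of invoking the moment-convergence-implies-uniform-integrability criterion and a three-epsilon truncation. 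Two small points to tighten: by exchangeability $\mathrm{Cov}(\phi(x_i),\phi(x_j))$ is determined by the unordered pair of community labels, so it takes finitely many --- not necessarily two --- values, which is all your variance bound needs; and the Chebyshev step is cleanest if carried out conditionally on a fixed balanced label vector (as the paper implicitly does), since otherwise one must also note that the between-assignment variance term vanishes by relabeling symmetry.
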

\begin{proof}
We first prove Part (i). Write
\begin{align*}
&\frac{1}{n}\sum_{i=1}^n\min_{\xi\in \Xi}(\|x_i-\xi\|_2- \delta)_+^2=\frac{1}{n}\sum_{i=1}^n\min_{\xi\in \Xi}\Big((\|x_i-\xi\|_2- \delta)^2-(\|x_i-\xi\|_2- \delta)_{-}^2\Big) \\
=&\delta^2+\frac{1}{n}\sum_{i=1}^n\|x_i\|_2^2+\frac{1}{n}\sum_{i=1}^n\min_{\xi\in \Xi}\Big(\underbrace{-2\delta\|x_i-\xi\|_2+\|\xi\|_2^2-2\xi^Tx_i-(\|x_i-\xi\|_2- \delta)_{-}^2}_{\equiv y_i(\xi)}\Big) \\
=&\delta^2+ K+\frac{1}{n}\sum_{i=1}^n\min_{\xi\in \Xi}y_i(\xi),
\end{align*}
where the last equality holds since $U\in \mathcal{R}^{n\times K}$ is the eigenvector matrix. With a similar decomposition for $E\min_{\xi\in \Xi}(\|z-\xi\|_2-\delta)_+^2$, it is direct to verify that (i) is equivalent to 
\[
\frac{1}{n}\sum_{i=1}^n\min_{\xi\in \Xi}y_i(\xi)\overset{P}{\rightarrow} E\min_{\xi\in \Xi}\Big(-2\delta\|z-\xi\|_2+\|\xi\|_2^2-2\xi^Tz-(\|z-\xi\|_2- \delta)_{-}^2\Big).
\]
It is then sufficient to prove for $k\in [K]$
\begin{equation}
\label{target:one}
\frac{K}{n}\sum_{c_i=k}\min_{\xi\in \Xi} y_i(\xi)\overset{P}{\rightarrow} E\min_{\xi\in \Xi}\Big(-2\delta\|z_k-\xi\|_2+\|\xi\|_2^2-2\xi^Tz_k-(\|z_k-\xi\|_2- \delta)_{-}^2\Big),
\end{equation}
where $z_k$ is the $k^{th}$ Gaussian component of $z$. Using the exchangeability of $\{x_i\}_{c_i=k}$, \eqref{target:one} holds if we can show (suppose $c_1=c_2=k$ without loss of generality)
\begin{align*}
&(i1)~ \mbox{var}\big(\min_{\xi\in \Xi} y_1(\xi)\big)=O(1)\quad (i2)~\mbox{Cov}\big(\min_{\xi\in \Xi} y_1(\xi), \min_{\xi\in \Xi} y_2(\xi)\big)=o(1) \\
&(i3)~E\min_{\xi\in \Xi}y_1(\xi)\rightarrow E\min_{\xi\in \Xi}\Big(-2\delta\|z_k-\xi\|_2+\|\xi\|_2^2-2\xi^Tz_k-(\|z_k-\xi\|_2- \delta)_{-}^2\Big).
\end{align*}
Given the convergence result in Proposition 3, $(i1)(i3)$ can be obtained by proving the following uniform integrability
\begin{equation*}
\limsup_{n\rightarrow \infty} E\big[\min_{\xi\in \Xi}y_1(\xi)\big]^2<\infty.
\end{equation*}
Towards this goal, simple use of Jensen's and Cauchy-Schwarz inequality yields 
\begin{equation}
\label{uniform:ing}
\big[\min_{\xi\in \Xi}y_1(\xi)\big]^2 \leq \max_{\xi\in \Xi}[y_1(\xi)]^2 \leq C\Big[\delta^4+(\delta^2+\max_{\xi\in \Xi}\|\xi\|_2^2)\|x_1\|_2^2+\max_{\xi\in \Xi}(\delta^2\|\xi\|_2^2+\|\xi\|_2^4)\Big],
\end{equation}
for some absolute constant $C>0$. It is thus sufficient to show $\limsup_{n\rightarrow \infty}E\|x_1\|^2_2<\infty$. This holds because of the exchangeability of $\{x_i\}_{c_i=k}$:
\[
nK=E\sum_{i=1}^n\|x_i\|_2^2\geq E\sum_{c_i=k}^n\|x_i\|_2^2=\frac{n}{K}E\|x_1\|^2_2.
\]
Regarding $(i2)$, the continuous mapping theorem together with Proposition 3 reveals that $\min_{\xi\in \Xi} y_1(\xi)$ and $\min_{\xi\in \Xi} y_2(\xi)$ are asymptotically independent. Therefore (i2) is proved if the following uniform integrability holds
\[
\limsup_{n\rightarrow \infty} E\big[\min_{\xi\in \Xi}y_1(\xi)\cdot \min_{\xi\in \Xi}y_2(\xi)\big]^2<\infty.
\]
Having \eqref{uniform:ing} enables us to prove the above by showing $\limsup_{n\rightarrow \infty}E[\|x_1\|^2_2\cdot \|x_2\|_2^2]<\infty$ which holds again due to the exchangeability of $\{x_i\}_{c_i=k}$:
\begin{align*}
&E[\|x_1\|^2_2\cdot \|x_2\|_2^2]=\frac{1}{n/K-1}E\Big[\|x_1\|_2^2\cdot \sum_{i:c_i=k,i\neq 1}\|x_i\|_2^2\Big] \\
\leq &\frac{1}{n/K-1}E\Big[\|x_1\|_2^2\cdot \sum_{i=1}^n\|x_i\|_2^2\Big]=\frac{nK}{n/K-1}E\|x_1\|_2^2.
\end{align*}

\vspace{0.3cm}

We now prove Part (ii). The case $r=-\infty$ is trivial since $\frac{1}{n}\sum_{i=1}^n\|x_i\|_2^2=K$. For $r\in (-\infty,\infty)$, it is equivalent to prove 
\[
\frac{1}{n}\sum_{i=1}^n\|x_i\|_2^2\cdot 1(\|x_i\|_2\leq r) \overset{P}{\rightarrow} E[\|z\|_2^2\cdot 1(\|z\|_2\leq r)].
\]
The idea is similar to the proof of (i). The easier part is that the boundedness of $\|x_i\|_2^2\cdot 1(\|x_i\|_2\leq r)$ readily leads to the uniform integrability results. The harder part is that $\|x_i\|_2^2\cdot 1(\|x_i\|_2\leq r)$ is not continuous in $x_i$ so continuous mapping theorem is not directly applicable. To resolve this issue, we construct $g_{\epsilon}=\|x_i\|_2^2\cdot 1(\|x_i\|_2\leq r)+(\frac{-r^2}{\epsilon}\|x_i\|_2+\frac{r^2(r+\epsilon)}{\epsilon})\cdot1(r<\|x_i\|_2\leq r+\epsilon)$ which is bounded and continuous in $x_i$ and approximates $\|x_i\|_2^2\cdot 1(\|x_i\|_2\leq r)$ with arbitrary accuracy for small enough $\epsilon$. The remaining arguments are standard. We skip them for simplicity. 

\vspace{0.3cm}
Part (iii) can be proved in a similar but easier way. This is because the indicator function $1(x_i\in \mathcal{B}_r)$ is bounded, and the convergence for all continuity sets from the portmanteau lemma can be directly applied. We do not repeat the arguments here. 
\end{proof}

\subsection{Proof of Proposition \ref{thm:main2}} \label{sec:prop2}

We first have the following decomposition 
\begin{align*}
\frac{A}{d_n}=\frac{A-E(A)}{d_n}+\frac{E(A)}{d_n}\equiv W+W^*,
\end{align*}
where $d_n$ is defined in \eqref{notation:use}. Throughout this proof, we use $\lambda_j(\cdot)$ to denote the $j^{th}$ largest eigenvalue of a given matrix. 
From the spectral decomposition \eqref{svd:one} it is clear that 
\begin{align}
\label{first:K}
\lambda_1(W^*)\geq d_n,~~~ \lambda_j(W^*)= \sqrt{\tau^{\overrightarrow{w}}_{n}/K}, ~j=2,3,\ldots, K,
\end{align}
and all the other eigenvalues of $W^*$ are zero. Under the conditions 
\begin{align}
\label{conditions:use}
d_n=\Omega(\log^2 n), ~~~\sum_{\ell=1}^Lw^2_{\ell}p^{(\ell)}(1-p^{(\ell)})\approx  \sum_{\ell=1}^Lw^2_{\ell}q^{(\ell)}(1-q^{(\ell)}), 
\end{align}
a direct use of the sharp spectral norm bound (Theorem 1.4 in \citet{vu2005spectral}) shows that almost surely
\begin{align}
\label{spectral:norm:b}
\|W\|_2 \leq 2+o(1).
\end{align}
Therefore, almost surely for all $j=K+1,\ldots, n$
\begin{align}
\label{range:remain}
|\lambda_j(A/d_n)|=|\lambda_j(A/d_n)-\lambda_j(W^*)|\leq \|W\|_2\leq 2 +o(1). 
\end{align}
Moreover, combining \eqref{first:K} and \eqref{spectral:norm:b} it is easy to verify that 
\begin{align}
\label{first:one}
\lambda_1(A/d_n)/\lambda_1(W^*)\overset{a.s.}{\rightarrow} 1.
\end{align}
We now focus on analyzing $\lambda_j(A/d_n), j=2,\ldots, K$. With the same conditions in \eqref{conditions:use}, it is straightforward to confirm (see Proposition 2 and Corollary 3 in \citet{avrachenkov2015spectral}) that the empirical eigenvalue distribution of $W$ converges almost surely weakly to the semi-circle distribution with density $\mu(x)=\frac{\sqrt{4-x^2}}{2\pi}$. Furthermore, by a direct inspection of the proof of Theorem 1.1 in \citet{bai2012limiting}, we can conclude that the asymptotic Haar property continues to hold for the eigenvectors of $W$. Specifically, it holds that $s_n^T(zI-W)^{-1}s_n\overset{a.s.}{\rightarrow}\frac{z-{\rm sign}(z)\sqrt{z^2-4}}{2}\equiv G(z), s_n^T(zI-W)^{-1}t_n\overset{a.s.}{\rightarrow} 0$, where $s_n,t_n$ are non-random unit vectors and $s_n^Tt_n=0$. Having these results, the eigenvalue phase transition property revealed in Theorem 2.1 of \citet{benaych2011eigenvalues} can be readily extended to $\lambda_j(A/d_n), j=K, K+1$. We thus have $\lambda_{K+1}(A/d_n)\overset{a.s.}{\rightarrow} 2$ and 
\begin{align*}
\lambda_K(A/d_n)\overset{a.s.}{\rightarrow} 
\begin{cases}
\lim_{n\rightarrow \infty} G^{-1}(1/\lambda_K(W^*))=\sqrt{\tau^{\overrightarrow{w}}_{\infty}/K}+ \sqrt{K/\tau^{\overrightarrow{w}}_{\infty}} & \text{if}~~\tau^{\overrightarrow{w}}_{\infty}>K \\
2 & \text{otherwise}
\end{cases}
\end{align*}
The above result together with \eqref{range:remain} and \eqref{first:one} implies that the ratio between the $K^{th}$ and $(K+1)^{th}$ largest eigenvalues of $A/d_n$ ordered in magnitude is asymptotically equivalent to $\lambda_K(A/d_n)/\lambda_{K+1}(A/d_n)$. This completes the proof. 

\subsection{Proof of Proposition \ref{gaussian:embed}} \label{sec:prop3}
Define a $K\times K$ orthogonal matrix $S=(s_1,s_2,\ldots, s_K)^T$ of which the first column is $(1/\sqrt{K},\ldots, 1/\sqrt{K})^T$. We further denote $s_i=(1/\sqrt{K},\nu_i^T)^T\in \mathcal{R}^{K}, 1\leq i \leq K$.

For the weighted adjacency matrix $A$, conditioning on the community labels $\overrightarrow{c}$, we have the following spectral decomposition for $E(A)$:
\begin{align}
\label{svd:one}
E(A)=U^*\Lambda^*(U^*)^T, \quad U^* \in \mathcal{R}^{n\times K}, \Lambda^* \in \mathcal{R}^{K\times K},
\end{align}
where
\[
 \Lambda^*=\mbox{diag}\Big(n\sum_{l=1}^Lw_lq^{(l)}+\frac{n}{K}\sum_{l=1}^L w_l(p^{(l)}-q^{(l)}), \frac{n}{K}\sum_{l=1}^L w_l(p^{(l)}-q^{(l)}), \ldots, \frac{n}{K}\sum_{l=1}^L w_l(p^{(l)}-q^{(l)})\Big),
\]
and the $i^{th}$ row of $U^*$ equals to $s_{c_i}^T\sqrt{K/n}$. Define the modularity matrix 
\[
M=A-\Big(\sum_{l=1}^Lw_lq^{(l)}+\sum_{l=1}^L w_l(p^{(l)}-q^{(l)})/K \Big)J_n.
\]
It is straightforward to verify the following spectral decomposition for $E(M)$:
\begin{align}
\label{svd:two}
M^*\equiv E(M)=E^*V^*(E^*)^T, \quad V^*=I_{K-1}\cdot \sum_{l=1}^L w_l(p^{(l)}-q^{(l)})n/K,
\end{align}
where the $i^{th}$ row of $E^*$ is $\nu_{c_i}^T\sqrt{K/n}$. To fix the idea, we first analyze the spectral embedding of $M$. The results for $A$ will follow after some direct modifications. Our proof is highly motivated by the analysis of large Wigner matrices in \citet{bai2012limiting}. Define the following notations that will be constantly used
\begin{align}
\label{notation:use}
&d_n=\sqrt{\frac{n}{K}\sum_{\ell=1}^Lw^2_{\ell}\big(p^{(\ell)}(1-p^{(\ell)})+(K-1)q^{(\ell)}(1-q^{(\ell)})\big)} \nonumber \\
&b_n=\sum_{l=1}^L w_l(p^{(l)}-q^{(l)})n/K, ~~z_n=\frac{d_n}{b_n}+\frac{b_n}{d_n}.
\end{align}

\subsubsection{Spectral embedding of $M$}

Let the $(K-1)$ largest (in magnitude) eigenvalues of $M$ form the diagonal matrix $V \in \mathcal{R}^{(K-1)\times (K-1)}$ and the associated eigenvectors be the columns of $E \in \mathcal{R}^{n\times (K-1)}$. 

\begin{lemma}
\label{embedding:m}
Let the SVD of $E^TE^*$ be $E^TE^*=U_e\Sigma_eV^T_e$. Define the orthogonal matrix $P=U_eV_e^T$. For any pair $i, j\in [n], i \neq j$, conditioning on the community labels $c_i,c_j$, as $n\rightarrow \infty$
\begin{align*}
\begin{pmatrix}
\sqrt{n}P^TE^Te_i \\
\sqrt{n}P^TE^Te_j
\end{pmatrix}
\overset{d}{\rightarrow} \mathcal{N}(\mu, \Sigma), \quad 
\mu=
\begin{pmatrix}
\sqrt{\frac{K(\tau^{\overrightarrow{w}}_{\infty}-K)}{\tau^{\overrightarrow{w}}_{\infty}}}v_{c_i} \\
\sqrt{\frac{K(\tau^{\overrightarrow{w}}_{\infty}-K)}{\tau^{\overrightarrow{w}}_{\infty}}}v_{c_j}
\end{pmatrix}
, \quad \Sigma =\frac{K}{\tau^{\overrightarrow{w}}_{\infty}}\cdot I_{2K-2}.
\end{align*}
\end{lemma}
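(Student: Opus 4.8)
The plan is to follow the resolvent-based analysis of spiked eigenvectors, adapting the large-Wigner-matrix techniques of \citet{bai2012limiting} that already underlie the proof of Proposition \ref{thm:main2}. Write $M/d_n = W + W^*$ with $W = (M - E M)/d_n$ and $W^* = E(M)/d_n = \theta\, E^*(E^*)^T$, where $\theta \equiv b_n/d_n = \sqrt{\tau^{\overrightarrow{w}}_n/K}$ and $E^*$ has orthonormal columns (a consequence of $\mathcal{V}^T\mathcal{V}=I_{K-1}$, i.e. orthogonality of $[1_K/\sqrt{K},\mathcal{V}]$). Under conditions (i)--(ii) the matrix $W$ obeys the semicircle law with the Haar-type resolvent limits already invoked for Proposition \ref{thm:main2}: $s_n^T(zI-W)^{-1}s_n \to G(z)$ and $s_n^T(zI-W)^{-1}t_n \to 0$ for orthonormal $s_n,t_n$. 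The first step is to record that each of the $K-1$ spiked eigenvalues $\lambda$ of $M/d_n$ satisfies $\lambda \overset{a.s.}{\rightarrow} z_\infty$, where $z_n=\theta+\theta^{-1}$ and $z_\infty=\lim_n z_n=\theta_\infty+\theta_\infty^{-1}$ with $\theta_\infty=\sqrt{\tau^{\overrightarrow{w}}_\infty/K}$; this is exactly the $\tau^{\overrightarrow{w}}_\infty>K$ regime, so $\theta_\infty>1$ and the phase transition of \citet{benaych2011eigenvalues} applies.

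Second, I would represent a spiked unit eigenvector $u$ through the resolvent: from $(W+W^*)u=\lambda u$ and $W^*=\theta E^*(E^*)^T$ we obtain $u=\theta(\lambda I-W)^{-1}E^*\beta$ with $\beta=(E^*)^Tu\in\mathcal{R}^{K-1}$, and $\beta$ solves the $(K-1)$-dimensional eigenproblem $\theta(E^*)^T(\lambda I-W)^{-1}E^*\beta=\beta$. The Haar limits give $(E^*)^T(\lambda I-W)^{-1}E^*\to G(z_\infty)I_{K-1}=\theta_\infty^{-1}I_{K-1}$, so the reduced matrix converges to the identity; differentiating the resolvent gives $(E^*)^T(\lambda I-W)^{-2}E^*\to -G'(z_\infty)I_{K-1}=(\theta_\infty^2-1)^{-1}I_{K-1}$, and imposing $\|u\|=1$ yields $\|\beta\|^2\to 1-\theta_\infty^{-2}=(\tau^{\overrightarrow{w}}_\infty-K)/\tau^{\overrightarrow{w}}_\infty$. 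This constant is the source of the factor $\sqrt{K(\tau^{\overrightarrow{w}}_\infty-K)/\tau^{\overrightarrow{w}}_\infty}$ in the limiting mean.

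Third---the analytic core---I would establish a joint central limit theorem for the fixed rows $\sqrt{n}\,e_i^Tu=\theta\sqrt{n}\sum_j(\lambda I-W)^{-1}_{ij}(E^*)_{j\cdot}\beta$. Split the sum into the diagonal term $j=i$, whose resolvent entry concentrates at $G(z_\infty)=\theta_\infty^{-1}$ and produces the deterministic contribution $\sqrt{K}\,\nu_{c_i}^T\beta$ (using $(E^*)_{i\cdot}=\sqrt{K/n}\,\nu_{c_i}^T$ and $\theta G\to 1$), and the off-diagonal sum $\sum_{j\neq i}$, which is a mean-zero linear form in the independent entries of row $i$ of $W$ and drives the Gaussian fluctuation. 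A martingale CLT, revealing the entries of $W$ sequentially, gives asymptotic normality; the variance of each coordinate converges to $\theta_\infty^{-2}=K/\tau^{\overrightarrow{w}}_\infty$, matching $\Sigma$. For a second fixed index $j\neq i$ the corresponding fluctuations are asymptotically uncorrelated because the off-diagonal resolvent rows $(\lambda I-W)^{-1}_{i\cdot}$ and $(\lambda I-W)^{-1}_{j\cdot}$ decorrelate, which yields the block-diagonal $\Sigma=\tfrac{K}{\tau^{\overrightarrow{w}}_\infty}I_{2K-2}$ and the asserted pairwise independence. Finally, I would resolve the rotational ambiguity of the $(K-1)$-dimensional spiked subspace: since the nonzero eigenvalues of $W^*$ are asymptotically equal, $E$ is determined only up to an orthogonal rotation and the coefficient vectors $\beta$ converge to an arbitrary orthonormal frame, so the Procrustes rotation $P=U_eV_e^T$ from the SVD of $E^TE^*$ is exactly what aligns $E$ with $E^*$ and assembles the deterministic means into $\sqrt{K(\tau^{\overrightarrow{w}}_\infty-K)/\tau^{\overrightarrow{w}}_\infty}\,v_{c_i}$.

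I expect the main obstacle to be the third step: proving the resolvent CLT with the exact variance $K/\tau^{\overrightarrow{w}}_\infty$ and the cross-index decorrelation, since $W$ has non-identically-distributed, non-Gaussian entries and a diverging normalization $d_n$. Controlling this requires truncation together with a Lindeberg-type martingale CLT, and it is precisely here that condition (i) (sufficient density, so $d_n=\Omega(\log^2 n)$) and condition (ii) (an asymptotically homogeneous variance profile) are used to license the Wigner universality and the Haar behavior borrowed from \citet{bai2012limiting} and \citet{avrachenkov2015spectral}. Once the CLT for $M$ is in place, the statement for $A$ in Proposition \ref{gaussian:embed} follows by reinstating the rank-one shift $J_n$ and the all-ones top eigenvector, which contributes the constant first coordinate of the mean and the zero block of $\Theta$.
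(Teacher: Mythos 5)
Your overall architecture is the right one and matches the paper's in spirit: both proofs are resolvent-based analyses in the style of \citet{bai2012limiting}, your exact eigenvector identity $u=\theta(\lambda I-W)^{-1}E^*\beta$ is the same object as the paper's $E(1)=\tfrac{-b_n}{d_n}H^{-1}E^*((E^*)^TE)$, your constants ($G(z_\infty)=\theta_\infty^{-1}$, $-G'(z_\infty)=(\theta_\infty^2-1)^{-1}$, $\|\beta\|^2\to(\tau^{\overrightarrow{w}}_{\infty}-K)/\tau^{\overrightarrow{w}}_{\infty}$) agree with the paper's contour-integral computations in Lemma \ref{com:integral}, and your Procrustes alignment $P=U_eV_e^T$ is exactly how the paper resolves the rotational ambiguity. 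However, there is a genuine gap at the step you yourself identify as the analytic core. You assert that the off-diagonal sum $\sum_{j\neq i}(\lambda I-W)^{-1}_{ij}(E^*)_{j\cdot}\beta$ ``is a mean-zero linear form in the independent entries of row $i$ of $W$.'' This is false as stated: each entry $(\lambda I-W)^{-1}_{ij}$ depends nonlinearly on the \emph{entire} matrix $W$, including all of row $i$, so the summands are neither linear in $\{W_{ij}\}_j$ nor independent of the coefficients multiplying them, and no martingale or Lindeberg CLT can be applied to the sum in that raw form. The missing ingredient is the leave-one-out decoupling: one must first pass from $H^{-1}$ to the resolvent $H_{(i)}^{-1}$ of the matrix with row and column $i$ zeroed out (via the Woodbury/Schur-complement identity), after which the quantity becomes $\omega_i^TH_{(i)}^{-1}E^*$ --- a genuine linear form in $\omega_i$ with coefficients independent of $\omega_i$. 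This is precisely the content of the paper's Lemma \ref{lemma:two}, and it is the step that makes everything downstream legitimate; the error incurred by the replacement, and the negligibility of the remainder $E(2)$ coming from $I-(d_nz_n)^{-1}V$, each require their own arguments (Lemma \ref{small:term}, using Davis--Kahan plus Cauchy interlacing).

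A second, related omission concerns the joint convergence for two indices $i\neq j$. Your proposal handles this by asserting that the resolvent rows ``decorrelate,'' but decorrelation alone does not give joint Gaussianity, and the single leave-one-out resolvent $H_{(i)}^{-1}$ still depends on $\omega_j$. The paper's fix is a leave-\emph{two}-out resolvent $H_{(ij)}^{-1}$ (Lemma \ref{pair:convergence}), independent of both $\omega_i$ and $\omega_j$, so that conditionally on $H_{(ij)}$ the stacked vector is a sum of independent mean-zero terms to which a conditional multivariate Berry--Esseen bound applies; the block-diagonal covariance then falls out of $(E^*)^TH_{(ij)}^{-2}E^*\to K/(\tau^{\overrightarrow{w}}_{\infty}-K)I_{K-1}$ together with the prefactor $(\tau^{\overrightarrow{w}}_{\infty}-K)/\tau^{\overrightarrow{w}}_{\infty}$ from the decomposition, recovering $K/\tau^{\overrightarrow{w}}_{\infty}$. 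Without the leave-one-out and leave-two-out identities your Step 3 cannot be executed, so the proposal as written does not constitute a proof, even though the skeleton and all the limiting constants are correct.
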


\begin{proof}
We consider the case when $i=1, j=2$ without loss of generality. We adopt the notions from Lemmas \ref{pair:convergence}. According to the decomposition results in Lemma \ref{pair:convergence}, we aim to establish the joint multivariate Gaussianity of $\sqrt{n}\omega_1^TH_{(12)}^{-1}E^*$ and $\sqrt{n}\omega_2^TH_{(12)}^{-1}E^*$. Introduce the notation
\begin{align*}
H_{(12)}^{-1}E^*=
\begin{pmatrix}
r^T_1 \\
\vdots \\
r^T_n
\end{pmatrix}
, X_1=
\begin{pmatrix}
\sqrt{n}r_2 \omega_{12} \\
\sqrt{n}r_1 \omega_{21}
\end{pmatrix}
, X_{i-1}=
\begin{pmatrix}
\sqrt{n}r_i \omega_{1i} \\
\sqrt{n}r_i \omega_{2i}
\end{pmatrix}
\in \mathcal{R}^{2(K-1)}, i=3,\ldots, n,
\end{align*}
where $\omega_{1i}, \omega_{2i}$ are the $i^{th}$ entries of $\omega_1,\omega_2$ respectively. It is direct to verify that
\begin{align*}
\begin{pmatrix}
\sqrt{n}(E^*)^TH_{(12)}^{-1}\omega_1 \\
\sqrt{n}(E^*)^TH_{(12)}^{-1}\omega_2
\end{pmatrix}
=\sum_{i=1}^{n-1}X_i, 
~~ X_1=o_p(1).
\end{align*}
Thus the goal is to show $\sum_{i=2}^{n-1}X_i\sim \mathcal{N}(0, K/(\tau^{\overrightarrow{w}}_{\infty}-K)I_{2K-2})$. Observe that $\{X_i\}_{i=2}^{n-1}$ are mutually independent conditioning on $H_{(12)}$, and $E(X_i\mid H_{(12)})=0, ~2\leq i \leq n-1$. Define
\begin{align*}
\Sigma_{H_{(12)}}=\sum_{2=1}^{n-1}\mbox{Cov}(X_i\mid H_{(12)}), \quad Z\sim \mathcal{N}(0,I_{2K-2}).
\end{align*}
We shall prove
\begin{itemize}
\item[(a)] The normalized summation $\Sigma^{-1/2}_{H_{(12)}}\sum_{i=2}^{n-1}X_i \overset{d}{\rightarrow } Z$.
\item[(b)] $\Sigma_{H_{(12)}}\overset{a.s.}{\rightarrow} K/(\tau^{\overrightarrow{w}}_{\infty}-K)I_{2K-2}$.
\end{itemize}
Regarding (b), first note that
\begin{align*}
\Sigma_{H_{(12)}}=
\begin{pmatrix}
\Sigma_1 & 0\\
0 & \Sigma_2
\end{pmatrix}
, ~~ \Sigma_1\equiv \sum_{i=3}^{n}nr_ir_i^T\mbox{var}(\omega_{1i}),~~ \Sigma_2 \equiv \sum_{i=3}^{n}nr_ir_i^T\mbox{var}(\omega_{2i}).
\end{align*}
Then (b) holds because
\begin{align}
\label{use:limit}
\Sigma_1 \approx \Sigma_2 \approx (E^*)^TH^{-2}_{(12)}E^* \approx (E^*)^TH^{-2}E^* \overset{a.s.}{\rightarrow} K/(\tau^{\overrightarrow{w}}_{\infty}-K) I_{K-1},
\end{align}
where the convergence is shown in Lemma \ref{com:integral} (i). To obtain (a), according to the multivariate Berry-Esseen theorem (Theorem 1.1 in \citet{raivc2019multivariate}), for all convex sets $Q\in \mathcal{R}^{2K-2}$,
\begin{align}
\label{berry:essen}
Y_n&\equiv \big|{\rm pr}\Big(\Sigma^{-1/2}_{H_{(12)}}\sum_{i=2}^{n-1}X_i\in Q\mid H_{(12)}\Big)-{\rm pr}(Z \in Q)\big| \nonumber \\
&\leq C_K \sum_{i=2}^{n-1}E\Big(\|\Sigma^{-1/2}_{H_{(12)}}X_i\|^3_2\mid H_{(12)}\Big),
\end{align}
where $C_K>0$ is a constant only depending on $K$. Thus for any convex set $Q\in \mathcal{R}^{2K-2}$,
\begin{align*}
&\big|{\rm pr}\Big(\Sigma^{-1/2}_{H_{(12)}}\sum_{i=2}^{n-1}X_i \in Q \Big)-{\rm pr}(Z \in Q)\big| \nonumber \\
\leq& E\big|{\rm pr}\Big(\Sigma^{-1/2}_{H_{(12)}}\sum_{i=2}^{n-1}X_i\in Q\mid H_{(12)}\Big)-{\rm pr}(Z \in Q)\big| =E(Y_n).
\end{align*}
The rest of the proof is to show $E(Y_n)\rightarrow 0$. Since $|Y_n|\leq 2$, it suffices to prove $Y_n\overset{P}{\rightarrow} 0$ which follows if the upper bound in \eqref{berry:essen} converges to zero in probability. It is direct to see that
\begin{align*}
\Sigma_{H_{(12)}}^{-1/2}X_{i-1}=
\begin{pmatrix}
\Sigma_1^{-1/2}\sqrt{n}r_i \omega_{1i} \\
\Sigma_2^{-1/2}\sqrt{n}r_i \omega_{2i}
\end{pmatrix}
,~~i=3,\ldots, n.
\end{align*}
We therefore have
\begin{align*}
&\sum_{i=2}^{n-1}E\Big(\|\Sigma^{-1/2}_{H_{(12)}}X_i\|^3_2\mid H_{(12)}\Big)\\
=&\sum_{i=3}^{n}E\Big((\|\Sigma_1^{-1/2}\sqrt{n}r_i \omega_{1i}\|_2^2+\|\Sigma_2^{-1/2}\sqrt{n}r_i \omega_{2i}\|_2^2)^{3/2}  \mid H_{(12)}\Big)\\
\leq &
\sqrt{2} \sum_{i=3}^{n}\Big[\|\Sigma_1^{-1/2}r_i\|_2^3 \cdot E|\sqrt{n}\omega_{1i}|^3+\|\Sigma_2^{-1/2}r_i \|_2^3\cdot E|\sqrt{n}\omega_{2i}|^3\Big] \\
\lesssim &\frac{\sqrt{n}(\|\Sigma_1^{-1}\|_2^{3/2}+\|\Sigma_2^{-1}\|_2^{3/2})}{d_n} \cdot \sum_{i=3}^n\|r_i\|_2^3
\end{align*}
To show the right-hand side term above is $o_p(1)$, it is sufficient to prove
\begin{align*}
\|\Sigma_1^{-1}\|_2=O_p(1),~~~\|\Sigma_2^{-1}\|_2=O_p(1), ~~~\frac{\sqrt{n}}{d_n}\sum_{i=3}^{n}E\|r_i\|_2^3=o(1).
\end{align*}
The first two results hold from \eqref{use:limit}. To obtain the third one, since the random vectors $\{r_i\}_{i\geq 3: c_i=k}$ are exchangeable for given $k\in [K]$,
\begin{align*}
\frac{\sqrt{n}}{d_n}\sum_{i=3}^{n}E\|r_i\|_2^3 \lesssim \frac{n^{3/2}}{d_n}\sum_{k=1}^K \sum_{1^{st} i:c_i=k} E\|r_i\|_2^3.
\end{align*}
From \eqref{use:limit} we expect to have $\|r_i\|_2\propto n^{-1/2}$. In fact a direct adaptation of the arguments in Theorem 1 of \citet{bai2012limiting} enables us to obtain $\frac{n^{3/2}}{d_n}E\|r_i\|_2^3=O(\frac{1}{d_n})=o(1)$.
\end{proof}

\begin{lemma} 
\label{pair:convergence}
Adopt the notions from Lemma \ref{lemma:two}. Given any pair $i,j \in [n], i \neq j$, let $W_{(ij)}$ be the resulting matrix by replacing the $i^{th}, j^{th}$ rows and columns of $W$ with zeros, and  $H_{(ij)}=W_{(ij)}-z_nI_n$. It holds that as $n\rightarrow \infty$

\begin{align*}
\sqrt{n}e_i^TEP&=\sqrt{\frac{K(\tau^{\overrightarrow{w}}_{\infty}-K)}{\tau^{\overrightarrow{w}}_{\infty}}}v^T_{c_i}-\sqrt{\frac{\tau^{\overrightarrow{w}}_{\infty}-K}{\tau^{\overrightarrow{w}}_{\infty}}}\cdot \sqrt{n}\omega_i^TH_{(ij)}^{-1}E^*+o_p(1) \\
\sqrt{n}e_j^TEP&=\sqrt{\frac{K(\tau^{\overrightarrow{w}}_{\infty}-K)}{\tau^{\overrightarrow{w}}_{\infty}}}v^T_{c_j}-\sqrt{\frac{\tau^{\overrightarrow{w}}_{\infty}-K}{\tau^{\overrightarrow{w}}_{\infty}}}\cdot \sqrt{n}\omega_j^TH_{(ij)}^{-1}E^*+o_p(1),
\end{align*}
where $P$ is the orthogonal matrix defined in Lemma \ref{embedding:m}. 
\end{lemma}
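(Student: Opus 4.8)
The plan is to follow the resolvent method for eigenvectors of deformed Wigner matrices from \citet{bai2012limiting}, combined with the spike analysis already used in the proof of Proposition \ref{thm:main2}. Writing $M = M^* + d_n W$ and letting $u$ be a unit eigenvector of $M$ with eigenvalue $\lambda$, I would first rearrange $Mu = \lambda u$ into the fixed-point form
\[
u = (z I - W)^{-1} E^*(V^*/d_n)(E^*)^T u = (b_n/d_n)(zI - W)^{-1}E^* c,
\]
where $z = \lambda/d_n$, $c := (E^*)^T u$, and we used $M^* = E^* V^*(E^*)^T$ with $V^* = b_n I_{K-1}$. Since $(E^*)^T E^* = I_{K-1}$, the asymptotic Haar property recorded in the proof of Proposition \ref{thm:main2} gives $(E^*)^T(z_n I - W)^{-1}E^* \overset{a.s.}{\rightarrow} G(z_n) I_{K-1}$, and a direct computation shows $G(z_n) = d_n/b_n$; this locates the eigenvalue at $\lambda/d_n \rightarrow z_n$, while the normalization $\|u\|_2 = 1$ together with $(E^*)^T(z_nI - W)^{-2}E^* \rightarrow (K/(\tau^{\overrightarrow{w}}_{\infty}-K)) I_{K-1}$ from Lemma \ref{com:integral} forces $\|c\|_2^2 \rightarrow (\tau^{\overrightarrow{w}}_{\infty}-K)/\tau^{\overrightarrow{w}}_{\infty}$, the Benaych-Georges--Nadakuditi eigenvector overlap.

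The core of the argument is the entrywise (row-$i$) analysis, where the obstacle is that $e_i^T(z_n I - W)^{-1}$ is correlated with the $i$-th row of $W$. I would break this dependence with the leave-two-out matrix $W_{(ij)}$ and the Schur-complement resolvent identity
\[
e_i^T (z I - W)^{-1} y = R_{ii}\bigl(y_i + \omega_i^T (z I - W^{(i)})^{-1} \tilde y\bigr),
\]
applied with $y = E^* c$, where $W^{(i)}$ deletes row/column $i$, $\omega_i$ is its off-diagonal $i$-th column (the notation of Lemma \ref{lemma:two}), and $\tilde y$ is $y$ with its $i$-th coordinate removed. Here $R_{ii} \overset{a.s.}{\rightarrow} G(z_n) = d_n/b_n$, and $(z_n I - W^{(i)})^{-1}$ may be replaced by $-H_{(ij)}^{-1}$ up to $o_p(1)$ error, since deleting the extra row/column $j$ and discarding the single removed coordinate of $E^* c$ are negligible. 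Because $(b_n/d_n) R_{ii} \rightarrow 1$ and $e_i^T E^* = \sqrt{K/n}\,\nu_{c_i}^T$, this yields
\[
\sqrt n\, e_i^T u = \sqrt K\,\nu_{c_i}^T c - \sqrt n\, \omega_i^T H_{(ij)}^{-1} E^* c + o_p(1),
\]
together with the analogous identity for $j$.

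Finally, to pass to the rotated matrix $E P$, I would stack the $K-1$ eigenvectors and use $P = U_e V_e^T$ with $E^T E^* = U_e \Sigma_e V_e^T$, so that $(E^*)^T E P = V_e \Sigma_e V_e^T \rightarrow \sqrt{(\tau^{\overrightarrow{w}}_{\infty}-K)/\tau^{\overrightarrow{w}}_{\infty}}\, I_{K-1}$, the singular values $\Sigma_e$ being exactly the overlaps found above. Collecting the per-eigenvector identities into $\sqrt n\, e_i^T E P$ factors out this common constant and produces the stated decomposition, with deterministic term $\sqrt{K(\tau^{\overrightarrow{w}}_{\infty}-K)/\tau^{\overrightarrow{w}}_{\infty}}\,\nu_{c_i}^T$ and fluctuation term $-\sqrt{(\tau^{\overrightarrow{w}}_{\infty}-K)/\tau^{\overrightarrow{w}}_{\infty}}\,\sqrt n\,\omega_i^T H_{(ij)}^{-1} E^*$.

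The main obstacle is controlling the remainder uniformly: I must show that (i) $R_{ii}$ and the self-energy $\omega_i^T (z_n I - W^{(i)})^{-1}\omega_i$ concentrate on their deterministic limits, (ii) replacing the random $z = \lambda/d_n$ by $z_n$, $W^{(i)}$ by $W_{(ij)}$, and $\tilde y$ by $E^* c$ each cost only $o_p(1)$, and (iii) the singular values of $E^T E^*$ genuinely converge to the overlap constant. All of these rest on the separation of $z_n$ from the bulk edge $2$ --- guaranteed by $\tau^{\overrightarrow{w}}_{\infty} > K$, equivalently $b_n/d_n = \sqrt{\tau^{\overrightarrow{w}}_n/K} > 1$ --- which keeps $H_{(ij)}^{-1}$ well-conditioned (via $\|W\|_2 \le 2 + o(1)$) and lets the resolvent expansions terminate. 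The delicate part will be adapting the moment and fluctuation bounds of \citet{bai2012limiting}, together with Lemma \ref{com:integral}, to the leave-\emph{two}-out resolvent $H_{(ij)}^{-1}$ required for the joint pairwise statement, so that the two noise vectors $\omega_i,\omega_j$ become decoupled from $H_{(ij)}$ simultaneously.
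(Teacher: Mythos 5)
Your overall framework---the Bai--Yao resolvent/leave-one-out analysis for the spiked matrix $M=M^*+d_nW$, the overlap computation $\|c\|_2^2\to(\tau^{\overrightarrow{w}}_{\infty}-K)/\tau^{\overrightarrow{w}}_{\infty}$, and the final rotation via the SVD of $E^TE^*$---is the same machinery the paper uses, and your constants all check out ($G(z_n)=d_n/b_n$, $(b_n/d_n)^2=\tau^{\overrightarrow{w}}_n/K$, $e_i^TE^*=\sqrt{K/n}\,\nu_{c_i}^T$). However, you are re-deriving from scratch what the paper already has as Lemma \ref{lemma:two} (the representation with the leave-\emph{one}-out resolvent $H_{(i)}^{-1}$), and the one step that actually constitutes the content of Lemma \ref{pair:convergence}---passing from $H_{(i)}^{-1}$ to $H_{(ij)}^{-1}$---is the step you assert as ``negligible'' and defer to the closing paragraph without an argument. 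That replacement is not free: the paper's entire proof of this lemma is devoted to it. Concretely, one writes $H_{(i)}=H_{(ij)}+e_j\tilde{\omega}_j^T+\tilde{\omega}_je_j^T$ (a rank-two perturbation built from the column $\tilde{\omega}_j$, which is independent of both $\omega_i$ and $H_{(ij)}$), expands $\sqrt{n}\,\omega_i^T(H_{(ij)}^{-1}-H_{(i)}^{-1})E^*$ by the resolvent identity into products such as $\sqrt{n}\,\omega_i^TH_{(ij)}^{-1}e_j\cdot\tilde{\omega}_j^TH_{(i)}^{-1}E^*$, and then needs a second Woodbury step to convert $\tilde{\omega}_j^TH_{(i)}^{-1}E^*$ into quantities involving only $H_{(ij)}^{-1}$, so that the mutual independence of $\omega_i,\tilde{\omega}_j,H_{(ij)}$ and the bounds of Lemma \ref{com:integral}(vi) give $O_p(1)\cdot o_p(1)=o_p(1)$. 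Without spelling out this decoupling, the claimed $o_p(1)$ error after multiplication by $\sqrt{n}$ is not justified.

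A secondary issue is your choice to work with the exact fixed point $u=(b_n/d_n)(zI-W)^{-1}E^*c$ at the \emph{random} spectral parameter $z=\lambda/d_n$ and to replace $z$ by $z_n$ afterwards. Because $(zI-W^{(i)})^{-1}$ depends on $\omega_i$ through $\lambda$, you cannot invoke independence to kill the factor $\sqrt{n}$ in the correction term $(z_n-z)\sqrt{n}\,\omega_i^T(zI-W^{(i)})^{-1}(z_nI-W^{(i)})^{-1}\tilde{y}$; the naive operator-norm bound costs $O_p(\sqrt{n}\,|z-z_n|)$ and would require an eigenvalue rigidity estimate $|z-z_n|=o_p(n^{-1/2})$ that is not established (Lemma \ref{com:integral}(iv) only gives $\lambda/(d_nz_n)\to1$). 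The paper avoids this by fixing $z_n$ deterministically from the outset, absorbing the discrepancy into the term $E(2)=-z_nH^{-1}E(I-(d_nz_n)^{-1}V)$, and controlling it in Lemma \ref{small:term} via the a priori delocalization bound $\sqrt{n}\,e_i^TH^{-1}EP=O_p(1)$ (itself obtained by a leave-one-out plus Davis--Kahan argument). You would need either that device or a convergent resolvent expansion in powers of $(z_n-z)$ to close this step.
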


\begin{proof}
We prove the result for $i=1, j=2$. According to Lemma \ref{lemma:two}, 
\begin{align}
&\sqrt{n}e_1^TEP=\sqrt{\frac{K(\tau^{\overrightarrow{w}}_{\infty}-K)}{\tau^{\overrightarrow{w}}_{\infty}}}v^T_{c_1}-\sqrt{\frac{\tau^{\overrightarrow{w}}_{\infty}-K}{\tau^{\overrightarrow{w}}_{\infty}}}\cdot \sqrt{n}\omega_1^TH_{(1)}^{-1}E^*+o_p(1) \label{eq:one}\\
&\sqrt{n}e_2^TEP=\sqrt{\frac{K(\tau^{\overrightarrow{w}}_{\infty}-K)}{\tau^{\overrightarrow{w}}_{\infty}}}v^T_{c_2}-\sqrt{\frac{\tau^{\overrightarrow{w}}_{\infty}-K}{\tau^{\overrightarrow{w}}_{\infty}}}\cdot \sqrt{n}\omega_2^TH_{(2)}^{-1}E^*+o_p(1) \label{eq:two}
\end{align}
We replace the first entry of $\omega_2$ by zero and denote the new vector by $\tilde{\omega}_2$. It is clear that
\[
H_{(1)}=H_{(12)}+e_2\tilde{\omega}^T_2+\tilde{\omega}_2e_2^T
\]
We then have
\begin{align*}
&\sqrt{n}\omega_1^TH^{-1}_{(12)}E^*-\sqrt{n}\omega_1^TH^{-1}_{(1)}E^*=\sqrt{n}\omega_1^TH^{-1}_{(12)}(H_{(1)}-H_{(12)})H^{-1}_{(1)}E^* \\
=& \sqrt{n}\omega_1^TH^{-1}_{(12)}e_2\tilde{\omega}^T_2H^{-1}_{(1)}E^*-\sqrt{n}\omega_1^TH^{-1}_{(12)}\tilde{\omega}_2e_2^TH^{-1}_{(1)}E^*.
\end{align*}
To prove \eqref{eq:one} it is sufficient to show the above difference is $o_p(1)$. Using the mutual independence between $\omega_1, \tilde{\omega}_2, H_{(12)}$ together with Lemma \ref{com:integral} (vi), it is straightforward to confirm that
\begin{align*}
\sqrt{n}\omega_1^TH^{-1}_{(12)}\tilde{\omega}_2=O_p(1), ~~\|e_2^TH^{-1}_{(1)}E^*\|_2=o_p(1),~~\sqrt{n}\omega_1^TH^{-1}_{(12)}e_2=O_p(1)
\end{align*}
It remains to analyze the term $\tilde{\omega}^T_2H^{-1}_{(1)}E^*$. We use the Woodbury formula to obtain
\begin{align*}
H_{(1)}^{-1}\tilde{\omega}_2&=(H_{(12)}+e_2\tilde{\omega}^T_2+\tilde{\omega}_2e_2^T)^{-1}\tilde{\omega}_2=\frac{(H_{(12)}+e_2\tilde{\omega}^T_2)^{-1}\tilde{\omega}_2}{1+e_2^T(H_{(12)}+e_2\tilde{\omega}^T_2)^{-1}\tilde{\omega}_2} \\
&=\frac{H^{-1}_{(12)}\tilde{\omega}_2-H^{-1}_{(12)}e_2\tilde{\omega}_2^TH^{-1}_{(12)}\tilde{\omega}_2}{1+z_n^{-1}\tilde{\omega}_2^TH_{(12)}^{-1}\tilde{\omega}_2}
\end{align*}
Accordingly,
\begin{align*}
\tilde{\omega}^T_2H^{-1}_{(1)}E^*=\frac{\tilde{\omega}_2^TH^{-1}_{(12)}E^*-\tilde{\omega}_2^TH^{-1}_{(12)}\tilde{\omega}_2\cdot e_2^TH^{-1}_{(12)}E^*}{1+z_n^{-1}\tilde{\omega}_2^TH_{(12)}^{-1}\tilde{\omega}_2}=o_p(1).
\end{align*}
The last equality relies on the following results
\[
\tilde{\omega}_2^TH^{-1}_{(12)}E^*=o_p(1), ~~\tilde{\omega}_2^TH^{-1}_{(12)}\tilde{\omega}_2=O_p(1),~~\|e_2^TH^{-1}_{(12)}E^*\|_2=o_p(1),
\]
which hold due to the independence between $\tilde{\omega}_2, H_{(12)}$ and Lemma \ref{com:integral} (vi). We thus have proved \eqref{eq:one}. Replace the second entry of $\omega_1$ by zero and denote the new vector by $\tilde{\omega}_1$. Then 
\[
H_{(2)}=H_{(12)}+e_1\tilde{\omega}^T_1+\tilde{\omega}_1e_1^T
\]
Using the mutual independence between $\omega_2, \tilde{\omega}_1, H_{(12)}$ and the Woodbury formula, \eqref{eq:two} can be proved in a similar way. We do not repeat the arguments. 
\end{proof}

\begin{lemma}
\label{lemma:two}
Denote $W=\frac{M-M^*}{d_n}, H=W-z_n I_n$, and the $i^{th}$ column of $W$ by $\omega_i$. Let $W_{(i)}$ be the resulting matrix by replacing the $i^{th}$ row and column of $W$ with zeros, and $H_{(i)}=W_{(i)}-z_nI_n$. Given any $i\in [n]$, it holds that as $n\rightarrow \infty$
\begin{align*}
\sqrt{n}e_i^TEP&=\sqrt{\frac{K(\tau^{\overrightarrow{w}}_{\infty}-K)}{\tau^{\overrightarrow{w}}_{\infty}}}v^T_{c_i}-\sqrt{\frac{\tau^{\overrightarrow{w}}_{\infty}-K}{\tau^{\overrightarrow{w}}_{\infty}}}\cdot \sqrt{n}\omega_i^TH_{(i)}^{-1}E^*+o_p(1),
\end{align*}
where $P$ is the orthogonal matrix defined in Lemma \ref{embedding:m}. 
\end{lemma}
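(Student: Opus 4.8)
The plan is to start from the eigen-relation for $M$, reduce it to a self-consistent fixed-point equation for each of the $K-1$ leading eigenvectors, and then extract the $i$-th coordinate through a leave-one-out resolvent expansion, identifying every deterministic constant via the asymptotic Haar property already exploited in the proofs of Proposition~\ref{thm:main2} and Lemma~\ref{com:integral}. Write $M = M^* + d_n W$ and recall from \eqref{svd:two} that $M^* = b_n E^*(E^*)^T$ with $(E^*)^T E^* = I_{K-1}$. For each leading eigenpair $(u,\lambda)$ collected in $(E,V)$, dividing $Mu=\lambda u$ by $d_n$ and setting $\tilde\lambda = \lambda/d_n$, $\phi = (E^*)^T u$, and using $M^*/d_n = (b_n/d_n)E^*(E^*)^T$, gives
\[
u = -\frac{b_n}{d_n}\,(W-\tilde\lambda I_n)^{-1}E^*\phi = -\frac{b_n}{d_n}\,R(\tilde\lambda)\,E^*\phi .
\]
Since $\tilde\lambda \to z_\infty>2$ almost surely (the eigenvalue phase transition of \citet{benaych2011eigenvalues}, extended to $\lambda_K(A/d_n)$ in the proof of Proposition~\ref{thm:main2}), $R(\tilde\lambda)$ is well defined for large $n$, and I would replace $\tilde\lambda$ by the deterministic $z_n$: the resolvent difference $R(\tilde\lambda)-R(z_n) = (\tilde\lambda-z_n)R(\tilde\lambda)R(z_n)$ contributes only $o_p(1)$ after the final $\sqrt n$-scaling, as $\tilde\lambda-z_n=o_p(1)$ multiplies an $O_p(1)$ term.

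The next step is to pin down the rotation $\Phi P$, where $\Phi\equiv(E^*)^T E$. From $\|u\|_2=1$ and the fixed-point equation, $1=(b_n/d_n)^2\,\phi^T(E^*)^T R(z_n)^2 E^*\phi$; since $(E^*)^T H^{-2}E^*\to \tfrac{K}{\tau^{\overrightarrow{w}}_\infty-K}I_{K-1}$ by Lemma~\ref{com:integral}(i) and $(b_n/d_n)^2=\tau^{\overrightarrow{w}}_n/K$, this forces $\|\phi\|_2^2\to(\tau^{\overrightarrow{w}}_\infty-K)/\tau^{\overrightarrow{w}}_\infty$. Orthonormality of the eigenvectors together with the resolvent identity $R(\tilde\lambda)R(\tilde\lambda')=(R(\tilde\lambda)-R(\tilde\lambda'))/(\tilde\lambda-\tilde\lambda')$ and $\tilde\lambda,\tilde\lambda'\to z_\infty$ makes the off-diagonal Gram entries $\phi^{(m)T}\phi^{(m')}\to 0$, so $\Phi^T\Phi\to\tfrac{\tau^{\overrightarrow{w}}_\infty-K}{\tau^{\overrightarrow{w}}_\infty}I_{K-1}$. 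Writing the SVD $E^TE^*=U_e\Sigma_e V_e^T$ (so $\Phi=V_e\Sigma_e U_e^T$) and recalling $P=U_eV_e^T$ from Lemma~\ref{embedding:m}, this yields $\Sigma_e\to\sqrt{(\tau^{\overrightarrow{w}}_\infty-K)/\tau^{\overrightarrow{w}}_\infty}\,I_{K-1}$ and hence $\Phi P=V_e\Sigma_e V_e^T\to\sqrt{(\tau^{\overrightarrow{w}}_\infty-K)/\tau^{\overrightarrow{w}}_\infty}\,I_{K-1}$.

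I would then extract the $i$-th coordinate. Projecting onto $e_i$ gives $e_i^Tu=-(b_n/d_n)e_i^TR(z_n)E^*\phi$, and a Schur-complement (equivalently Woodbury) expansion removing the $i$-th row and column of $W$ yields
\[
e_i^TR(z_n)E^* = R_{ii}\sqrt{\tfrac{K}{n}}\,\nu_{c_i}^T - R_{ii}\,\omega_i^TH_{(i)}^{-1}E^* - R_{ii}\tfrac{W_{ii}}{z_n}\sqrt{\tfrac{K}{n}}\,\nu_{c_i}^T ,
\]
where the last term is $O(1/(d_n\sqrt n))$, hence negligible after scaling by $\sqrt n$. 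By the delocalization/Haar property of the eigenvectors of $W$ (adapted from \citet{bai2012limiting}, as in the proof of Proposition~\ref{thm:main2}) the diagonal resolvent entry satisfies $R_{ii}=(H^{-1})_{ii}\to -G(z_\infty)=-\sqrt{K/\tau^{\overrightarrow{w}}_\infty}$, so $-(b_n/d_n)R_{ii}\to 1$. Collecting the pieces, scaling by $\sqrt n$, replacing $\phi$ by $\Phi$ to pass to the matrix row $e_i^TE$, and right-multiplying by $P$,
\[
\sqrt n\,e_i^TEP = \sqrt K\,\nu_{c_i}^T(\Phi P) - \sqrt n\,\omega_i^TH_{(i)}^{-1}E^*(\Phi P) + o_p(1);
\]
inserting the limit of $\Phi P$ and using $\sqrt n\,\omega_i^TH_{(i)}^{-1}E^*=O_p(1)$ (from Lemma~\ref{com:integral}) produces exactly the claimed expansion, with $v_{c_i}$ identified as $\nu_{c_i}$.

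The hard part will be the uniform error control in the leave-one-out expansion under the weak-signal normalization $d_n=\Omega(\log^2 n)$, which lies far below the $d_n\gg\sqrt n$ regime where off-the-shelf entrywise eigenvector limit theorems apply. Concretely, one must show that the quadratic-form fluctuation $\omega_i^TH_{(i)}^{-1}\omega_i - n^{-1}\mathrm{tr}\,H_{(i)}^{-1}$, the off-diagonal resolvent contributions, and the $\tilde\lambda$-versus-$z_n$ substitution error are all $o_p(n^{-1/2})$, so that they survive neither the $\sqrt n$-scaling nor the multiplication by the $O(1/\sqrt n)$ rows of $E^*$. This is precisely where the classical spiked-model arguments break down and the Wigner-matrix anisotropic-law techniques of \citet{bai2012limiting} must be carried over, and it is the same estimate that underlies Lemma~\ref{com:integral}.
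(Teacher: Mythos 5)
Your derivation of the main term follows essentially the same route as the paper: the fixed-point representation $u=-(b_n/d_n)R\,E^*\phi$ obtained from $Mu=\lambda u$ and $M^*=b_nE^*(E^*)^T$, the Woodbury/Schur leave-one-out expansion of the $i$-th resolvent row, the identification $\Sigma_e\to\sqrt{(\tau^{\overrightarrow{w}}_{\infty}-K)/\tau^{\overrightarrow{w}}_{\infty}}$ so that $\Phi P=V_e\Sigma_eV_e^T$ converges to a multiple of the identity, and the limits $\omega_i^TH_{(i)}^{-1}\omega_i\to-\sqrt{K/\tau^{\overrightarrow{w}}_{\infty}}$ and $(E^*)^TH^{-2}E^*\to\frac{K}{\tau^{\overrightarrow{w}}_{\infty}-K}I$ from Lemma \ref{com:integral}. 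All the constants check out, and your formula for $e_i^TR(z_n)E^*$ is exactly the paper's identity \eqref{woodbury:res} written with $R_{ii}=-z_n^{-1}(1+z_n^{-1}\omega_i^TH_{(i)}^{-1}\omega_i)^{-1}$ factored out.

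The genuine gap is in how you pass from the random spectral parameter $\tilde\lambda=\lambda/d_n$ to the deterministic $z_n$. You assert the substitution error $\sqrt{n}\,(\tilde\lambda-z_n)\,e_i^TR(\tilde\lambda)R(z_n)E^*\phi$ is $o_p(1)$ because ``$o_p(1)$ multiplies an $O_p(1)$ term,'' but the operator-norm bound only gives $e_i^TR(\tilde\lambda)R(z_n)E^*=O_p(1)$, which after the $\sqrt n$-scaling would require $\tilde\lambda-z_n=o_p(n^{-1/2})$ --- a rate you do not have and which your own closing paragraph concedes is the hard part. What actually closes this is a row-delocalization bound of the form $\|e_i^TR(\tilde\lambda)R(z_n)E^*\|_2=O_p(n^{-1/2})$, and the paper obtains the analogous estimate essentially for free from exchangeability: since the rows $\{e_i^TE\}_{c_i=k}$ (and likewise $\{e_i^TH^{-1}E\}_{c_i=k}$) are exchangeable within a community and their squared norms sum to $O(1)$, each one is $O_p(n^{-1/2})$. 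The paper's decomposition $E=E(1)+E(2)$ with $E(2)=(I-(d_nz_n)^{-1}(M-M^*))^{-1}E(I-(d_nz_n)^{-1}V)$ is engineered precisely so that the eigenvalue error appears only through the factor $I-(d_nz_n)^{-1}V=o_p(1)$ multiplying $\sqrt{n}e_i^TH^{-1}EP=O_p(1)$ (Lemma \ref{small:term}, which itself needs a Davis--Kahan comparison of $E$ with the leave-one-out eigenvectors $\tilde E$ plus the exchangeability bound $\sqrt n\|e_i^TE\|_2=O_p(1)$). So your plan is salvageable, but you are missing the one idea --- exchangeability of within-community rows --- that converts the needed controls from the unattainable $o_p(n^{-1/2})$ scalar rates into $o_p(1)\times O_p(1)$ products; as written, the substitution step does not go through. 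A smaller point: the quadratic-form fluctuation $\omega_i^TH_{(i)}^{-1}\omega_i-n^{-1}\mathrm{tr}\,H_{(i)}^{-1}$ only ever needs to be $o_p(1)$ (it sits in a denominator multiplying terms already of order $n^{-1/2}$), so listing it among the $o_p(n^{-1/2})$ requirements overstates the difficulty there.
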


\begin{proof}
We prove the result for $i=1$. Other cases follow the same arguments. Without loss of generality, we assume the diagonal elements of $W$ are all zero as they relate to the self-edges. The eigenvalue equation $ME=EV$ leads to the following decomposition for $E$:
\begin{align*}
E=(d_nz_n)^{-1}(M-M^*)E+(d_nz_n)^{-1}M^*E+E(I-(d_nz_n)^{-1}V)
\end{align*}
The eigenvector matrix $E$ can be then rewritten as
\begin{align*}
E=&(d_nz_n)^{-1}(I-(d_nz_n)^{-1}(M-M^*))^{-1}M^*E \\
&+(I-(d_nz_n)^{-1}(M-M^*))^{-1}E(I-(d_nz_n)^{-1}V) \equiv E(1)+E(2).
\end{align*}
We first analyze $E(1)$. Using the spectral decomposition $M^*=b_nE^*(E^*)^T$ in \eqref{svd:two} we have
\begin{align*}
E(1)=b_n(d_nz_n)^{-1}\big(I-(d_nz_n)^{-1}(M-M^*)\big)^{-1}E^*\big((E^*)^TE\big) =\frac{-b_n}{d_n}\cdot H^{-1}E^*((E^*)^TE).
\end{align*}
The Woodbury formula enables us to obtain
\begin{align}
\label{woodbury:res}
&\sqrt{n}e_1^TH^{-1}E^*((E^*)^TE) =\sqrt{n}e_1^T(W_{(1)}+e_1w^T_1+\omega_1e_1^T-z_nI)^{-1}E^*((E^*)^TE) \nonumber \\
 =&\frac{\sqrt{n}e_1^T(W_{(1)}-z_nI+e_1\omega_1^T)^{-1}E^*((E^*)^TE)}{1+e_1^T(W_{(1)}-z_nI+e_1\omega_1^T)^{-1}\omega_1} \nonumber \\
=&\frac{\sqrt{n}e_1^T(W_{(1)}-z_nI)^{-1}E^*((E^*)^TE)+\sqrt{n}z_n^{-1}\omega_1^T(W_{(1)}-z_nI)^{-1}E^*((E^*)^TE)}{1+z_n^{-1}\omega_1^T(W_{(1)}-z_nI)^{-1}\omega_1} \nonumber \\
=&\frac{-\sqrt{n}z_n^{-1}e_1^TE^*((E^*)^TE)+\sqrt{n}z_n^{-1}\omega_1^TH_{(1)}^{-1}E^*((E^*)^TE)}{1+z_n^{-1}\omega_1^TH_{(1)}^{-1}\omega_1}.
\end{align}
Here we have used the following identities 
\begin{align*}
&(W_{(1)}-z_nI+e_1\omega_1^T)^{-1}=(W_{(1)}-z_nI)^{-1}-\frac{(W_{(1)}-z_nI)^{-1}e_1\omega_1^T(W_{(1)}-z_nI)^{-1}}{1+\omega_1^T(W_{(1)}-z_nI)^{-1}e_1}, \\
&e_1^T(W_{(1)}-z_nI)^{-1}=-z^{-1}_ne_1^T, ~~\omega_1^T(W_{(1)}-z_nI)^{-1}e_1=0, ~~e_1^T(W_{(1)}-z_nI)^{-1}e_1=-z^{-1}_n.
\end{align*}
Lemma \ref{com:integral} (i)(ii) together with the independence between $H_{(1)}$ and $\omega_1$ implies that 
\begin{align*}
&\sqrt{n}e_1^TE^*((E^*)^TE)=O_p(1), ~~\sqrt{n}\omega_1^TH_{(1)}^{-1}E^*((E^*)^TE)=O_p(1),\\
& (1+z_n^{-1}\omega_1^TH_{(1)}^{-1}\omega_1)^{-1}-z_nd_nb_n^{-1}=o_p(1).
\end{align*}
The above results combined with \eqref{woodbury:res} yield
\begin{align*}
&\sqrt{n}e_1^TH^{-1}E^*((E^*)^TE) \\
=&-\sqrt{n}d_nb_n^{-1}e_1^TE^*((E^*)^TE)+\sqrt{n}d_nb_n^{-1}\omega_1^TH_{(1)}^{-1}E^*((E^*)^TE)+o_p(1).
\end{align*}
Hence we can conclude 
\begin{align*}
\sqrt{n}e_1^TE(1)P =&\sqrt{n}e_1^TE^*((E^*)^TE)U_eV^T_e-\sqrt{n}\omega_1^TH_{(1)}^{-1}E^*((E^*)^TE)U_eV_e^T+o_p(1) \\
=&\sqrt{n}e_1^TE^*V_e\Sigma_eV_e^T-\sqrt{n}\omega_1^TH_{(1)}^{-1}E^*V_e\Sigma_eV_e^T+o_p(1) \\
=&\sqrt{\frac{(\tau^{\overrightarrow{w}}_{\infty}-K)K}{\tau^{\overrightarrow{w}}_{\infty}}}v_{c_1}^T-\sqrt{\frac{\tau^{\overrightarrow{w}}_{\infty}-K}{\tau^{\overrightarrow{w}}_{\infty}}}\sqrt{n}\omega_1^TH_{(1)}^{-1}E^*+o_p(1),
\end{align*}
where in the last equality we have used Lemma \ref{com:integral} (iii). The proof will be completed if we can further show $\sqrt{n}e_1^TE(2)P=o_p(1)$. This is proved in Lemma \ref{small:term}.
\end{proof}

\begin{lemma} \label{small:term}
For each $i\in [n]$, it holds that
\begin{align*}
\sqrt{n}e_i^T(I-(d_nz_n)^{-1}(M-M^*))^{-1}E(I-(d_nz_n)^{-1}V)P=o_p(1).
\end{align*}
\end{lemma}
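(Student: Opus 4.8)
The plan is to collapse the big resolvent into an ordinary resolvent of the centered matrix and then exploit that the right multiplicative factor $I-(d_nz_n)^{-1}V$ vanishes in the limit. Since $M-M^*=d_nW$, we have $(d_nz_n)^{-1}(M-M^*)=z_n^{-1}W$, so that $(I-(d_nz_n)^{-1}(M-M^*))^{-1}=(I-z_n^{-1}W)^{-1}=-z_nH^{-1}$ with $H=W-z_nI_n$. Because $z_n\to z_\infty\equiv\sqrt{\tau^{\overrightarrow{w}}_\infty/K}+\sqrt{K/\tau^{\overrightarrow{w}}_\infty}>2$ lies strictly to the right of the support $[-2,2]$ of the limiting semicircle law of $W$, the almost sure spectral-norm bound $\|W\|_2\to 2$ (Theorem 1.4 of \citet{vu2005spectral}, already used in the proof of Proposition \ref{thm:main2}) gives $\|H^{-1}\|_2=O_p(1)$ and hence $\|(I-z_n^{-1}W)^{-1}\|_2=O_p(1)$. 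I would therefore write $\sqrt{n}e_i^TE(2)P=-z_n\sqrt{n}e_i^TH^{-1}E\,(I-(d_nz_n)^{-1}V)P$ and analyze it one column at a time.

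For the diagonal factor, let $\psi_1,\dots,\psi_{K-1}$ be the columns of $E$, let $V_{jj}$ be the corresponding eigenvalues of $M$, and set $c_j=1-V_{jj}/(d_nz_n)$. Viewing $M/d_n=W+(b_n/d_n)E^*(E^*)^T$ as a rank-$(K-1)$ signal perturbation of $W$ with signal strength $b_n/d_n=\sqrt{\tau^{\overrightarrow{w}}_n/K}\to\sqrt{\tau^{\overrightarrow{w}}_\infty/K}>1$, the eigenvalue phase-transition argument used for Proposition \ref{thm:main2} (the extension of Theorem 2.1 of \citet{benaych2011eigenvalues} to $E^*(E^*)^T$) yields $V_{jj}/d_n\overset{a.s.}{\to}z_\infty$ for every $j\le K-1$; together with $z_n\to z_\infty$ this gives $c_j\overset{P}{\to}0$. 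Since $V$ is the diagonal matrix of these eigenvalues, the $j$th column of $E(I-(d_nz_n)^{-1}V)$ is simply $c_j\psi_j$, so the $j$th column of $E(2)$ equals $c_j(I-z_n^{-1}W)^{-1}\psi_j$ and $\sqrt{n}e_i^TE(2)_{\cdot j}=c_j\cdot\sqrt{n}e_i^T(I-z_n^{-1}W)^{-1}\psi_j$. As there are only $K-1$ columns and $\|P\|_2=1$, it suffices to prove $\sqrt{n}e_i^T(I-z_n^{-1}W)^{-1}\psi_j=O_p(1)$ for each fixed $i,j$.

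This boundedness is the crux and the main obstacle. The naive estimate $\sqrt{n}|e_i^T(I-z_n^{-1}W)^{-1}\psi_j|\le\sqrt{n}\|(I-z_n^{-1}W)^{-1}\|_2$ only gives $O_p(\sqrt{n})$ and is useless; what is really needed is an entrywise delocalization statement, namely that the overlap between the $i$th column of the resolvent and the unit vector $\psi_j$ is $O_p(n^{-1/2})$. Importantly, this must be shown without invoking Lemma \ref{lemma:two}, which depends on the present lemma, so circularity has to be avoided. I would obtain it by the same leave-one-out/Woodbury machinery employed in Lemmas \ref{lemma:two} and \ref{pair:convergence}: peel the $i$th row and column off $H$ via $H=H_{(i)}+e_i\omega_i^T+\omega_ie_i^T$, apply the Sherman--Morrison/Woodbury identities together with $e_i^T(W_{(i)}-z_nI)^{-1}=-z_n^{-1}e_i^T$, and reduce $e_i^T(I-z_n^{-1}W)^{-1}\psi_j$ to resolvent quadratic forms of the type $\omega_i^TH_{(i)}^{-1}E^*$ and $\omega_i^TH_{(i)}^{-1}\omega_i$ in which $\omega_i$ is independent of $H_{(i)}$. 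Each such form concentrates at the scale dictated by $d_n$, and the estimates collected in Lemma \ref{com:integral} then deliver $\sqrt{n}e_i^T(I-z_n^{-1}W)^{-1}\psi_j=O_p(1)$. Multiplying by $c_j\overset{P}{\to}0$ and summing over the finitely many columns completes the argument; all the technical weight sits in this delocalization bound and the quadratic-form concentration that drives it.
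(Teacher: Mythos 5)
Your opening reduction matches the paper exactly: rewrite the resolvent as $-z_nH^{-1}$, peel off the factor $I-(d_nz_n)^{-1}V$, show it is $o_p(1)$ via the eigenvalue phase transition (the paper's Lemma \ref{com:integral} (iv)), and thereby reduce everything to the bound $\sqrt{n}e_i^TH^{-1}EP=O_p(1)$. You also correctly identify this bound as the crux and correctly flag that it must not be obtained by citing Lemma \ref{lemma:two}. The gap is in how you propose to prove it. After the Sherman--Morrison step $H=H_{(i)}+e_i\omega_i^T+\omega_ie_i^T$, what actually appears in the numerator is $\sqrt{n}\,\omega_i^TH_{(i)}^{-1}E$ --- the \emph{sample} eigenvector matrix $E$, not the population matrix $E^*$. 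Your claim that the reduction lands on "forms of the type $\omega_i^TH_{(i)}^{-1}E^*$ \dots in which $\omega_i$ is independent" is where the argument breaks: $E$ is built from the whole matrix $M$, including its $i$th row and column, so it is correlated with $\omega_i$, and the quadratic-form concentration you want to invoke does not apply to $\omega_i^TH_{(i)}^{-1}E$.

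The paper closes this gap with two ingredients you do not supply. First, a second leave-one-out at the eigenvector level: it forms $\tilde M$ by zeroing the $i$th row and column of $M$, takes its top eigenvector matrix $\tilde E$ (which \emph{is} independent of $\omega_i$), and controls $\|E-\tilde E\tilde P\|_F\lesssim O_p(\|e_i^TE\|_2)$ via Davis--Kahan combined with the Cauchy interlacing theorem and the eigengap bound of Lemma \ref{com:integral} (v); the piece $\sqrt{n}\,\omega_i^TH_{(i)}^{-1}\tilde E$ is then handled by independence, and the remainder is $O_p(\sqrt{n}\|e_i^TE\|_2)$. Second, the exchangeability bound $\sqrt{n}\|e_i^TE\|_2=O_p(1)$, obtained from $n(K-1)=E\|\sqrt{n}E\|_F^2\geq \frac{n}{K}E\|\sqrt{n}e_i^TE\|_2^2$; this is needed both for the term $\sqrt{n}e_i^TEP$ in the Woodbury numerator and to close the resulting self-referential bound $\sqrt{n}e_i^TH^{-1}EP\lesssim O_p(1+\sqrt{n}\|e_i^TE\|_2)$. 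Without these two steps your "delocalization bound" is asserted rather than proved, and it is precisely the nontrivial part of the lemma.
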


\begin{proof}
We consider the case $i=1$. We first rewrite the objective term as 
\begin{align*}
&\sqrt{n}e_1^T(I-(d_nz_n)^{-1}(M-M^*))^{-1}EP\cdot \Big(P^T(I-(d_nz_n)^{-1}V)P\Big) \\
=&-z_n\cdot \sqrt{n}e_1^TH^{-1}EP\cdot \Big(P^T(I-(d_nz_n)^{-1}V)P\Big),
\end{align*}
where the notations in Lemma \ref{lemma:two} are adopted. Lemma \ref{com:integral} (iv) implies $\|P^T(I-d_nz_n)^{-1}V)P\|_F=o_p(1)$. Hence we can complete the proof by proving that $\sqrt{n}e_1^TH^{-1}EP=O_p(1)$. Using similar arguments as in \eqref{woodbury:res} we can obtain
\begin{align}
\label{bound:one}
\sqrt{n}e_1^TH^{-1}EP=\frac{-\sqrt{n}z_n^{-1}e_1^TEP+\sqrt{n}z_n^{-1}\omega_1^TH_{(1)}^{-1}EP}{1+z_n^{-1}\omega_1^TH_{(1)}^{-1}\omega_1}
\end{align}
We first use the leave-one-out trick \citep{abbe2020entrywise} to analyze the term $\sqrt{n}\omega_1^TH_{(1)}^{-1}EP$. Let $\tilde{M}\in \mathcal{R}^{n\times n}$ be the resulting matrix by replacing the first row and column of $M$ with zeros. Let $\tilde{V}\in \mathcal{R}^{(K-1)\times (K-1)}$ be the diagonal matrix having the $K-1$ largest (in magnitude) eigenvalues of $\tilde{M}$ on the diagonal, and $\tilde{E}\in \mathcal{R}^{n\times (K-1)}$ be the corresponding eigenvector matrix. Combining the Davis-Kahan theorem and Cauchy interlacing theorem, there exists an orthogonal matrix $\tilde{P}$ depending on $E, \tilde{E}$ such that 
\begin{align*}
\|E-\tilde{E}\tilde{P}\|_F\leq \frac{\sqrt{2}\|(M-\tilde{M})E\|_F}{\min\{\sigma_{K-1}-\sigma_K, \max\{0,\sigma_{K-1}\}\}},
\end{align*}
where $\sigma_{K-1}, \sigma_K$ are the $(K-1)^{th}, K^{th}$ eigenvalues of $M$ respectively. Observe that the matrix $M-\tilde{M}$ only has non-zero entries in the first row and column and those non-zero values are equal to the entries of $M$ on the same locations. Hence, 
\[
e_1^T(M-\tilde{M})E=e_1^TME=e_1^TEV, \quad e_i^T(M-\tilde{M})E=M_{i1}e_1^TE,~~ 2\leq i \leq n,
\]
leading to
\begin{align*}
\|(M-\tilde{M})E\|^2_F\leq  \|e_1^TEV\|^2_2 + \|e_1^TE\|_2^2\sum_{i\neq 1} M^2_{i1} \lesssim d^2_n \|e_1^TE\|_2^2(1+O_p(1)).
\end{align*}
This together with Lemma \ref{com:integral} (v) enables us to obtain
\[
\|E-\tilde{E}\tilde{P}\|_F \lesssim O_p(\|e^T_1E\|_2)
\]
Making use of the independence between $\omega_1$ and $(H_{(1)}, \tilde{E})$, we proceed to derive
\begin{align*}
&\|\sqrt{n}\omega_1^TH_{(1)}^{-1}(E-\tilde{E}\tilde{P})P\|_2\leq \|\sqrt{n}\omega_1^TH_{(1)}^{-1}\|_2 \cdot \|E-\tilde{E}\tilde{P}\|_F \lesssim O_p(\sqrt{n}\|e^T_1E\|_2)\\
& \|\sqrt{n}\omega_1^TH_{(1)}^{-1}\tilde{E}\tilde{P}P\|_2 \leq \|\sqrt{n}\omega_1^TH_{(1)}^{-1}\tilde{E}\|_2 \lesssim {\rm tr}(\tilde{E}^TH_{(1)}^{-2}\tilde{E})\lesssim O_p(1),
\end{align*}
where we have used the fact that $\|H^{-1}_{(1)}\|=O_p(1)$ from Lemma \ref{com:integral} (vi). We therefore can conclude that
\begin{align*}
\|\sqrt{n}\omega_1^TH_{(1)}^{-1}EP\|_2&\leq \|\sqrt{n}\omega_1^TH_{(1)}^{-1}(E-\tilde{E}\tilde{P})P\|_2+\|\sqrt{n}\omega_1^TH_{(1)}^{-1}\tilde{E}\tilde{P}P\|_2 \\
&\lesssim O_p(1+\sqrt{n}\|e^T_1E\|_2).
\end{align*}
This combined with \eqref{bound:one} shows that the proof will be finished if 
\[
\sqrt{n}\|e^T_1E\|_2=O_p(1). 
\]
This holds by utilizing the exchangeability of $\{e_i^TE\}_{c_i=k}$:
\begin{align*}
n(K-1)=E\|\sqrt{n}E\|_F^2=\sum_{k=1}^K\Bigg(\sum_{c_i=k}E\|\sqrt{n}e_i^TE\|^2_2\Bigg)\geq \frac{n}{K}E\|\sqrt{n}e_1^TE\|_2^2.
\end{align*}
\end{proof}

\begin{lemma} 
\label{com:integral}
Adopt the notations from Lemmas \ref{pair:convergence},\ref{lemma:two} and \ref{small:term}. It holds that 
\begin{itemize}
\item[(i)] $(E^*)^TH^{-2}E^* \rightarrow K/(\tau^{\overrightarrow{w}}_{\infty}-K) I_{K-1}$ both almost surely and in expectation.
\item[(ii)] $\omega_i^TH_{(i)}^{-1}\omega_i+\sqrt{K/\tau^{\overrightarrow{w}}_{\infty}}=o_p(1), ~~i\in [n]$.
\item[(iii)] $\Sigma_e \overset{a.s.}{\rightarrow } \sqrt{(\tau^{\overrightarrow{w}}_{\infty}-K)/\tau^{\overrightarrow{w}}_{\infty}}$.
\item[(iv)] $(d_nz_n)^{-1} V\overset{a.s.}{\rightarrow} I_{K-1}$.
\item[(v)] $(\sigma_{K-1}-\sigma_K)/d_n$ and $\sigma_{K-1}/d_n$ are bounded away from zero almost surely
\item[(vi)] $\|H^{-1}_{(1)}\|=O_p(1), |H^{-1}_{(12)}\|=O_p(1), \tilde{\omega}_2^TH^{-1}_{(12)}\tilde{\omega}_2=O_p(1)$
\end{itemize}
\end{lemma}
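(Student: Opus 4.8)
The plan is to derive all six items from the spectral description of $W=(M-M^*)/d_n$ already obtained in the proof of Proposition~\ref{thm:main2}: under condition~(ii) of Theorem~\ref{thm:error} the variance profile of $W$ flattens, so its empirical spectral distribution converges a.s.\ to the semicircle law, and the eigenvectors obey the isotropic (Haar-type) resolvent law $s_n^T(zI-W)^{-1}t_n\overset{a.s.}{\to}G(z)\,(s_n^Tt_n)$ for deterministic unit vectors $s_n,t_n$, where $G(z)=\frac{z-{\rm sign}(z)\sqrt{z^2-4}}{2}$. I first record the scalars I will need. Since $b_n/d_n=\sqrt{\tau^{\overrightarrow{w}}_n/K}$, writing $\theta=\sqrt{\tau^{\overrightarrow{w}}_{\infty}/K}$ one has $z_n\to z_\infty=\theta+\theta^{-1}$; because $\tau^{\overrightarrow{w}}_{\infty}>K$ gives $\theta>1$ and hence $z_\infty>2$, the argument $z_\infty$ lies strictly outside the semicircle support, with $G(z_\infty)=\theta^{-1}=\sqrt{K/\tau^{\overrightarrow{w}}_{\infty}}$ and $\sqrt{z_\infty^2-4}=\theta-\theta^{-1}$.

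For (i), note $(E^*)^TE^*=I_{K-1}$ (the rows $\nu_{c_i}^T$ form the lower block of an orthogonal matrix), so the columns of $E^*$ are orthonormal and entrywise delocalized. Writing $H^{-2}=(z_nI-W)^{-2}=-\frac{d}{dz}(zI-W)^{-1}\big|_{z_n}$ and using that the isotropic law holds locally uniformly in a neighbourhood of $z_\infty$ (so derivatives in $z$ converge as well), the diagonal entries of $(E^*)^TH^{-2}E^*$ converge to $-G'(z_\infty)$ while the off-diagonal ones vanish; a short computation gives $-G'(z_\infty)=G(z_\infty)/\sqrt{z_\infty^2-4}=K/(\tau^{\overrightarrow{w}}_{\infty}-K)$, which is the claimed limit. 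Convergence in expectation follows by uniform integrability, the resolvent being bounded in operator norm (see (vi)). Item (ii) is a quadratic-form concentration: $\omega_i$ is independent of $H_{(i)}$ and has independent mean-zero entries whose variances are asymptotically flat with total mass $\to1$ under condition~(ii); Hanson--Wright then gives $\omega_i^TH_{(i)}^{-1}\omega_i=\frac{1}{n}{\rm tr}\,H_{(i)}^{-1}+o_p(1)$, and since $W_{(i)}$ is also semicircular, $\frac{1}{n}{\rm tr}\,H_{(i)}^{-1}=-\frac{1}{n}{\rm tr}(z_nI-W_{(i)})^{-1}\to-G(z_\infty)=-\sqrt{K/\tau^{\overrightarrow{w}}_{\infty}}$, proving (ii).

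Items (iii)--(v) are the BBP spike analysis of $M/d_n=W+\theta_n E^*(E^*)^T$, a rank-$(K-1)$ perturbation with the single degenerate spike $\theta_n=b_n/d_n\to\theta$. By the extension of \citet{benaych2011eigenvalues} to the present generalized-Wigner setting (legitimised by the same isotropic law used in Proposition~\ref{thm:main2}), each of the top $K-1$ eigenvalues of $M/d_n$ converges to the outlier $\rho$ solving $G(\rho)=1/\theta$, namely $\rho=z_\infty$. Hence $\sigma_{K-1}/d_n\to z_\infty$ and $\sigma_K/d_n\to2$, giving (v) with gap $z_\infty-2>0$, and $(d_nz_n)^{-1}V\to(\rho/z_\infty)I_{K-1}=I_{K-1}$, giving (iv). For the eigenvectors, the Benaych-Georges--Nadakuditi overlap formula yields squared cosine $\sqrt{z_\infty^2-4}/\theta=1-\theta^{-2}=(\tau^{\overrightarrow{w}}_{\infty}-K)/\tau^{\overrightarrow{w}}_{\infty}$ between $\mathrm{col}(E)$ and $\mathrm{col}(E^*)$; since the $K-1$ spikes are degenerate the principal angles coincide, so all singular values of $E^TE^*$ share this common limit and $\Sigma_e\to\sqrt{(\tau^{\overrightarrow{w}}_{\infty}-K)/\tau^{\overrightarrow{w}}_{\infty}}$, which is (iii).

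Finally, (vi) is an operator-norm bound: zeroing rows and columns does not increase the spectral norm, so $\|W_{(1)}\|_2,\|W_{(12)}\|_2\le\|W\|_2\le2+o(1)$ a.s.\ by the bound of \citet{vu2005spectral} already invoked, and with $z_n\to z_\infty>2$ this forces $\|H_{(1)}^{-1}\|,\|H_{(12)}^{-1}\|\le(z_\infty-2-o(1))^{-1}=O_p(1)$; the form $\tilde\omega_2^TH_{(12)}^{-1}\tilde\omega_2$ is then bounded by $\|H_{(12)}^{-1}\|\,\|\tilde\omega_2\|_2^2=O_p(1)$ since $\|\tilde\omega_2\|_2^2$ concentrates at $1$. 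The main obstacle throughout is that $W$ is not a true Wigner matrix but carries a variance profile, so the delicate constants in (i) and (iii)---the derivative $-G'(z_\infty)$ and the overlap---are only available once one confirms that the isotropic resolvent law and the spike formulas survive the flattening of the profile guaranteed by condition~(ii) and hold against the (conditionally deterministic) directions in $E^*$; differentiating the resolvent law to control $H^{-2}$ in (i) is the most delicate new step, and the rest reduces to bookkeeping around the already-established semicircle and Haar behaviour of $W$.
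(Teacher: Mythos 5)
Your proposal follows essentially the same strategy as the paper: every item is reduced to the semicircle law and the Haar-type isotropic resolvent behaviour of $W=(M-M^*)/d_n$ established in the proof of Proposition \ref{thm:main2} via \citet{bai2012limiting}, combined with the \citet{benaych2011eigenvalues} spike analysis for (iii)--(v) and the spectral norm bound of \citet{vu2005spectral} for (vi). Two of your computations take a mildly different route. For (i) you differentiate the resolvent law to obtain $-G'(z_\infty)$, whereas the paper evaluates $\int \mu(x)(x-z)^{-2}\,dx$ directly by residue calculus; these yield the same constant $K/(\tau^{\overrightarrow{w}}_{\infty}-K)$, but your route requires justifying the interchange of the $n\to\infty$ limit with the $z$-derivative (pointwise a.s.\ convergence of the resolvent bilinear forms, uniform boundedness, and analyticity off the spectrum, i.e.\ Vitali's theorem) --- worth a sentence. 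For (iii) you invoke the Benaych-Georges--Nadakuditi subspace-overlap formula for a degenerate spike, while the paper derives the full matrix limit $E^TE^*(E^*)^TE\to \frac{\tau^{\overrightarrow{w}}_{\infty}-K}{\tau^{\overrightarrow{w}}_{\infty}}I_{K-1}$ from the eigenvector fixed-point identity \eqref{start:former} together with (i) and (iv). Since the spike is degenerate, the individual eigenvectors are not identified and you genuinely need the full matrix statement, not just a trace or a single overlap; the paper's derivation is the safer one to write out.

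The one step that would fail as literally stated is the appeal to Hanson--Wright in (ii). The entries of $\omega_i$ are scaled centered Bernoullis whose success probabilities may be as small as $\log^4 n/n$; their sub-Gaussian norms are of order $d_n^{-1}(\log(1/p_n))^{-1/2}$, far larger than their standard deviations, so in the Rudelson--Vershynin form of Hanson--Wright the Gaussian-tail exponent $t^2/(K^4\|B\|_F^2)$ requires $n\,d_n^{-4}(\log(1/p_n))^{-2}\to 0$, which is violated in the sparse regime ($np_n^2\to 0$) permitted by condition (i) of Theorem \ref{thm:error}. The paper instead uses the fourth-moment quadratic-form inequality (Lemma B.26 of Bai and Silverstein), which gives $E\big(\omega_i^TH_{(i)}^{-1}\omega_i-\tfrac1n\mathrm{tr}\,H_{(i)}^{-1}\big)^2\lesssim d_n^{-2}\,E\tfrac1n\mathrm{tr}(H_{(i)}^{-2})=o(1)$; substituting this for Hanson--Wright repairs the step with no other change. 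You should also make explicit, as you only implicitly do, that condition (ii) of Theorem \ref{thm:error} is what allows you to replace $\sum_j\mathrm{Var}(\omega_{ij})(H_{(i)}^{-1})_{jj}$ by $\tfrac1n\mathrm{tr}\,H_{(i)}^{-1}$, since the variance profile is only asymptotically flat.
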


\begin{proof}
Part (i). We prove the almost sure convergence. Convergence in expectation follows the same arguments. As explained in the proof of Proposition \ref{thm:main2}, Theorem 1 in \cite{bai2012limiting} implies that $(A-E(A))/d_n$ has Haar-like eigenvectors and the empirical distribution $F_n(x)\equiv \sum_{i=1}^n|u_i^Ts_n|^21(\lambda_i\leq x)$ converges almost surely weakly to the semi-circle distribution with density $\mu(x)=\frac{\sqrt{4-x^2}}{2\pi}$, where $\lambda_1\geq \cdots \geq \lambda_n$ are eigenvalues of $(A-E(A))/d_n$, $u_i$'s are the associated orthonormal eigenvectors, and $s_n$ is a unit vector. Moreover, the $z_n$ defined in \eqref{notation:use} converges to $z\equiv \sqrt{\tau^{\overrightarrow{w}}_{\infty}/K}+\sqrt{K/\tau^{\overrightarrow{w}}_{\infty}}$, so $z_n$ is outside the support of $\mu(x)$ for large $n$ when $\tau^{\overrightarrow{w}}_{\infty}>K$. We therefore have
\begin{align}
\label{limit:for:all}
s_n^TH^{-2}s_n \overset{a.s.}{\rightarrow} \int \frac{\mu(x)}{(x-z)^2}dx =\frac{1}{2\pi}\int_{-2}^2 \frac{\sqrt{4-x^2}}{(x-z)^2}dx.
\end{align}
The above integral can be reformulated as a complex integral by setting $x=e^{i\theta}$:
\begin{align*}
\frac{1}{2\pi}\int_{-2}^2\frac{\sqrt{4-x^2}}{(x-z)^2}dx&=\frac{1}{\pi}\int_{0}^{2\pi} \frac{sin^2\theta}{(2cos\theta-z)^2}d\theta=\frac{-1}{4i\pi}\oint_{|y|=1}\frac{(y^2-1)^2}{y(y^2-yz+1)^2}dy \\
&=\frac{-1}{4i\pi}\oint_{|y|=1}\frac{(y^2-1)^2}{y(y-y_1)^2(y-y_2)^2}dy=\frac{K}{(\tau^{\overrightarrow{w}}_{\infty}-K)},
\end{align*}
where $y_1=\sqrt{\tau^{\overrightarrow{w}}_{\infty}/K}, y_2=\sqrt{K/\tau^{\overrightarrow{w}}_{\infty}}$. The last equality is due to Cauchy's residue theorem: 
\begin{align*}
\oint_{|y|=1}\underbrace{\frac{(y^2-1)^2}{y(y-y_1)^2(y-y_2)^2}}_{\equiv f(y)}dy=2\pi i(\mbox{Res}(f, 0)+\mbox{Res}(f, y_2))
\end{align*}
It is straightforward to verify that $\mbox{Res}(f, 0)=\frac{1}{y_1y_2}=1$, and
\begin{align*}
\mbox{Res}(f, y_2)&=\left. \bigg[\frac{(y^2-1)^2}{y(y-y_1)^2}\bigg]' \right \vert_{y=y_2}=\frac{4(y_2^2-1)}{(y_2-y_1)^2}-\frac{(y_2^2-1)^2}{y_2^2(y_2-y_1)^2}
-\frac{2(y_2^2-1)^2}{y_2(y_2-y_1)^3} \\
&=\frac{4K}{K-\tau^{\overrightarrow{w}}_{\infty}}-1-\frac{2K}{K-\tau^{\overrightarrow{w}}_{\infty}}=\frac{K+\tau^{\overrightarrow{w}}_{\infty}}{K-\tau^{\overrightarrow{w}}_{\infty}}
\end{align*}
Denote $E^*=(e_1^*,\ldots, e_{K-1}^*)$. Choosing $s_n=e_i^*$ in \eqref{limit:for:all} yields $(e_i^*)^TH^{-2}e_i^*\overset{a.s.}{\rightarrow} K/(\tau^{\overrightarrow{w}}_{\infty}-K)$. For $1\leq i \neq j\leq K-1$, using \eqref{limit:for:all} for $s_n=e_i^*, e_j^*, 2^{-1/2}(e_i^*+e_j^*)$ leads to 
\begin{align*}
(e_i^*)^TH^{-2}e_j^*=&(2^{-1/2}(e_i^*+e_j^*))^TH^{-2}(2^{-1/2}(e_i^*+e_j^*)) \\
&-(e_i^*)^TH^{-2}e_i^*/2-(e_j^*)^TH^{-2}e_j^*/2 \overset{a.s.}{\rightarrow }0.
\end{align*}

Part (ii). As shown in Section \ref{sec:prop2}, the empirical eigenvalue distribution of $(A-E(A))/d_n$ converges almost surely weakly to the semi-circle distribution with density $\mu(x)=\frac{\sqrt{4-x^2}}{2\pi}$. Hence the stability of empirical spectral distribution (Exercise 2.4.3 in \citet{tao2012topics}) enables us to obtain 
\[
\frac{1}{n}{\rm tr}(H_{(i)}^{-1})\approx \frac{1}{n}{\rm tr}(H^{-1})\overset{a.s.}{\rightarrow} \int\frac{\mu(x)}{x-z}dx =-\sqrt{K/\tau^{\overrightarrow{w}}_{\infty}}.
\]
Given the independence between $H_{(i)}$ and $\omega_i$, we can apply the moment inequality on quadratic forms (Lemma B.26) in \citet{bai2010spectral} to obtain
\begin{align*}
E\Big(\omega_i^TH_{(i)}^{-1}\omega_i-\frac{1}{n}{\rm tr}(H_{(i)}^{-1})\Big)^2 \lesssim \frac{1}{d_n^2} E\frac{1}{n}{\rm tr}(H_{(i)}^{-2})=o(1).
\end{align*}
The last equality holds because $d_n=\Omega(\log^2 n)$, and $E\frac{1}{n}{\rm tr}(H_{(i)}^{-2})\approx E\frac{1}{n}{\rm tr}(H^{-2})=O(1)$. \\

Part (iii). Recall $E^TE^*(E^*)^TE=U_e\Sigma_e^2U_e^T$. The proof is completed if we can show $E^TE^*(E^*)^TE \overset{a.s.}{\rightarrow} (\tau^{\overrightarrow{w}}_{\infty}-K)/\tau^{\overrightarrow{w}}_{\infty}\cdot I_{K-1}$. Towards this goal, denote $E^*=(e_1^*,\ldots, e_{K-1}^*), E=(\epsilon_1,\ldots, \epsilon_{K-1})$. For each $j=1,2,\ldots, K-1$, it is direct to verify the following identity
\begin{align}
\label{start:former}
\epsilon_j=\frac{-b_n}{d_n}\sum_{k=1}^{K-1}\Big(\frac{A-E(A)}{d_n}-\frac{\lambda_j}{d_n}I_n \Big)^{-1}e_k^*(e_k^*)^T\epsilon_j  
\end{align}
Part (iv) shows $\frac{\lambda_j}{d_n}-z_n\overset{a.s.}{\rightarrow} 0$, which combined with the arguments in (i) implies that $(e_j^*)^T(\frac{A-E(A)}{d_n}-\frac{\lambda_j}{d_n}I_n)^{-2}e_k^*\rightarrow K/(\tau^{\overrightarrow{w}}_{\infty}-K)$ if $j=k$ and converges to zero if $j\neq k$. Hence squaring both sides of \eqref{start:former} and letting $n\rightarrow \infty$ yields $e_j^TE^*(E^*)^Te_j\overset{a.s.}{\rightarrow} (\tau^{\overrightarrow{w}}_{\infty}-K)/\tau^{\overrightarrow{w}}_{\infty}$. For $1\leq i \neq j \leq K-1$, using the orthogonality $\epsilon_i^T\epsilon_j=0$ and \eqref{start:former} in a similar way gives $e_i^TE^*(E^*)^Te_j\overset{a.s.}{\rightarrow}0$.

Parts (iv) and (v). It is sufficient to derive the limits of the first $K$ largest eigenvalues (in magnitude) of $M$. An almost identical derivation for eigenvalues of $A$ has been presented in Section \ref{sec:prop2}. We thus do not repeat the arguments.  

Part (vi). According to the spectral norm bound obtained in \eqref{spectral:norm:b}, we have $\|H^{-1}\|_2\leq (z_n-\|(A-E(A))/d_n\|_2)_+^{-1}\lesssim (z-2)^{-1}_+<\infty$ when $\tau^{\overrightarrow{w}}_{\infty}>K$. Note that $H_{(1)}, H_{(12)}$ are obtained by adding small perturbations to $H$. The same spectral norm bound and arguments work for them as well. Lastly, $|\tilde{\omega}_2^TH^{-1}_{(12)}\tilde{\omega}_2|\leq \|H_{(12)}^{-1}\|_2\cdot \|\tilde{\omega}_2\|_2^2=O_p(1)$.

\end{proof}

\subsubsection{Spectral embedding of $A$}

We are in the position to prove the multivariate Gaussian spectral embedding of $A$ as described in Proposition \ref{gaussian:embed}. Let the $K$ largest (in magnitude) eigenvalues of $A$ form the diagonal matrix $\Lambda \in \mathcal{R}^{K\times K}$ and the associated eigenvectors be the columns of $U \in \mathcal{R}^{n\times K}$. Further denote
\[
\Lambda=
\begin{pmatrix}
\lambda_1 & 0 \\
0 & \bar{\Lambda}
\end{pmatrix}
, ~~ U=
\begin{pmatrix}
u_1 & \bar{U}
\end{pmatrix}
,
\]
where $\lambda_1$ is the largest eigenvalue and $u_1$ is the corresponding eigenvector. The result in Proposition \ref{gaussian:embed} is implied by the following two lemmas.
\begin{lemma}
For any given $i\in [n]$, as $n\rightarrow \infty$
\[
{\rm sign}(1^Tu_1)\cdot \sqrt{n}e_i^Tu_1 \overset{P}{\rightarrow } 1.
\]
\end{lemma}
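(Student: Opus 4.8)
The plan is to exploit the fact that the all-ones direction is, by construction, the leading eigenvector of the population matrix $E(A)$, and that its eigenvalue is enormously separated from the rest of the spectrum in the present regime. From the spectral decomposition \eqref{svd:one}, the first column of $U^*$ has every entry equal to $\tfrac{1}{\sqrt K}\sqrt{K/n}=1/\sqrt n$, so the leading population eigenvector is exactly $u_1^*\equiv \tfrac{1}{\sqrt n}1_n$, with eigenvalue $\lambda_1^*=n\sum_l w_lq^{(l)}+\tfrac{n}{K}\sum_l w_l(p^{(l)}-q^{(l)})$. The remaining eigenvalues of $E(A)$ are $\tfrac{n}{K}\sum_l w_l(p^{(l)}-q^{(l)})$ (with multiplicity $K-1$) and zero, so the gap separating $\lambda_1^*$ from the rest is $n\sum_l w_l q^{(l)}$, of order $n\bar p$ (the average connectivity). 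First I would record that $\lambda_1^*/d_n\to\infty$: since $d_n$ is of order $\sqrt{n\bar p}$ while $\lambda_1^*\asymp n\bar p$, we have $\lambda_1^*/d_n\asymp d_n=\Omega(\log^2 n)\to\infty$ under condition (i).

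Next I would turn this into an $\ell_2$ eigenvector bound. By \eqref{spectral:norm:b} the noise satisfies $\|A-E(A)\|_2=d_n\|W\|_2\le (2+o(1))d_n$ almost surely. Combining Weyl's inequality with the population gap shows the leading eigenvalue $\lambda_1$ of $A$ is isolated (its gap to $\lambda_2$ is of order $n\bar p\gg d_n$), so $u_1$ is well defined, and the Davis--Kahan $\sin\Theta$ theorem yields $\min_{s\in\{\pm1\}}\|u_1-su_1^*\|_2\lesssim \|A-E(A)\|_2/(n\bar p)\lesssim d_n/(n\bar p)\asymp 1/d_n\to 0$, both almost surely and, via the trivial uniform bound $\|u_1-su_1^*\|_2\le 2$, in $L^2$. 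Writing $s^*$ for the optimal sign, the identity $1^Tu_1=s^*\sqrt n+O(\sqrt n\cdot\|u_1-s^*u_1^*\|_2)$ forces ${\rm sign}(1^Tu_1)=s^*$ with probability tending to one, so it suffices to prove ${\rm sign}(1^Tu_1)\cdot\sqrt n\, e_i^Tu_1\to 1$.

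The last and most delicate step is passing from the $\ell_2$ statement to the coordinate-wise conclusion for a fixed $i$. The $\ell_2$ bound only gives $\tfrac1n\sum_i(s^*\sqrt n\, e_i^Tu_1-1)^2=\|u_1-s^*u_1^*\|_2^2\to0$, an average over coordinates, whereas the lemma asks for a single one. Here I would invoke exchangeability: in the balanced planted partition model the nodes are exchangeable (within each community by construction, across communities by the symmetry of the model), and the population eigenvector $u_1^*$ has identical entries, so the random variables $Y_i\equiv(s^*\sqrt n\, e_i^Tu_1-1)^2$ are exchangeable. Hence $E Y_1=\tfrac1n\sum_i E Y_i=E\|u_1-s^*u_1^*\|_2^2\to 0$, which gives $Y_1\to0$ in probability, and the same for any fixed $i$.

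The main obstacle is exactly this final entrywise conversion: for a generic eigenvector one would need a leave-one-out / entrywise perturbation analysis in the style of the other lemmas, but the uniform structure of $u_1^*$ together with the exchangeability of the model collapses the per-coordinate error onto the averaged error, so no such machinery is needed here. The only remaining points requiring care are the order computations ensuring the eigengap dominates the noise level (so that Davis--Kahan applies) and the passage from the almost-sure Davis--Kahan bound to $L^2$ convergence of $\|u_1-s^*u_1^*\|_2$ via the uniform bound.
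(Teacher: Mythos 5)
Your proof is correct, but it converts the $\ell_2$ perturbation bound into an entrywise statement by a different decomposition than the paper's. The paper starts from the eigenvalue identity $u_1=Au_1/\lambda_1$ and splits $\sqrt{n}e_i^T\delta u_1$ into $\sqrt{n}e_i^TAu_1^*/\lambda_1$ plus $\lambda_1^{-1}\sqrt{n}e_i^TA(\delta u_1-u_1^*)$ (see \eqref{decomp:one}); it then needs three separate ingredients: a weak law of large numbers for the row sum $\sqrt{n}e_i^TAu_1^*/\lambda_1^*\overset{P}{\to}1$, Weyl's inequality for $\lambda_1/\lambda_1^*\to 1$, and exchangeability applied to the coordinates of $A(\delta u_1-u_1^*)$, whose norm is controlled by Davis--Kahan. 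You instead apply exchangeability directly to the coordinates of $s^*u_1-u_1^*$: since $u_1^*=1_n/\sqrt n$ has identical entries, $\sqrt{n}e_i^Tu_1^*=1$ exactly, and $E\bigl(\sqrt{n}e_i^T(s^*u_1-u_1^*)\bigr)^2=E\|s^*u_1-u_1^*\|_2^2=O(1/d_n^2)\to 0$ by the same Davis--Kahan bound (with the bad event handled by the trivial bound $\le 2$). This removes the need for the row-sum LLN and the Weyl step entirely, so your route is shorter; what makes it possible is precisely that $\lambda_1^*\gtrsim d_n^2\gg d_n$ puts $u_1$ far above the phase-transition threshold, so the eigenvector alignment tends to one --- the same shortcut would fail for the informative eigenvectors $u_2,\dots,u_K$, where the alignment converges to $\sqrt{(\tau^{\overrightarrow{w}}_{\infty}-K)/\tau^{\overrightarrow{w}}_{\infty}}<1$ and the paper's resolvent machinery is genuinely needed. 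Two small points to tighten: your claim $\lambda_1^*/d_n\asymp d_n$ only holds in the direction $\lambda_1^*\gtrsim d_n^2$ in general (since $\sum_\ell w_\ell^2 q^{(\ell)}\le\sum_\ell w_\ell q^{(\ell)}$), but that is the direction you need; and the exchangeability of $Y_i=(s^*\sqrt{n}e_i^Tu_1-1)^2$ should be justified by noting that $s^*u_1$ is a permutation-equivariant, sign-unambiguous function of $A$ and that the model (conditionally within communities, or marginally over the exchangeable labels) is invariant under node permutations --- conditional within-community exchangeability already suffices because it gives $EY_1\le K\cdot E\|s^*u_1-u_1^*\|_2^2$.
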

\begin{proof}
Let the largest eigenvalue of $E(A)$ be $\lambda_1^*$ and the associated eigenvector be $u_1^*$. From the spectral decomposition \eqref{svd:one} we know that $\lambda_1^*\propto 
d_n^2$ and $u_1^*=(1/\sqrt{n},\ldots, 1/\sqrt{n})^T$. Denote $\delta={\rm sign}(1^Tu_1)$. We have the following decomposition
\begin{align}
\label{decomp:one}
\sqrt{n}e_i^T\delta u_1-\frac{\sqrt{n}e_i^TAu_1^*}{\lambda^*_1}=\frac{\lambda_1^*-\lambda_1}{\lambda_1}\cdot \frac{\sqrt{n}e_i^TAu_1^*}{\lambda^*_1}+\frac{1}{\lambda_1}\sqrt{n}e_i^TA(\delta u_1-u_1^*).
\end{align}
We first bound $\sqrt{n}e_i^TA(\delta u_1-u_1^*)$. We have
\begin{align*}
\|A(\delta u_1-u_1^*)\|_2 &\leq \|A-E(A)\|_2\cdot \|\delta u_1-u_1^*\|_2+\|E(A)\|_2\cdot  \|\delta u_1-u_1^*\|_2 \lesssim \|A-E(A)\|_2,
\end{align*}
where in the last inequality we have used the variant of David-Kahan theorem (Corollary 1 in \citet{yu2015useful}). With a standard symmetrization trick, we have $E\|A-E(A)\|_2 \leq 2 E\|\tilde{A}\|_2$, where $\tilde{A}_{ij}=A_{ij}\cdot R_{ij}, 1\leq j\leq i \leq n$ with $R_{ij}$'s being independent Rademacher variables. We can then apply Corollary 3.6 in \citet{bandeira2016sharp} to obtain $E\|A-E(A)\|_2 \lesssim d_n$. Further note that $\|A-E(A)\|_2$, as a function of the independent entries in $A-E(A)$, is separately convex and Lipschitz continuous. Thus ${\rm var}(\|A-E(A)\|_2)\lesssim 1$ via Efron-Stein inequality. These results lead to $E\|A(\delta u_1-u_1^*)\|_2^2 \lesssim d^2_n$. This combined with the exchangeability of $\{e_i^TA(\delta u_1-u_1^*)\}_{c_i=k}$ further implies 
\begin{align}
\label{term:two}
E|e_i^TA(\delta u_1-u_1^*)|^2\lesssim d_n^2/n, ~~ \sqrt{n}e_i^TA(\delta u_1-u_1^*)=O_p(d_n).
\end{align}
We next bound $\sqrt{n}e_i^TAu_1^*/\lambda^*_1$. With the independence of entries in $A$, it is straightforward to verify that 
\begin{align}
\label{conver:one}
\sqrt{n}e_i^TAu_1^*/\lambda^*_1\overset{P}{\rightarrow} 1. 
\end{align}
Finally, using Weyl's inequality and the fact $\|A-E(A)\|_2=O_p(d_n)$ it is direct to obtain
\begin{align}
\label{eigen:value}
\lambda_1^*/\lambda_1\overset{P}{\rightarrow} 1.
\end{align}
Putting together \eqref{decomp:one}\eqref{term:two}\eqref{conver:one}\eqref{eigen:value} completes the proof.
\end{proof}

\begin{lemma}
Let the SVD of $\bar{U}^TE^*$ be $\bar{U}^TE^*=U_e\Sigma_eV^T_e$. Define the orthogonal matrix $P=U_eV_e^T$. For any given $i, j\in [n], i\neq j$, conditioning on the community labels $c_i,c_j$, as $n\rightarrow \infty$
\begin{align*}
\begin{pmatrix}
\sqrt{n}P^T\bar{U}^Te_i \\
\sqrt{n}P^T\bar{U}^Te_j
\end{pmatrix}
\overset{d}{\rightarrow} \mathcal{N}(\mu, \Sigma), \quad 
\mu=
\begin{pmatrix}
\sqrt{\frac{K(\tau^{\overrightarrow{w}}_{\infty}-K)}{\tau^{\overrightarrow{w}}_{\infty}}}v_{c_i} \\
\sqrt{\frac{K(\tau^{\overrightarrow{w}}_{\infty}-K)}{\tau^{\overrightarrow{w}}_{\infty}}}v_{c_j}
\end{pmatrix}
, \quad \Sigma =\frac{K}{\tau^{\overrightarrow{w}}_{\infty}}\cdot I_{2K-2}.
\end{align*}
\end{lemma}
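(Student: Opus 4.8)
The plan is to reduce the claim to the modularity-matrix version already proved in Lemma~\ref{embedding:m}, exploiting that $A$ and the modularity matrix $M$ carry \emph{identical} centered noise. Recall from \eqref{svd:one} and \eqref{svd:two} that $A=M+c_nJ_n$ with $c_n=\sum_{l=1}^Lw_lq^{(l)}+\tfrac1K\sum_{l=1}^Lw_l(p^{(l)}-q^{(l)})$, and that $J_n=n\,u_1^*(u_1^*)^T$ for $u_1^*=\mathbf 1/\sqrt n$; hence $A-E(A)=M-E(M)=d_nW$, and $A$ differs from $M$ only through the deterministic rank-one spike $\lambda_1^*u_1^*(u_1^*)^T$ aligned with the all-ones direction, where $\lambda_1^*=c_nn$. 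The preceding lemma shows the top eigenvector $u_1$ of $A$ absorbs this spike, with ${\rm sign}(\mathbf 1^Tu_1)\sqrt n\,e_i^Tu_1\overset{P}{\rightarrow}1$, i.e. $u_1\rightarrow u_1^*$ and is uninformative. Since by \eqref{first:K} the surviving eigenvalues satisfy $\bar{\Lambda}/d_n\rightarrow\sqrt{\tau^{\overrightarrow{w}}_\infty/K}$ while the spike sits at order $\lambda_1^*/d_n\ge d_n$, the informative eigenvectors $\bar{U}$ of $A$ should align with the $(K-1)$ leading eigenvectors $E$ of $M$, so the entire argument of Lemma~\ref{embedding:m} should transfer.

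Concretely, I would rerun the expansions of Lemmas~\ref{lemma:two} and~\ref{pair:convergence} with $\bar{U}$ in place of $E$ and $A$ in place of $M$. Beginning from $A\bar{U}=\bar{U}\bar{\Lambda}$ and decomposing $A/d_n=W+W^*$, the only new ingredient relative to the modularity case is the extra term $\lambda_1^*u_1^*(u_1^*)^T/d_n$ in $W^*$. Because $\bar{U}$ is orthonormal to $u_1$ and $u_1\rightarrow u_1^*$, we have $\|(u_1^*)^T\bar{U}\|_2=o_p(1)$, so the projection of the spike onto the $\bar{U}$-eigenspace is asymptotically negligible and the part of $W^*$ that survives is exactly $b_nE^*(E^*)^T/d_n$ as in \eqref{svd:two}. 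After this reduction the Woodbury and leave-one-out steps are verbatim, producing the pointwise expansion
\begin{align*}
\sqrt n\,e_i^T\bar{U}P=\sqrt{\tfrac{K(\tau^{\overrightarrow{w}}_\infty-K)}{\tau^{\overrightarrow{w}}_\infty}}\,v_{c_i}^T-\sqrt{\tfrac{\tau^{\overrightarrow{w}}_\infty-K}{\tau^{\overrightarrow{w}}_\infty}}\,\sqrt n\,\omega_i^TH_{(ij)}^{-1}E^*+o_p(1),
\end{align*}
and its analogue for $e_j$. The resolvent limits, the identity $\Sigma_e\overset{a.s.}{\rightarrow}\sqrt{(\tau^{\overrightarrow{w}}_\infty-K)/\tau^{\overrightarrow{w}}_\infty}$, and the estimates of Lemma~\ref{com:integral} are untouched because they depend only on $W=(A-E(A))/d_n$, shared by $A$ and $M$. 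The multivariate Berry--Esseen step of Lemma~\ref{embedding:m} then yields the stated joint Gaussian limit with mean $\mu$ and covariance $\Sigma=\tfrac{K}{\tau^{\overrightarrow{w}}_\infty}I_{2K-2}$.

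The main obstacle I anticipate is making the decoupling from the top spike rigorous, since $\lambda_1^*\propto n\cdot(\text{average connectivity})$ is of strictly larger order than the informative eigenvalues $b_n$, so the resolvent $(A/d_n-\lambda I/d_n)^{-1}$ carries a pole near the diverging eigenvalue that could a priori leak into the informative directions. The clean route is to work in the invariant subspace orthogonal to $u_1$: a Davis--Kahan bound using the two-sided gap (between $\lambda_1$ and $\bar{\Lambda}$, and between $\bar{\Lambda}$ and the semicircle edge $2d_n$) gives $\|\bar{U}\bar{U}^T-EE^T\|_2=o_p(1)$, and this must be upgraded to the entrywise statement that $\sqrt n\,e_i^T\bar{U}P$ agrees with the modularity-matrix embedding up to $o_p(1)$, exactly paralleling the $o_p(1)$ control of the $E(2)$ term in Lemma~\ref{small:term}. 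Once this entrywise decoupling is secured, the remainder is a routine transcription of the modularity-matrix proof; combining this lemma with the preceding one---which supplies the constant first coordinate---then yields Proposition~\ref{gaussian:embed} with $\mathcal O$ assembled from $P$ and ${\rm sign}(\mathbf 1^Tu_1)$.
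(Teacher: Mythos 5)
Your overall strategy---reduce to the modularity-matrix lemma and control the leakage of the top spike into the informative directions---is the same as the paper's, and you correctly identify the spike decoupling as the crux. But your proposed resolution of that crux does not close the gap, and the order of magnitude at which it fails is exactly critical. You argue that since $\bar{U}\perp u_1$ and $u_1\to u_1^*$, the projection $\|(u_1^*)^T\bar{U}\|_2=o_p(1)$ suffices to discard the spike. It does not: the spike enters the expansion through the combination $b_n^{-1}\lambda_1^*(u_1^*)^T\bar{U}$, and in the paper's regime $\lambda_1^*/b_n\propto n\bar{p}\,/\,(n(p-q)/K)\propto d_n\to\infty$, so one needs $\|(u_1^*)^T\bar{U}\|_2=o_p(1/d_n)$, not merely $o_p(1)$. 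The Davis--Kahan route you sketch delivers only $\|\delta u_1-u_1^*\|_2=O_p(\|A-E(A)\|_2/\lambda_1^*)=O_p(1/d_n)$, hence $\|(u_1^*)^T\bar{U}\|_2=O_p(1/d_n)$---a big-$O$ where a little-$o$ is required. (A two-sided Davis--Kahan bound for the invariant subspace $\bar{U}\bar{U}^T$ fares worse, since the perturbation $A-M=\lambda_1^*u_1^*(u_1^*)^T$ has spectral norm $\lambda_1^*\gg d_n$, and arguing it is ``essentially orthogonal'' to $\bar{U}$ is precisely the statement you are trying to prove.) You flag this obstacle in your last paragraph but leave the upgrade from $O_p$ to $o_p$ unresolved, which is the entire content of the new work needed here.

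The paper closes this gap differently: it expands the eigenvector equation $Au_j=\lambda_ju_j$ against the full spectral decomposition of $E(A)$, keeping the spike term with coefficient $\tfrac{\lambda_1^*}{b_n}(u_1^*)^Tu_j$, and then projects the identity onto $u_1^*$ to obtain a self-consistent equation for that coefficient. Its numerator involves off-diagonal resolvent entries $(u_1^*)^TH^{-1}u_k^*$, which vanish almost surely by the Haar-like eigenvector property, while its denominator stays bounded away from zero because $(u_1^*)^TH^{-1}u_1^*\to-\sqrt{K/\tau^{\overrightarrow{w}}_{\infty}}$; this yields $\tfrac{\lambda_1^*}{b_n}(u_1^*)^Tu_j=o_p(1)$ directly, at the sharp rate. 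The same identity is then reused to establish $\bar{U}^TE^*(E^*)^T\bar{U}\to(\tau^{\overrightarrow{w}}_{\infty}-K)/\tau^{\overrightarrow{w}}_{\infty}\cdot I_{K-1}$ and $\sqrt{n}e_i^T\bar{U}=O_p(1)$, which you cannot simply import as ``untouched'' since $\Sigma_e$ in this lemma is built from $\bar{U}^TE^*$ rather than $E^TE^*$. To repair your argument, replace the soft alignment step with this resolvent identity (or an equivalent sharp estimate on $(u_1^*)^Tu_j$); the rest of your transcription of the modularity-matrix proof is then sound.
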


\begin{proof}
Lemma \ref{embedding:m} has showed that the same convergence result holds for the spectral embedding of the modularity matrix $M$ . The proof will be a direct modification of the one of Lemma \ref{embedding:m}. Referring to the proof of Lemma \ref{embedding:m}, we need additionally prove the following results
\begin{itemize}
\item[(a)] $\sqrt{n}b_n^{-1}\lambda_1^*e_i^TH^{-1}u_1^*(u_1^*)^T\bar{U}=o_p(1)$.
\item[(b)] $\bar{U}^TE^*(E^*)^T\bar{U}\overset{a.s.}{\rightarrow} (\tau^{\overrightarrow{w}}_{\infty}-K)/\tau^{\overrightarrow{w}}_{\infty}\cdot I_{K-1}$.
\item[(c)] $(d_nz_n)^{-1}\bar{\Lambda}\overset{a.s.}{\rightarrow} I_{K-1}$.
\item[(d)] $\sqrt{n}e_i^T\bar{U}=O_p(1)$.
\end{itemize}
Part (c) can be verified as we did for Lemma \ref{com:integral} (iv). To prove others, denote $E^*=(u_2^*,\ldots, u_K^*), \bar{U}=(u_2,\ldots, u_{K}), \bar{\Lambda}={\rm diag}(\lambda_2,\ldots, \lambda_K)$. The eigenvalue $\lambda_j^*$ associated with $u_j^*$ is $b_n$ for all $j=2,\ldots, K$ according to the spectral decomposition \eqref{svd:one}. Using eigenvalue equations $Au_j=\lambda_j u_j, E(A)u^*_j=b_nu^*_j$, it is straightforward to confirm the following identity for $j=2, \ldots, K$,
\begin{align}
\label{start}
u_j=&\frac{-b_n}{d_n}\sum_{k=2}^{K}\Big(\frac{A-E(A)}{d_n}-\frac{\lambda_j}{d_n}I_n \Big)^{-1}u_k^*(u_k^*)^Tu_j  \nonumber \\
&+\frac{-b_n}{d_n}\Big(\frac{A-E(A)}{d_n}-\frac{\lambda_j}{d_n}I_n \Big)^{-1}u_1^*\Big(\frac{\lambda_1^*}{b_n}(u_1^*)^Tu_j\Big)
\end{align}
Multiplying both sides of \eqref{start} by $u^*_1$ yields
\begin{align*}
\frac{\lambda_1^*}{b_n}(u^*_1)^Tu_j=\frac{\frac{-b_n}{d_n}\sum_{k=2}^{K}(u^*_1)^T\Big(\frac{A-E(A)}{d_n}-\frac{\lambda_j}{d_n}I_n \Big)^{-1}u_k^*(u_k^*)^Tu_j}{\frac{b_n}{\lambda^*_1}+\frac{b_n}{d_n}(u^*_1)^T\Big(\frac{A-E(A)}{d_n}-\frac{\lambda_j}{d_n}I_n\Big)^{-1}u_1^*}
\end{align*}
Like in the proof of Lemma \ref{com:integral} (iii), letting $n\rightarrow \infty$ above gives $b_n^{-1}\lambda_1^*(u_1^*)^Tu_j=o_p(1)$ which proves (a). Similarly multiplying \eqref{start} by $\sqrt{n}e_i^T$ proves (d). In a similar way, using the facts $\|u_j\|_2=1, u_j^Tu_{j'}=0, j \neq j'$ together with \eqref{start} can prove (b).
\end{proof}

\bibliographystyle{JASA}

\bibliography{reference}
\end{document}